\documentclass[a4paper]{article}
\usepackage[margin=1.3in]{geometry} 

\usepackage[round]{natbib}
\bibliographystyle{abbrvnat}
\usepackage[title]{appendix}

\usepackage{amsmath, mathtools}
\usepackage{centernot}
\usepackage{amssymb}
\usepackage{graphicx,subfigure}
\usepackage{amsthm}
\usepackage{dsfont}
\usepackage{bbm}
\usepackage{tabu}
\usepackage{paralist}
\usepackage{enumitem}
\usepackage{physics}
\usepackage{caption}
\usepackage{accents}
\usepackage[ruled,vlined]{algorithm2e}
\SetKwComment{Comment}{$\triangleright$\ }{}
\usepackage[english]{babel}
\usepackage[utf8x]{inputenc}
\usepackage{amsfonts}
\usepackage{booktabs}
\usepackage{tabu}
\usepackage[T1]{fontenc}
\usepackage{mathrsfs}
\usepackage{stmaryrd}
\usepackage{tikz}
\usetikzlibrary{backgrounds}
\pgfdeclarelayer{bg1}
\pgfdeclarelayer{bg2}
\pgfsetlayers{bg2, bg1, main}
\usepackage{xcolor}
\usetikzlibrary{arrows.meta}

\setlength{\marginparwidth}{2cm}
\usepackage[colorinlistoftodos]{todonotes}
\usepackage[colorlinks=true, allcolors=blue]{hyperref}

\newtheorem{theorem}{Theorem}

\newtheorem{lemma}{Lemma}
\newtheorem{assumption}{Assumption}

\newtheorem{example}{Example}
\newtheorem{proposition}{Proposition}

\newtheorem{definition}{Definition}
\newtheorem{setting}{Setting}

\theoremstyle{definition}

\newtheorem{remark}{Remark}

\newcounter{subassumption}[asu]

\makeatletter
\renewcommand{\p@subassumption}{\theasu}%
\makeatother

\newcommand\Item[1][]{%
  \ifx\relax#1\relax  \item \else \item[#1] \fi
  \abovedisplayskip=0pt\abovedisplayshortskip=0pt~\vspace*{-\baselineskip}}
\let\emptyset\varnothing
\newcommand{\inv}{\operatorname{inv}}
\newcommand{\opt}{\operatorname{opt}}
\newcommand{\obs}{\operatorname{obs}}
\renewcommand{\tr}{\operatorname{tr}}
\newcommand{\tst}{\operatorname{tst}}

\newcommand{\I}{\operatorname{I}}
\newcommand{\SNI}{\operatorname{SNI}}

\renewcommand{\var}{\texttt}
\newcommand{\mathcalbf}[1]{\boldsymbol{\mathcal{#1}}}
\newcommand{\ci}{\mathrel{\perp\mspace{-10mu}\perp}}
\newcommand\numberthis{\addtocounter{equation}{1}\tag{\theequation}}
\newcommand{\nci}{\centernot{\ci}\hspace{-2pt}}
\DeclareMathOperator{\EX}{\mathbb{E}}%
\DeclareMathOperator*{\argmax}{arg\,max}
\DeclareMathOperator*{\argmin}{arg\,min}

\newcommand{\simind}{\overset{\text{ind.}}{\sim}}

\renewcommand{\P}{\mathbb{P}}

\DeclareMathOperator*{\PA}{PA}
\DeclareMathOperator*{\AN}{AN}
\DeclareMathOperator*{\DE}{DE}
\DeclareMathOperator*{\MB}{MB}
\DeclareMathOperator*{\doo}{do}

\captionsetup{belowskip=0pt}

\title{Invariant Policy Learning: A Causal Perspective}

\author{Sorawit Saengkyongam,\,Nikolaj Thams,\, Jonas Peters, and Niklas Pfister}
\date{University of Copenhagen, Denmark \\[2ex]
Emails: \{ss, thams, jonas.peters, np\}@math.ku.dk}

\begin{document}

\maketitle

\begin{abstract}
Contextual bandit and reinforcement learning algorithms have been successfully used in various interactive learning systems such as online advertising, recommender systems, and dynamic pricing. However, they have yet to be widely adopted in high-stakes application domains, such as healthcare. 
One reason may be that existing approaches assume that the underlying mechanisms are static in the sense that they do not change over different environments. In many real-world systems, however, the mechanisms are subject to shifts across environments which may invalidate the static environment assumption. In this paper, we take a step toward tackling the problem of environmental shifts considering the framework of offline contextual bandits. We view the environmental shift problem through the lens of causality and propose multi-environment contextual bandits that allow for changes in the underlying mechanisms. We adopt the concept of invariance from the causality literature and introduce the notion of policy invariance. We argue that policy invariance is only relevant if unobserved variables are present and show that, in that case, an optimal invariant policy is guaranteed to generalize across environments under suitable assumptions. Our results establish concrete connections among causality, invariance, and contextual bandits.
\end{abstract}

\section{Introduction}
\label{sec:introduction}

The problem of 
learning decision-making policies lies at the heart of learning systems. To adopt these learning systems in high-stakes application domains such as personalized medicine or autonomous driving, it is crucial that the learned policies are reliable even in environments that have never been encountered before. In this paper, we consider the problem of learning policies that are robust with respect to shifts across environments. We consider this question in the setup of offline  contextual  bandits, which provides a mathematical framework for tackling the above learning problems. 

While recent studies in offline contextual bandits \citep{dudik2011doubly, BJ13, swaminathan2015counterfactual, swaminathan2015self, kallus2018balanced, athey2021policy, zhou2022offline} offer theoretical results and novel methodologies for policy learning from offline data, 
they primarily focus on a fixed-environment setting (from now on, we will refer to this as the equal distribution assumption) in which the underlying mechanisms do not change over time or over different environments. In practice, however, shifts between environments often occur, possibly invalidating the equal distribution assumption. In healthcare, for example, datasets from different hospitals may not come from the same underlying distribution. As a result, a learning agent that ignores environmental shifts may fail to generalize beyond the environments it was trained on.

In the supervised learning context, the environmental shift problem has been studied  under different names, such as domain generalization, covariate shift adaptation, distributional robustness or out-of-distribution generalization
\citep{sugiyama2012machine, muandet2013domain, NEURIPS2018_1d94108e, arjovsky2019invariant, christiansen2020causal}. In domain generalization, the goal is to develop learning algorithms that are robust to changes in the test distribution. Thus, a fundamental problem is how to characterize such changes. A promising direction relies on a causal framework to describe the changes through the concept of interventions \citep{Schoelkopf2012icml, Rojas2016, Magliacane2018, arjovsky2019invariant, christiansen2020causal}. 
A key insight is that while purely predictive methods perform best if test and training distributions coincide, 
causal models generalize to
arbitrarily strong interventions on the covariates because of the modularity property of structural causal models (see e.g., \cite{pearl2009causality}).

In real-world applications knowledge of the underlying causal graph and structural discrepancies between environments may not be available. 
In recent years, invariance-based methods have been exploited to learn the causal structure from data \citep{PBM16, Pfister2018jasa, HeinzeDeml2017}. In invariant causal prediction \citep{PBM16}, for example, one assumes that the data are collected from different environments, each of which describes different underlying data-generating mechanisms, and uses this heterogeneity to learn the causal parents of an outcome variable $Y$.
The underpinning assumption is the invariance assumption, which posits the existence of a set of covariates $X$ in which the mechanism between $X$ and $Y$ remains constant. A model based on such invariant covariates is guaranteed to generalize to all unseen environments.

Our paper delineates an explicit connection among causality, invariance, and the environmental shift problem in the context of contextual bandits.
We develop a causal framework for characterizing the environmental shift problem, and provide a practical and theoretically sound solution based on the proposed framework. 

Our contributions are threefold.
First, we propose a multi-environment contextual bandit framework that represents mechanisms underlying a contextual bandit problem by structural causal models (SCMs; \cite{pearl2009causality}). The framework allows for changes in environments and thereby relaxes the equal distribution %
assumption. We define environments as different perturbations on the underlying SCM, and we evaluate the policy according to its worst-case performance in all environments. 
Second, using the proposed framework, we generalize the invariance assumption used in methods such as invariant causal prediction and define invariance properties for policies that, under certain assumptions, guarantee generalizability to unseen environments. Third, we develop an offline method for testing invariance under distributional (policy) shifts, and provide an algorithm for finding an optimal invariant policy. In addition, we highlight a setting in which causality and invariance are not necessary for solving the environmental shift problem. This insight takes us closer to understanding what causality can offer in offline contextual bandits. 

The remainder of our paper is organized as follows. Sections~\ref{sec:related_work} and \ref{sec:off_cb}  briefly review related work and introduces the offline contextual bandit problem. Section~\ref{sec:multi-env} 
formally defines a causal framework for multi-environment contextual bandits and the main objective of our problem's formulation. %
Drawing on the proposed framework, Section~\ref{sec:inv-policy} introduces invariance properties for policies and provides the main theoretical contributions underpinning our solution for the environmental shift problem. Section~\ref{sec:learn_opt_policy} discusses the assumptions required to learn invariant policies from offline data and presents an algorithm for learning an optimal invariant policy. Section~\ref{sec:simulation} provides simulation experiments that empirically verify our theoretical results. In Section~\ref{sec:warfarin}, we apply our framework to a warfarin dosing study. %

\subsection{Related Work}\label{sec:related_work}

Our work is most closely related to the line of work studying invariance and generalizability for prediction tasks in i.i.d.\ settings 
mentioned above \citep{Rojas2016, Magliacane2018, arjovsky2019invariant, christiansen2020causal, pfister2021SR}.
The environmental shift problem is also related to the problem of transportability in causal inference \citep{pearl2011transportability, bareinboim2014transportability, bareinboim2016causal, subbaswamy2019preventing, lee2020generalized, correa2020general} which aims to generalize causal findings from source environments to a target environment. Our work differs from the transportability literature: we neither assume prior knowledge of the underlying causal graph nor of the structural differences between environments. Instead, we only assume that invariances in the observed environments are preserved in the target environment. Furthermore, while the goal in transportability is to derive whether and how one can identify a causal quantity (e.g., an interventional distribution) in the target environment based on data from the source environment, our goal is to learn worst-case optimal policies based on the source environments.

Graphical models have been used in reinforcement learning to represent the underlying Markov Decision Processes (MDP) under the framework of factored MDPs. Such methods, however, focus mainly on providing efficient planning algorithms rather than generalizing to a new environment \citep{kearns1999efficient, guestrin2003efficient, guestrin2001multiagent, jonsson2006causal}.
Although some recent studies have explored the use of causality and invariance for tackling the environmental shift problem in contextual bandits and, more generally, reinforcement learning \citep{zhang2020invariant, sonar2020invariant}, the actual roles and benefits of causality and invariance remain unclear and under-explored.

Our framework differs from the framework of causal bandits \citep{LS18, LLR16, YA18, deKroon2020causal}. While causal bandits exploit causal knowledge (either assumed to be known a priori or estimated from data)
for improving the finite sample performance in a single environment, our framework focuses on modeling distributional shifts and the ability to generalize to new environments.
Another line of work has addressed the problem of policy evaluation and learning under unobserved confounding between the action and the reward variables \citep{bareinboim2015bandits, sen2017contextual, tennenholtz2020off, kallus2020confounding, tennenholtz2021bandits}. In contrast, we consider the complementary problem of unobserved confounding between the covariates and the reward variables (see Section~\ref{sec:inv-policy}). 

\subsection{Offline Contextual Bandits}
\label{sec:off_cb}
We briefly review the offline contextual bandit problem  \citep{beygelzimer2009offset, NIPS2010_c0f168ce}, considering a setup in which some of the covariates (also known as context variables) are unobserved. More precisely, we assume that the covariates can be partitioned into observed and unobserved variables $X \in \mathcalbf{X}$ and $U \in \mathcalbf{U}$. Here, $\mathcalbf{X}$ and $\mathcalbf{U}$ are metric spaces; the reader may think of $\mathcalbf{X}\subseteq\mathbb{R}^d$ and $\mathcalbf{U}\subseteq\mathbb{R}^p$. As in the standard contextual bandit setup \citep{LZ07}, for each round, we assume that the system generates a covariate vector $(X, U)$ and reveals only the observable $X$ to an agent. From the observed covariates $X$, the agent selects an action $A \in \mathcal{A}$ according to a policy $\pi: \mathcalbf{X} \xrightarrow{} \Delta(\mathcal{A})$ that maps the observed covariates to the probability simplex $\Delta(\mathcal{A})$ over the set of actions $\mathcal{A}$. (In this work, we assume $\mathcal{A}$ to be finite). Adapting commonly used notation, we write, for all $x \in \mathcalbf{X}$ and $a \in \mathcal{A}$, $\pi(a|x) := \pi(x)(a)$. The agent then receives a reward $R$ depending on the chosen action $A$, and on both the observed and unobserved covariates $(X, U)$.

In the classical setting, one assumes that the covariates are drawn i.i.d.\ from a joint distribution $\P_{X,U}$
(an assumption we will relax when introducing multi-environment contextual bandits in Section~\ref{sec:multi-env}) 
and that the rewards are drawn from a conditional distribution $\P_{R \mid X,U,A}$. 
The agent is evaluated based on the performance of its policy $\pi$ which is measured by the policy value:
\begin{equation*}
    V(\pi) \coloneqq  \EX_{(X,U) \sim \P_{X,U}}\EX_{A \sim \pi(X)}\EX_{R \sim \P_{R \mid X, U, A}}\big[R\big].
\end{equation*}

The agent is now given a fixed training dataset that is collected offline: it consists of $n$ rounds from one or more different policies, i.e., $D \coloneqq \{(X_i, A_i, R_i, \pi_i(X_i))\}_{i=1}^n$, where $A_i \sim \pi_i(X_i)$\footnote{We assume knowledge of the initial policy $\pi_i$ to ease our presentation and focus our contribution on the environmental shifts problem. Our theoretical results and algorithms remain unchanged even if the initial policy is unknown and needs to be estimated from the offline data (see Appendix~\ref{app:details-resampling} for more details).} for all $i\in\{1,\dots,n\}$.
The goal of the agent is then to find a policy $\pi$ that maximizes the policy value over a given policy class $\Pi$, i.e., $\pi^* \in \argmax_{\pi \in \Pi} V(\pi)$.

As mentioned, this setting assumes
that the environment in which we deploy the agent is identical to the environment in which the training dataset was collected. Section~\ref{sec:multi-env} introduces a causal framework for multi-environment contextual bandits, a framework that relaxes the equal distribution assumption.

\section{A Causal Framework for Multi-environment Contextual Bandits} \label{sec:multi-env}
Instead of having a fixed distribution $\P_{X,U}$ over the covariates, we introduce a collection $\mathcal{E}$ of environments such that, in each round, the covariates are drawn from an environment-specific distribution $\P^e_{X,U}$ that depends on the environment $e \in \mathcal{E}$ in that round.

In practice, the agent only observes part of the environments $\mathcal{E}^{\obs} \subseteq \mathcal{E}$ 
and is expected to generalize well to all environments in $\mathcal{E}$ including the unseen environments $\mathcal{E} \setminus \mathcal{E}^{\obs}$. To formalize the problem, we first introduce a model that puts assumptions on how environments change the distributions of $X$, $U$ and $R$. Specifically, an environment $e$ can only perturb the distribution of the reward $R$ through altering the distribution of the covariates $X$ and $U$. This constraint makes it possible to generalize information learned from one set of environments to another. In this formulation -- even though the full conditional distribution of the reward $\P^{\pi, e}_{R \mid X, U, A}$ is assumed to be fixed across environments -- the observable distribution $\P^{\pi, e}_{R \mid X, A}$ after marginalizing out the unobserved $U$ may change from one environment to another (see, e.g., Figure~\ref{fig:example1b})

Formally, the assumptions are constructed via an underlying class of SCMs indexed by the environment and policy.\footnote{
 Readers familiar with the standard notion of SCMs may think about
 an SCM with a source node $E$.  
 $\mathcal{S}(\pi, e)$ then corresponds to
 an intervention 
 on the action variable (change of policy) and on some of the observed covariates variables (change of environment).
 Here, we consider fixed environments, so that we do not have to consider them as random draws from an underlying distribution; see also \cite{dawid2002}.}
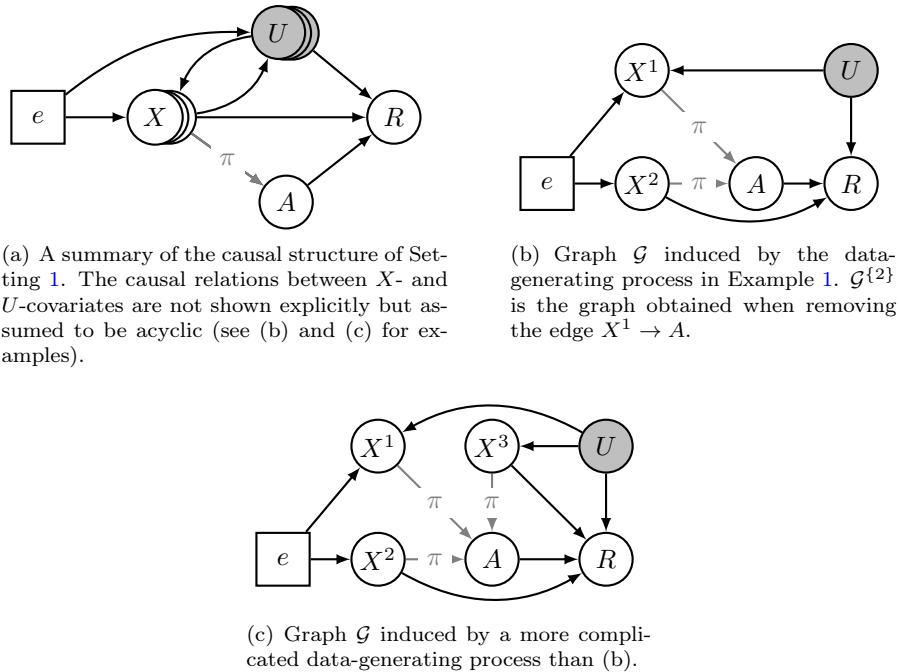
\begin{figure*}[t!]
    \centering
    \subfigure[A summary of the causal structure of Setting~\ref{setting:scmfixed}. The causal relations between $X$- and $U$-covariates are not shown explicitly but assumed to be acyclic (see (b) and (c) for examples).
    ]{
    \begin{tikzpicture}[node distance=1.5cm,
      thick, roundnode/.style={circle, draw, minimum size=7mm}, squarenode/.style={rectangle, draw, minimum size=7mm}]
    \node[roundnode, fill=white!100] (X) {$X$};
    \node[roundnode, fill=black!25] (U) [above right=0.6 and 1.1 of X] {$U$};
    \node[squarenode] (E) [left=0.8 of X]{$e$};
    \node[roundnode] (A) [below right=0.6 and 1.2 of X] {$A$};
    \node[roundnode] (R) [right=2.4 of X] {$R$};
    \begin{pgfonlayer}{bg1}
        \node[roundnode,thick] (X1) [right=0.9 of E] {$ $};
        \node[roundnode,thick,fill=black!25] (U1) [above right=0.6 and 1.1 of X1] {$ $};
    \end{pgfonlayer}
    \begin{pgfonlayer}{bg2}
        \node[roundnode,thick] (X2) [right=1.0 of E] {$ $};
        \node[roundnode,thick,fill=black!25] (U2) [above right=0.6 and 1.1 of X2] {$ $};.
    \end{pgfonlayer}
    \draw[-latex,thick] (E) -- (X);
    \draw[-latex,thick] (E) edge[bend left=20] (U);
    \draw[-latex,thick] (X2) -- (R);
    \draw[-latex,thick] (X2) -- (A);
    \draw[-latex,thick] (X2) edge[bend right] (U);
    \draw[-latex,thick] (U) edge[bend right] (X2);
    \path[-latex,draw,thick,gray] (X2) edge node[fill=white] {$\pi$} (A);
    \draw[-latex,thick] (A) -- (R);
    \draw[-latex,thick] (U2) -- (R);
    
\end{tikzpicture}
    \label{fig:graph-setting1}
    }
    \qquad
    \subfigure[
    Graph $\mathcal{G}$ induced by the data-generating process in Example~\ref{ex:cf_cb}.
     $\mathcal{G}^{\{2\}}$ is the graph obtained when removing the edge $X^1 \rightarrow A$. 
    ]{
         \begin{tikzpicture}[node distance=1.5cm,
    thick, roundnode/.style={circle, draw, inner sep=1pt,minimum size=7mm}, squarenode/.style={rectangle, draw, inner sep=1pt, minimum size=7mm}]
    \node[roundnode] (X1) at (-1.5, 1.5) {$X^1$};
    \node[roundnode] (X2) at (-1.5, 0){$X^2$};
    \node[roundnode][fill=black!25] (U) at (1.25,1.5) {$U$};
    \node[squarenode] (E) at (-2.75, 0){$e$};
    \node[roundnode] (A) at (0, 0) {$A$};
    \node[roundnode] (R) at (1.25, 0) {$R$};
    \draw[-latex] (E) edge (X2);
    \draw[-latex] (E) -- (X1);
    \draw[-latex] (X2) edge[bend right=30] (R);
    \begin{pgfonlayer}{bg1}
        \path[-latex,draw,thick, gray] (X2) edge
        node[fill=white] {$\pi$} (A);
        \path[-latex,draw,thick, gray] (X1) edge node[fill=white] {$\pi$} (A);
    \end{pgfonlayer}
    \draw[-latex, thick] (U) edge (X1);
    \draw[-latex, thick] (A) -- (R);
    \draw[-latex, thick] (U) edge (R);
\end{tikzpicture}
        \label{fig:example1b}
    }
    \qquad
    \subfigure[Graph $\mathcal{G}$ induced by a more complicated data-generating process than (b).]{
        \begin{tikzpicture}[node distance=1.5cm,
    thick, roundnode/.style={circle, draw, inner sep=1pt,minimum size=7mm}, squarenode/.style={rectangle, draw, inner sep=1pt, minimum size=7mm}]
    \node[roundnode] (X1) at (-1.5, 1.5) {$X^1$};
    \node[roundnode] (X2) at (-1.5, 0){$X^2$};
    \node[roundnode] (X3) at (0, 1.5) {$X^3$};
    \node[roundnode][fill=black!25] (U) at (1.5,1.5) {$U$};
    \node[squarenode] (E) at (-2.75, 0){$e$};
    \node[roundnode] (A) at (0, 0) {$A$};
    \node[roundnode] (R) at (1.5, 0) {$R$};
    \draw[-latex] (E) edge (X2);
    \draw[-latex] (E) -- (X1);
    \draw[-latex] (X2) edge[bend right=30] (R);
    \draw[-latex] (U) -- (X3);
    \draw[-latex] (X3) -- (R);
    \begin{pgfonlayer}{bg1}
        \path[-latex,draw,thick,gray] (X2) edge
        node[fill=white] {$\pi$} (A);
        \path[-latex,draw,thick,gray] (X1) edge node[fill=white] {$\pi$} (A);
        \path[-latex,draw,thick,gray] (X3) edge node[fill=white] {$\pi$} (A);
    \end{pgfonlayer}
    \draw[-latex, thick] (U) edge[bend right=30] (X1);
    \draw[-latex, thick] (A) -- (R);
    \draw[-latex, thick] (U) edge (R);
\end{tikzpicture}
        \label{fig:example1c}
    }
    \caption{
    Graphs summarizing different data-generating models. White and grey circles represent observed and hidden variables, respectively.
    (b) Here, $\{X^2\}$ is $d$-invariant, 
    because $R \ci_{\mathcal{G}^{\{2\}}} e \mid X^S$,
    see Definition~\ref{def:inv_sets_NEW}.
    Any set $S$ that contains $X^1$ is not $d$-invariant 
    because of the open path $e \rightarrow X^1 \leftarrow U \rightarrow R$. 
    In practice, we do not assume that the structure is known but test for invariances \eqref{eq:h_0_NEW} from data. This requires testing under distributional shifts: even though $\{X^2\}$ is $d$-invariant, \eqref{eq:h_0_NEW} may not hold for a policy $\pi$ that depends on $X^1$ and $X^2$ because of the path $e \rightarrow X^1 \rightarrow A \rightarrow R$. %
    (c) A more complex model, where the environments do not act on all $X$ variables. Although $U$ has an edge into $X^3$, the subset $\{X^2, X^3\}$ is still a $d$-invariant set -- there is no edge from $e$ to $X^3$. Again, every subset of variables containing $X^1$ is not $d$-invariant. (In fact, in examples (b) and (c), $X^1$ is a strongly non-$d$-invariant variable, see Definition~\ref{def:conf-removing}, and cannot be part of a $d$-invariant set.) 
    }
    \label{fig:ex1-setting1}
\end{figure*}
\begin{setting}[Multi-environment (acyclic) SCMs for bandits]
\label{setting:scmfixed}
Let $\mathcalbf{X} = \mathcal{X}^1\times\hdots\times\mathcal{X}^d$ and $\mathcalbf{U} = \mathcal{U}^1\times\hdots\times\mathcal{U}^p$ be products of metric spaces, $\mathcal{A} = \{a^1, \dots, a^k\}$ a discrete action space, $\Pi := \{\mathcalbf{X} \xrightarrow{} \Delta(\mathcal{A})\}$ the set of all policies, and $\mathcal{E}$ a collection of environments.
For all $\pi\in\Pi$ and all $e\in\mathcal{E}$ we consider
the following SCMs,
\begin{equation}\label{eq:setting1-scm}
\mathcal{S}(\pi, e):\quad
\begin{cases}
U\coloneqq s_e(X, U,\epsilon_U)\\
X\coloneqq h_e(X, U, \epsilon_X)\\
A\coloneqq g_{\pi}(X, \epsilon_A) \\
R\coloneqq f(X, U, A, \epsilon_R),
\end{cases}
\end{equation}
where $(X, U, A, R)\in\mathcalbf{X}\times\mathcalbf{U}\times\mathcal{A}\times\mathbb{R}$, $(s_e)_{e \in \mathcal{E}}$, $(h_e)_{e \in \mathcal{E}}$,
and $f$ are measurable functions, $\epsilon=(\epsilon_U, \epsilon_X,\epsilon_A,\epsilon_R)$ is a random vector with independent components and a distribution $Q_{\epsilon}=Q_{\epsilon_U}\otimes Q_{\epsilon_X}\otimes Q_{\epsilon_A}\otimes Q_{\epsilon_R}$, and $g_{\pi}$ and $Q_{\epsilon_A}$ are such that for all $x\in\mathcalbf{X}$ it holds that $g_{\pi}(x, \epsilon_A)$ is a random variable on $\mathcal{A}$ with distribution $\pi(x)$.
Figure~\ref{fig:graph-setting1} visualizes the 
coarse-grained
structure of this setting. $U, X,$ and $A$ should be thought of as random vectors. Accordingly, $h_e$, for example, is a function with a multivariate output; it is a short-hand notation in the sense that a component of $h_e$ does not need to depend on all $X$, for example. In particular, we assume that the graph $\mathcal{G}$ (defined below) corresponding to the SCMs is acyclic, see Figure~\ref{fig:example1b}~and~\ref{fig:example1c} for an example.

We assume there exists a probability measure $\mu$ on $\mathcalbf{X}\times\mathcalbf{U}\times\mathcal{A}\times\mathbb{R}$ such that for all $\pi\in\Pi$ and all $e\in\mathcal{E}$ the SCM $\mathcal{S}(\pi, e)$ induces a unique distribution $\P^{\pi,e}$ over $(X, U, A, R)$ (see \cite{bongers2016foundations} for details) which is dominated by $\mu$ and marginally has full support on $\mathcalbf{X}$. We denote the corresponding density by $p^{\pi,e}$ and the corresponding expectations by $\EX^{\pi,e}$. 
Whenever a probability, density, or expectation does not depend on $\pi$, we omit $\pi$ and write $\EX^{e}[X]$ rather than $\EX^{\pi,e}[X]$, for example.
\end{setting}
Some remarks regarding 
Setting~\ref{setting:scmfixed}
are in order:
(1) We only use the SCMs as a flexible way of modeling the changes in the joint distribution with respect to the environment $e$ and the policy $\pi$. In particular, we do not use it to model any 
further 
intervention 
distributions
that do not correspond to a change of policy or environment.
(2) In practice, the precise 
form of the SCMs is unknown. 
Indeed, we will neither assume knowledge of the structural equations nor complete knowledge of the graph structure, except that the constraints induced by \eqref{eq:setting1-scm} hold.
(3) The assumption of a dominating measure for all environments ensures that we can always assume the existence of densities while also switching across environments. In particular, this avoids any measure-theoretic difficulties regarding conditional distributions. (4) The assumption that the induced distributions over $X$ have full support in all environments
ensures that the generalization problem when moving from $\mathcal{E}^{\obs}$ to $\mathcal{E}$ does not involve any extrapolation. 
Additionally, it ensures that conditional expectations such as $\EX^{\pi, e}[R\mid X=x]$ can be uniquely defined for all $x\in\mathcalbf{X}$ as integrals of the conditional densities.
(5) The environments are modelled fixed (and not random). However, we could also treat the environments as random variables which can be considered a special case of the fixed-environment setting (see Appendix~\ref{app:scmrandom}). 

We now introduce the graph $\mathcal{G}$ over the variables $(X^1,\ldots,X^d, U^1,\ldots,U^p, A, R)$ that visualizes the structure of the SCMs $\mathcal{S}(\pi,e)$ (for all $\pi \in \Pi$ and $e \in \mathcal{E}$).
We draw edges corresponding to the assignments in~\eqref{eq:setting1-scm}, add edges from all $X$ to $A$ and add an environment node, which has edges into all variables whose assignments are not the same in all environments. This is similar to the selection diagrams in \cite{pearl2011transportability}, with the difference that we consider multiple environments.

More precisely, $\mathcal{G}$ is constructed as follows: Each coordinate of the variables $(X, U, A, R)$ corresponds to a node.
The nodes are connected according to the assignments, that is, we draw a directed edge from a variable $B$ to a variable $C$ if, for at least one environment $e\in\mathcal{E}$, the variable $B$ appears on the right-hand side of the assignment of variable $C$ (see Figure~\ref{fig:example1b} for an example).
Let $\mathcal{I}_X \subseteq \{1, \ldots, d\}$ and $\mathcal{I}_U \subseteq \{1, \ldots, p\}$ index the variables $X^j$ and $U^{\ell}$ for which the structural assignments $X^j \coloneqq h^j_e(X, U, \epsilon_X)$ and $U^{\ell} \coloneqq s^{\ell}_e(X, U, \epsilon_U)$ in \eqref{eq:setting1-scm} vary with $e$, i.e., where there exist $e,f \in \mathcal{E}$ such that $h^j_e \neq h^j_f$ or $s^{\ell}_e \neq s^{\ell}_f$, respectively. The environments $\mathcal{E}$ correspond to perturbations on variables $X^{\mathcal{I}_X}$ or $U^{\mathcal{I}_U}$, which implies that for each $e \in \mathcal{E}$ the distribution $\P^{\pi, e}(X^{\mathcal{I}_X}, U^{\mathcal{I}_U} \mid U^{\{1,\dots,p\} \setminus \mathcal{I}_U}, X^{\{1,\dots,d\} \setminus \mathcal{I}_X})$ may vary. We augment the graph with a square node $e$ to represent the environments and draw a directed edge from the node $e$ to each of the perturbation targets $X^{\mathcal{I}_X}$ and $U^{\mathcal{I}_U}$. Furthermore, we draw edges from all nodes in $X$ to $A$ and mark them with $\pi$ (to represent their dependence on the policy). This graph $\mathcal{G}$ is assumed to be acyclic, that is, to not contain any directed cycles. 
By the Markov condition, which holds in SCMs \citep{peters2017elements}, the graph $\mathcal{G}$ defined above encodes (conditional) independence statements, which we will see relate to invariance, through the concept of $d$-separation. More precisely, the Markov condition states that any $d$-separation statement in a graph implies conditional independence \citep{pearl2009causality, Lauritzen1990, peters2017elements}. Here, we refer to the standard definition of $d$-separation when not distinguishing between the different types of nodes and denote by $\ci_{\mathcal{G}}$ a $d$-separation statement in a graph $\mathcal{G}$. For completeness, we define $d$-separation in Appendix~\ref{app:d-sep}.

For any $S \subseteq \{1, \ldots, d\}$, we also define $\mathcal{G}^S$ as the subgraph of $\mathcal{G}$, in which, instead of all $X$, only the covariates in $S$ point into $A$:
\begin{equation} \label{eq:GS}
 \mathcal{G}^S := \text{ subgraph of $\mathcal{G}$ without edges $X^{\{1,\dots,d\} \setminus S}$  to  $A$}.
\end{equation}
Neither $\mathcal{G}$ nor $\mathcal{G^S}$ depends on the choice of policy.

We are now ready to define contextual bandits with multiple environments.
\begin{definition}[Multi-environment Contextual Bandits]
\label{def:mecb}
    Assume Setting~\ref{setting:scmfixed}. In a multi-environment contextual bandit setup, before the beginning of each round, the system is in an environment $e \in \mathcal{E}$.
    Then, the system generates a covariate vector $(X, U)$ and reveals only the observable $X$ 
    and the environment label $e$
    to the agent. 
    Based on the observed covariates $X$, the agent selects an action $A$ according to the policy $\pi: \mathcalbf{X} \xrightarrow{} \Delta(\mathcal{A})$. The agent then receives a reward $R$, depending on the chosen action $A$ and on both the observed and unobserved covariates $(X, U)$.
    More precisely, we assume for all $i\in\{1,\ldots,n\}$ that $(X_i, U_i, A_i, R_i)$ are sampled independently according to $\P^{\pi_i, e_i}_{X, U, A, R}$ (see Setting~\ref{setting:scmfixed}). 
    The training data contains data from environments in $\mathcal{E}^{\obs}$.
    When $\abs{\mathcal{E}^{obs}} = \abs{\mathcal{E}} = 1$, the setup reduces to a standard contextual bandit setup.
\end{definition}
In the multi-environment contextual bandit setup, the covariates on different rounds are not identically distributed due to changes in the environments. %
We can thus use this framework to model situations, where the test environments differ from training environments.
We illustrate this setting with the following example, which we will refer back to several times throughout the paper.

\begin{example}
\label{ex:cf_cb}
Consider a linear multi-environment contextual bandit with the following underlying SCMs
\begin{equation*}
\mathcal{S}(\pi, e):\quad
\begin{cases}
U \coloneqq  \epsilon_{U} \\
X^1 \coloneqq  \gamma_e U + \epsilon_{X^1} \\
X^2 \coloneqq  \alpha_e + \epsilon_{X^2} \\
A \coloneqq  g_\pi(X^1, X^2, \epsilon_{A}) \\ %
R \coloneqq  {\begin{cases} 
\beta_1 X^2 + U + \epsilon_{R}, %
 &\mbox{if } A = 0 \\
\beta_2 X^2 - U +\epsilon_{R}, %
 & \mbox{if } A = 1,
\end{cases}
}
\end{cases}
\end{equation*}
where $\epsilon_R, \epsilon_A, \epsilon_{X^1}, \epsilon_{X^2}$ %
are jointly independent noise variables with zero mean, $\gamma_e, \alpha_e \in \mathbb{R}$ for all $e \in \mathcal{E}$, $\beta_1, \beta_2 \in \mathbb{R}$, and $\mathcal{A} = \{0, 1\}$. Figure~\ref{fig:example1b} depicts the induced graph $\mathcal{G}$. In this example, the environments influence the observed covariates in two ways: (a) they change the mean of $X^2$ via $\alpha_e$ and (b) they change the conditional mean of $X^1$ given $U$ via $\gamma_e$, while the rest of the components remain fixed across different environments. Here, the environment-specific coefficient $\gamma_e$ modifies the correlation between the observable $X^1$ and the unobserved variable $U$, and consequently between $X^1$ and the reward $R$. Thus, an agent that uses information from $X^1$ to predict the reward $R$ in the training environments may fail to generalize to other environments. To see this, consider a training environment $e=1$ and a test environment $e=2$ and let $\gamma_1 = 1$, $\gamma_2 = -1$ be the environment-specific coefficients in the training and test environments, respectively. In the training environment, we have a large positive correlation between $X^1$ and $U$, and consequently the agent will learn that the action $A=0$ 
yields a higher expected reward when observing a positive value of $X^1$ (and $A=1$ otherwise). However, the correlation between $X^1$ and $U$ becomes negative (and large in absolute value) in the test environment, which means that the policy that the agent learned from the training environment will now be harmful. We will see in Section~\ref{sec:inv-policy} that a policy that depends on a $d$-invariant set ($\{X^2\}$ in this example) does not suffer from this generalization problem and is guaranteed to generalize across different environments.
\end{example}
A similar structure appears in the medical example discussed in Section~\ref{sec:warfarin}. There, $A$ is the dose of a drug, 
$R$ is a response variable, 
$X$ are observed patient features and $U$ are unobserved genetic factors. The environment $e$ is (a proxy of) the continent on which the data was collected.
\subsection{Distributionally Robust Policies}
To evaluate the performance of an agent across different environments, we define a 
policy value that takes into account  environments. In particular, we focus on the worst-case performance of an agent across environments.
\begin{definition}[Robust Policy Value]
    For a fixed policy $\pi\in\Pi$, and a set of environments $\mathcal{E}$, we define the \emph{robust policy value} $V^{\mathcal{E}}(\pi) \in \mathbb{R}$ as the worst-case expected reward
    \begin{equation}
        V^{\mathcal{E}}(\pi) \coloneqq  \inf_{e \in \mathcal{E}} \EX^{\pi,e}\big[R \big].
    \end{equation} 
\end{definition}
Intuitively, an agent that maximizes the robust policy value is expected to perform well (relative to other agents) in the most harmful environment. The idea of optimizing worst-case performance has been suggested in the reinforcement learning literature \citep{garcia2015comprehensive, amodei2016concrete} to ensure safe behavior of an agent and prevent catastrophic events and has also been used to formulate adversarial training \citep{ijcai2021-591} as well as out-of-distribution generalization  \citep{ye2021towards}.

We now assume that, for several observed environments, we are given an i.i.d.\ sample from a multi-environment contextual bandit, see Definition~\ref{def:mecb}. More precisely, we 
assume to observe $D \coloneqq  \{(X_i, A_i, R_i, \pi_i(X_i), e_i)\}_{i=1}^{n}$, where $e_i \in \mathcal{E}^{\obs}$, $A_i \sim \pi_i(X_i)$, $(X_i, A_i, R_i) \simind \P^{\pi_i, e_i}_{X, A, R}$ for all $i \in \{1,\dots,n\}$. Using only $D$, we aim to solve the following maximin problem\footnote{The maximum can always be attained when $\Pi$ is an unrestricted policy class
and takes a form similar to \eqref{eq:optimal_unconf_policy}.}:
\begin{equation}
    \argmax_{\pi \in \Pi} V^{\mathcal{E}}(\pi).\label{eq:maximin}
\end{equation}
If we do not observe all the environments, solving the maximin problem \eqref{eq:maximin} is impossible without further assumptions. A baseline
approach to this problem is to pool the data from all 
training environments and learn a policy that maximizes the policy value ignoring the environment structure. 
We show in Appendix~\ref{app:setting2_continue} that this is indeed optimal if the observed covariates explain all of the environment based distributional shifts in $R$, e.g., if all relevant covariates have been observed. However, if for example, hidden variables are present, the pooling approach does not necessarily yield
an optimal policy and the learned policy may fail to generalize to unseen test environments.

\section{Invariant Policies for Distributional Robustness}
\label{sec:inv-policy}
We now consider the general case in which the environment shifts may not be explained by the observed covariates. To this end, we introduce $d$-invariant sets and policies, and show that, under Setting~\ref{setting:scmfixed}, the maximin problem \eqref{eq:maximin} can be reduced to finding an optimal $d$-invariant policy given certain assumptions, see Proposition~\ref{prop:inv_policy_NEW} and Theorem~\ref{thm:inv_policy_NEW}.
This becomes particularly relevant if important variables remain unobserved. If all variables are observed, it suffices to pool the observed environments.
\begin{remark}
If there are no hidden variables, one can solve the objective \eqref{eq:maximin} by a standard policy optimization using all covariates $X$, without taking into account further concepts such as invariance or causality.
This statement is made precise and proved as Proposition~\ref{prop:no_uc} in Appendix~\ref{app:setting2_continue}.
\end{remark}
Nevertheless, in more realistic cases (see e.g., Figures~\ref{fig:example1b} and \ref{fig:example1c}), $d$-invariant sets and policies (introduced below) play a central role in solving the distributionally robust objective \eqref{eq:maximin}.

\begin{definition}[$d$-invariant Sets\footnote{The notion of $d$-invariant sets is related to $S$-admissibility in \cite{pearl2011transportability}. We use the term `$d$-invariant' to emphasize that the definition is based on the $d$-separation statement \eqref{eq:inv_con_NEW} and involves the unseen environments. In related contexts, sometimes the term `generalizing' is used \citep{pfister2019learning}. Section~\ref{sec:learn_opt_policy} introduces the invariance hypothesis \eqref{eq:h_0_NEW} that is testable from the observed data and discusses the assumptions required to connect the two conditions.}]
\label{def:inv_sets_NEW}
A subset $S \subseteq \{1,\dots,d\}$ is said to be 
\emph{$d$-invariant} if the following $d$-separation statement holds:
\begin{equation}
    R \ci_{\mathcal{G}^S} e \mid X^S \label{eq:inv_cond_S_NEW},
\end{equation}
where $\mathcal{G}^S$ is defined in \eqref{eq:GS}.
\end{definition}
Our approach relies on the existence of a $d$-invariant set. We therefore make this assumption explicit.
\begin{assumption}\label{assm:inv_set_exists}
    There exists a subset $S \subseteq \{1,\dots,d\}$ such that $S$ is $d$-invariant.
\end{assumption}
Under faithfulness \citep{pearl2009causality}, Assumption~\ref{assm:inv_set_exists} is testable from the observed data (see Section~\ref{sec:learning_invariant_sets}).
Next, we define $d$-invariant policies. For all subsets $S \subseteq \{1, \dots, d\}$, let us denote the set of all policies that depend only on $X^{S}$ by $\Pi^S \coloneqq \{ \pi \in \Pi \mid \exists \pi^S: \mathcalbf{X}^{S} \rightarrow \Delta(\mathcal{A}) \text{ s.t. } \forall x \in \mathcal{X}\,, \pi(\cdot | x) = \pi^S(\cdot | x^S) \} \subseteq \Pi$.
\begin{definition}[$d$-invariant Policies]
\label{def:inv_policies_NEW}
A policy $\pi$ is said to be \emph{$d$-invariant with respect to a subset $S \subseteq \{1,\dots,d\}$}
if $S$ is a $d$-invariant set and $\pi \in \Pi^S$.
\end{definition}
We denote by $\mathbf{S}_{\inv} \coloneqq  \{ S \subseteq \{1, \dots, d\} \mid S \text{ is $d$-invariant}\}$ the collection of all $d$-invariant sets and $\Pi_{\inv} \coloneqq  \{\pi \in \Pi\,|\, 
\exists S \text{ s.t. }
\pi \text{ is $d$-invariant\ w.r.t.\ } S \}$ the collection of $d$-invariant policies.
For now, we assume to have access to the set of $d$-invariant policies $\Pi_{\inv}$. Section~\ref{sec:learn_opt_policy} discusses when and how we can learn $\Pi_{\inv}$ from the observed data.

Because of the hidden variables $U$, the conditional mean $\EX^{\pi, e}[R\mid X = x]$ is not ensured to be stable across the environments. Nevertheless, a $d$-invariant policy ensures that parts of the conditional mean are unchanged across environments.
\begin{lemma}\label{lemma:inv_policy}
Let $S \in \mathbf{S}_{\inv}$ be a $d$-invariant set and $\pi \in\Pi^{S}$. 
It holds for all $e, f \in \mathcal{E}$ and $x \in \mathcalbf{X}^{S}$ that
\begin{equation*}
\EX^{\pi,e}\big[R \mid X^{S} = x \big] = \EX^{\pi,f}\big[R \mid X^{S} = x\big]. \numberthis \label{eq:inv_con_NEW}
\end{equation*}
\end{lemma}
\begin{proof}
    See Appendix~\ref{proof:lemma:inv_policy}.
\end{proof}

For $S \in \mathbf{S}_{\inv}$, Lemma~\ref{lemma:inv_policy} implies that if a policy $\pi \in \Pi^S$ is optimal among $\Pi^S$ in the observed environments, then $\pi$ is also optimal among $\Pi^S$ in all environments (Proposition~\ref{prop:inv_policy_NEW_1}).
With the following assumption, we show in Proposition~\ref{prop:inv_policy_NEW_2} that the same holds when replacing $\Pi^S$ by $\Pi_{\inv}$.

\begin{assumption}
\label{assm:U_causes_R}
    Let $\mathcal{G}$ be the graph of the SCMs in Setting~\ref{setting:scmfixed}. Then, for all $\ell\in\{1,\ldots,p\}$, there must be an edge from $U^\ell$ to $R$ in $\mathcal{G}$.
\end{assumption}

\begin{proposition}
\label{prop:inv_policy_NEW}
Assume Setting~\ref{setting:scmfixed} and Assumption~\ref{assm:inv_set_exists}. 
Then the following statements hold.

\begin{enumerate}[label=(\roman*),
     ref={\theproposition(\roman*)}]
        \item \label{prop:inv_policy_NEW_1}
        Let $S \in \mathbf{S}_{\inv}$ and $\pi_{\opt}^S \in \argmax_{\pi \in \Pi^S} \sum_{e \in \mathcal{E}^{\obs}}\EX^{\pi,e}[R]$. We then have
    \begin{equation*}
         \forall \pi \in \Pi^S:
         \qquad V^{\mathcal{E}}(\pi) \leq V^{\mathcal{E}}(\pi_{\opt}^S). \numberthis \label{eq:inv-policy_NEW_1}
    \end{equation*}
    \item \label{prop:inv_policy_NEW_2}
    Let $\pi^* \in \argmax_{\pi\in\Pi_{\inv}}\sum_{e \in \mathcal{E}^{\obs}}\EX^{\pi,e}[R]$. If Assumption~\ref{assm:U_causes_R} holds, we have 
    \begin{equation*}
         \forall \pi \in \Pi_{\inv}:
         \qquad V^{\mathcal{E}}(\pi) \leq V^{\mathcal{E}}(\pi^*). \numberthis \label{eq:inv-policy_NEW_2}
    \end{equation*}
\end{enumerate}

\end{proposition}
\begin{proof}
    See Appendix \ref{proof:prop:inv_policy}.
\end{proof}
Proposition~\ref{prop:inv_policy_NEW} shows that a $d$-invariant policy that is optimal under the observed environments outperforms all other $d$-invariant policies, even on the test environments. But what about other policies that are not $d$-invariant? 
We will see in Theorem~\ref{thm:inv_policy_NEW} that under certain assumptions on the set $\mathcal{E}$ of environments,
they cannot perform better than the above $\pi^*$
either.

The key argument in the proof of Proposition~\ref{prop:inv_policy_NEW_2} is the identifiability of the optimal $d$-invariant set. Assumption~\ref{assm:U_causes_R} is necessary for this identifiablity: 
if the assumption is violated and there are multiple $d$-invariant sets, one can, in general, not say which of those $d$-invariant sets is optimal with respect to all environments $\mathcal{E}$ (see Appendix~\ref{app:counter_example} for a more detailed discussion). While, without Assumption~\ref{assm:U_causes_R}, the $d$-invariant set that is most predictive on $\mathcal{E}^{\obs}$ is no longer guaranteed to be worst-case optimal, it still satisfies a weaker guarantee shown in Theorem~\ref{thm:inv_policy_1} below.

We now outline the assumptions on the set $\mathcal{E}$ of environments facilitating this result. As seen in Example~\ref{ex:cf_cb}, the crucial difference between a $d$-invariant policy $\pi^{\{2\}}$ (a policy that only depends on $X^2$) and a non-$d$-invariant policy $\pi^{\{1, 2\}}$ (a policy that depends on both $X^1$ and $X^2$) is that $\pi^{\{1, 2\}}$ can use information related to variables confounded with the reward ($X^1$ in this example) that may change across environments. In cases where the environments do not change the system `too strongly' it can therefore happen that using such information is beneficial across all environments. In practice, however, one might not know how strong the test environments can change the system in which case such information can become useless or even harmful. Intuitively, this happens, for example, if environments exist where the non-$d$-invariant confounded variables no longer contain any information about the reward. Formally, we make the following definition.

\begin{definition}[Confounding Removing
Environments]\label{def:conf-removing}
    For $j \in \{1, \ldots, d\}$, we say that the variable $X^j$ is strongly non-$d$-invariant if for all $S \subseteq \{1, \ldots, d\}$
    \begin{align*}
        R \not\ci_{\mathcal{G}^S} e \mid X^{S \cup \{j\}}.
    \end{align*}
        An environment $e \in \mathcal{E}$ is said to be a confounding removing environment if for all $\pi\in\Pi$ it holds that
    \begin{align}
        X^j \ci_{\mathcal{G}^{\pi, e}} U,
    \end{align}
                for all strongly non-$d$-invariant variables $X^j$, where $\mathcal{G}^{\pi, e}$ is the graph induced by the SCM $\mathcal{S}(\pi, e)$.
\end{definition}
The two d-separation statements in Definition~\ref{def:conf-removing} are in different graphs: 
Both graphs $\mathcal{G}^S$ and $\mathcal{G}^{\pi,e}$ are subgraphs of $\mathcal{G}$. The distinction that is important for this definition is that while $\mathcal{G}^S$ contains all edges between the covariates $(X, U)$ that appear in at least one environment, the graph $\mathcal{G}^{\pi,e}$ only contains the edges that are active in the environment $e\in\mathcal{E}$.
Furthermore, to provide more understanding of the strongly non-$d$-invariant variables, we characterize a graphical criterion for such variables in Appendix~\ref{sec:sb_invariance}. There we show that the strongly non-$d$-invariant variables are the variables that are directly affected by $e$ and are confounded with $R$ through $U$, and descendants of such variables. These strongly non-$d$-invariant variables should not be included if one wants to find $d$-invariant sets. For example in Figure~\ref{fig:example1c}, the variable $X^1$ is strongly non-$d$-invariant and the $d$-invariant sets $\{X^2\}$ and $\{X^2, X^3\}$ are the sets that do not contain $X^1$.

To give an example of a confounding removing environment, consider the graph $\mathcal{G}^S$ in Example~\ref{ex:cf_cb} (see Figure~\ref{fig:example1b}).
For any subset $S$ where $\{1\} \subseteq S$ the path $e \rightarrow X^1 \leftarrow U \rightarrow R$ is open, and therefore $X^1$ is strongly non-$d$-invariant.
A confounding removing environment is an environment that removes the incoming edge from $U$ to $X^1$. In such an environment, the variable $X^1$ does not contain any information about the reward $R$. A similar notion of confounding removing environments is used in \cite{christiansen2020causal} in the setting of prediction.

The existence of confounding removing environments implies that at least in some of the environments it is impossible to benefit from a non-$d$-invariant policy. To ensure that one cannot benefit in the worst-case, we therefore introduce the following assumption.

\begin{assumption}
\label{assm:strong_envs}
For all $e \in \mathcal{E}$, there exists $f \in \mathcal{E}$ such that $f$ is a confounding removing environment and it holds that $\P^{e}_X = \P^{f}_X$.
\end{assumption}
To give an example, let $\mathcal{I} \subseteq \{1, \ldots, d\}$ index the variables $X^j$ for which there is an edge from $e$ to $X^j$ in the graph $\mathcal{G}$. If the set $\mathcal{E}$ of environments consists of arbitrary interventions on $X^{\mathcal{I}}$, then Assumption~\ref{assm:strong_envs} is satisfied. 

\begin{theorem}
\label{thm:inv_policy_NEW}
Assume Setting~\ref{setting:scmfixed} and Assumption~\ref{assm:inv_set_exists}. Let $\pi^*$ be an optimal $d$-invariant policy under the observed environments, $\pi^* \in \argmax_{\pi \in \Pi_{\inv}} \sum_{e \in \mathcal{E}^{\obs}}\EX^{\pi,e}[R]$. We then have the following statements.
\begin{enumerate}[label=(\roman*),
     ref={\thetheorem(\roman*)}]
        \item \label{thm:inv_policy_1}
        Let $\pi_{a}$ be the policy that always chooses an action $a \in \mathcal{A}$. We have for all $e \in \mathcal{E}$ that 
    \begin{equation}
        \max_{a \in \mathcal{A}}\EX^{\pi_a, e}[R]\footnote{A (conditional) expectation under $\pi_a$ can also be written in terms of do-notation \citep{pearl2009causality}, e.g., $\forall a \in \mathcal{A}, x \in \mathcal{X}: \EX^{\pi_a, e}[R \mid X=x] = \EX^{e}[R \mid X=x, do(A=a)]$. We use the $\pi_a$ notation to make our presentation consistent.}
\leq 
        \EX^{{\pi}^*, e}[R]. \label{eq:lower_bound_invariant_policy_NEW}
            \end{equation}
    \item \label{thm:inv_policy_2}
    If Assumptions~\ref{assm:U_causes_R}~and~\ref{assm:strong_envs} hold, we have  
\begin{equation}
     \forall \pi \in \Pi:
     \qquad V^{\mathcal{E}}(\pi) \leq V^{\mathcal{E}}(\pi^*). \label{eq:invariant_policy_is_maximin_optimal_NEW}
    \end{equation}
\end{enumerate}
\end{theorem}

\begin{proof}
    See Appendix \ref{proof:thm:inv_policy}.
\end{proof}
The first statement of Theorem~\ref{thm:inv_policy_NEW} implies that in all environments the expected reward under an optimal $d$-invariant policy is larger than any
optimal context-free policy. In other words, the information gained from the $d$-invariant set of covariates (the set that $\pi^*$ depends on) is generalizable across environments in the sense that it is not harmful in any environment. The second statement states that if the environments $\mathcal{E}$ are sufficiently strong (Assumption~\ref{assm:strong_envs}) then an optimal $d$-invariant policy $\pi^*$ maximizes the robust policy value $V^{\mathcal{E}}$.

The above results motivate a procedure to solve
the distributionally robust objective \eqref{eq:maximin}. Proposition \ref{prop:inv_policy_NEW} implies that 
if we consider a policy class containing only the  $d$-invariant policies, the maximin problem reduces to a standard policy optimization problem. Theorem \ref{thm:inv_policy_NEW} shows that an optimal $d$-invariant policy, under Assumption~\ref{assm:strong_envs},
is a solution to the distributionally robust objective. In other words, given a training dataset $D$, we seek to operationalize the following two steps: (a) find the set $\Pi_{\inv}$ of all $d$-invariant policies (Section~\ref{sec:learning_invariant_sets} discusses under which assumptions this is possible), (b) use offline policy optimization to solve $\argmax_{\pi \in \Pi_{\inv}} V^{\mathcal{E}^{\obs}}(\pi)$ on the dataset $D$.

One of the key components of the proposed method is to test whether a policy $\pi$, which may be different from the policy generating the data, is $d$-invariant 
using data obtained from the observed environments $\mathcal{E}^{\obs}$. The following section proposes such a test, discusses the assumptions required to learn the set of $d$-invariant policies, and gives a detailed description of the whole procedure.
\section{Learning an Optimal Invariant Policy}
\label{sec:learn_opt_policy}

\subsection{Learning Invariant Sets}\label{sec:learning_invariant_sets}
Our theoretical results (Proposition~\ref{prop:inv_policy_NEW} and Theorem~\ref{thm:inv_policy_NEW}) in the previous section assume that the set of all $d$-invariant policies $\Pi_{\inv}$ is given. We now turn to the task of learning $\Pi_{\inv}$ which boils down to searching for the collection of all $d$-invariant sets $\mathbf{S}^{\inv}$ using data obtained from the observed environments $\mathcal{E}^{\obs}$. To this end, we first define, for all $S \subseteq \{1,\dots,d\}$, $\pi \in \Pi$ and $\mathcal{E}^{\prime} \subseteq \mathcal{E}$, the null hypothesis
\begin{equation}
\label{eq:h_0_NEW}
    H_0(S,\pi, \mathcal{E}^{\prime}):\,
        \P^{\pi,e}_{R \mid X^{S}} \text{ is the same for all } e \in \mathcal{E^{\prime}}.
\end{equation}
In the case $\mathcal{E}^{\prime} = \mathcal{E}^{obs}$, we refer to $H_0(S, \pi, \mathcal{E}^{obs})$ as $\mathcal{E}^{\obs}$-invariance (which does not consider the unseen environments). Furthermore, we call a set $S$ \emph{invariant} if there exists $\pi \in \Pi^S$ such that $H_0(S, \pi, \mathcal{E}^{obs})$ holds and a policy $\pi$ \emph{invariant with respect to $S$} if $\pi \in \Pi^S$ and $S$ is invariant.
We now state our core assumptions that make learning possible.
\begin{assumption} \label{assm:mean_ff_NEW}
    For all $S \subseteq \{1,\dots, d\}$, the following holds:
    \begin{enumerate}[label=(\roman*),
     ref={\theassumption(\roman*)}]
    \item \label{assm:mean_ff1_NEW}
        $\exists \pi \in \Pi^S: H_0(S,\pi, \mathcal{E})$ true  $\implies R \ci_{\mathcal{G}^S} e \mid X^S$
    \item \label{assm:mean_ff2_NEW} $\forall \pi \in \Pi^S: H_0(S,\pi, \mathcal{E}^{\obs})$ true $\implies H_0(S,\pi, \mathcal{E})$ true
    \end{enumerate}
\end{assumption}
Assumption~\ref{assm:mean_ff1_NEW} connects the conditional distribution invariance used in the null hypothesis \eqref{eq:h_0_NEW} to the $d$-invariance condition given in \eqref{eq:inv_cond_S_NEW} (The reversed implication follows by Lemma~\ref{lemma:markov_cond}, Appendix~\ref{proof:lemma:inv_policy}.)
This assumption is a special case 
of the faithfulness assumption \citep{pearl2009causality} which is a fundamental assumption in causal discovery methods (e.g., \cite{glymour2019review}) that, in linear SCMs, holds with probability one if the linear coefficients are drawn from a distribution that is absolutely continuous with respect to Lebesgue measure \citep{meek1995, Spirtes2000}. Assumption~\ref{assm:mean_ff2_NEW} ensures that any invariance found in the observed environments $\mathcal{E}^{\obs}$ can be generalized to all environments $\mathcal{E}$. Implicitly, it requires that the observed environments are sufficiently heterogeneous\footnote{For example, if the observed environments are identical, we clearly would not be able to distinguish $d$-invariant sets from other sets using the observed data. Assumption~\ref{assm:mean_ff2_NEW} prevents such cases.}. This type of assumption is also at the core of other invariance-based methods \citep{rojas2018invariant, Magliacane2018, arjovsky2019invariant, pfister2021SR}.

At first glance, Assumption~\ref{assm:mean_ff1_NEW} suggests that we have to check the hypothesis $H_0(S, \pi, \mathcal{E})$ for all $\pi \in \Pi^S$ to conclude whether or not $S$ is $d$-invariant. Fortunately, as shown in Proposition~\ref{prop:inv_set_NEW}, we actually only need to check the null hypothesis for a single $\pi \in \Pi^S$.
\begin{proposition}
\label{prop:inv_set_NEW}
Assume Setting~\ref{setting:scmfixed} and Assumption~\ref{assm:mean_ff_NEW}. Then, for all subsets $S \subseteq \{1, \dots, d\}$ and for all policies $\pi,\tilde{\pi} \in \Pi^S$, it holds that
\begin{equation}
    H_0(S,\pi, \mathcal{E}) \text{ true}
    \iff
    H_0(S,\tilde{\pi}, \mathcal{E}) \text{ true}.
\end{equation}
\end{proposition}
\begin{proof}
    See Appendix \ref{proof:prop:inv_set}.
\end{proof}

Assumption~\ref{assm:mean_ff_NEW} and Proposition~\ref{prop:inv_set_NEW} make the learning problem tractable.
The task of testing whether a set $S$ is $d$-invariant boils down to testing the $\mathcal{E}^{\obs}$-invariance hypothesis $H_0(S,\pi^S, \mathcal{E}^{\obs})$ for a single $\pi^S \in \Pi^S$. We therefore have the flexibility to choose any $\pi^S$ from $\Pi^S$ to test the hypothesis (called the test policy). We discuss strategies for choosing the test policy in Section~\ref{sec:target_test}.

Testing $H_0(S,\pi^S, \mathcal{E}^{\obs})$ for $\pi^S\in\Pi^S$ by directly checking for a change in the conditional distributions across environments in the observed data is, however, not in general possible. This is because the observed data may have been generated based on an initial policy $\pi^0$ that does not satisfy $\pi^0 \in \Pi^S$. It can therefore happen that $H_0(S,\pi^S, \mathcal{E}^{\obs})$ is true but $H_0(S,\pi^0, \mathcal{E}^{\obs})$ is not.

We illustrate this point using the example graph $\mathcal{G}$ given in Figure~\ref{fig:example1b}. For a policy depending only on $S=\{2\}$ the environment $e$ is d-separated from $R$ given $X^{\{2\}}$ in $\mathcal{G}^{\{2\}}$, which implies that $\{2\}$ is $d$-invariant, and in particular that $H_0(\{2\}, \pi^{\{2\}}, \mathcal{E}^{\obs})$ is true by the Markov property (see Lemma~\ref{lemma:markov_cond} in Appendix~\ref{proof:lemma:inv_policy}). However, if the initial policy $\pi^0$ depends on both $X^1$ and $X^2$, then the path $e \rightarrow X^1 \rightarrow A \rightarrow R$ in Figure~\ref{fig:example1b} is open, which implies, by Assumption~\ref{assm:mean_ff_NEW}, that $H_0(\{2\}, \pi^{\{1,2\}},  \mathcal{E}^{\obs})$ is not true.\footnote{In the same example, when conditioning on $\{1,2\}$, the path $e \rightarrow X^1 \leftarrow U \rightarrow R$ is also open, which shows that $S = \{1, 2\}$ is not a $d$-invariant set.
}

Thus, in general, we cannot directly test the $\mathcal{E}^{\obs}$-invariance hypothesis of a set $S$ by using the observed data that were generated by the initial policy. Instead, we need to test $H_0(S, \pi^S, \mathcal{E}^{\obs})$ for a policy $\pi^S \in \Pi^S$ that is different from the data-generating policy $\pi^0$ (by Proposition~\ref{prop:inv_set_NEW} it suffices to test a single policy).
As we detail in the following section, we can do so by applying an off-policy test for invariance by resampling the data to mimic the policy $\pi^S$. 

\subsection{Testing Invariance under Distributional Shifts}\label{sec:resampling}
Consider a fixed set $S \subseteq \{1, \ldots, d\}$ and a pre-specified test policy $\pi^S \in \Pi^S$ (see Section~\ref{sec:target_test} for how to choose $\pi^S$). 
To test the hypothesis $H_0(S, \pi^S, \mathcal{E}^{\obs})$, we apply the off-policy test from \cite{thams2021statistical}, which draws a target sample from $\pi^S$ by resampling the offline data -- drawn from $\pi^0$ -- and then tests the invariance in this target sample. More formally, let $\mathcal{E}^{\obs} \coloneqq \{e_1, \dots, e_L\}$ and suppose that for every $e_j \in \mathcal{E}^{\obs}$ a dataset $D^{e_j}$ consisting of $n_e$ observations $D^{e_j} = \{(X_i^{e_j}, A_i^{e_j}, R_i^{e_j}, \pi^0(A_i^{e_j}|X_i^{e_j}))\}_{i=1}^{n_{e_j}}$ is available. For each environment $e_j$, we draw a weighted resample $D^{e_j, \pi^S}$ of $D^{e_j}$ using the weighted resampling procedure introduced in \cite{thams2021statistical}.\footnote{Importance weighting is not applicable here because the test statistics of an invariance test cannot be expressed in terms of weighted averages. See also the discussion in \cite{thams2021statistical}.}
We then apply an invariance test $\varphi^S(D^{e_1, \pi^S}, \ldots, D^{e_L, \pi^S})$ to the resampled data, to test the $\mathcal{E}^{\obs}$-invariance hypothesis $H_0(S, \pi^S, \mathcal{E}^{\obs})$.
An invariance hypothesis test $\varphi^S$ is a function (into $\{0,1\}$) that takes data from environments $e_1, \ldots, e_L$, each of size $m_{e_i}$, and tests whether $S$ is invariant. Here, $\varphi^S = 1$ indicates that we reject the hypothesis of invariance. We detail a concrete test $\varphi^S$ in Section~\ref{sec:target_test}.
In Appendix~\ref{app:details-resampling}, we provide details on the resampling scheme, that is,  a formal definition of $D^{e_j, \pi^S}$  and show that the theoretical guarantees on the asymptotic level proved in \cite{thams2021statistical} also extend to our application.

\subsection{Algorithm for Invariant Policy Learning} \label{sec:algo_inv_policy}
The previous sections discuss finding invariant subsets $S$. We now discuss how to employ this in an algorithm that learns an optimal invariant policy.
We assume that we are given an off-policy optimization algorithm $\var{off\_opt}$ that takes as input a sample $D\coloneqq (D^{e_1}, \ldots, D^{e_L})$ and a policy space $\Pi$, and returns an optimal policy $\pi^*$ and its estimated expected reward $\hat{\EX}^{\pi^*}(R)$. 

Here, we present one choice of $\var{off\_opt}$ that we use in the experimental section; our approach can also be applied with other off-policy optimization algorithms.
Given a policy space $\Pi^S$, we consider an optimal policy of the form 
\begin{equation}
    \pi^S(a \mid x)\coloneqq \mathbbm{1}\big[ a = \argmax_{a^\prime \in \mathcal{A}} Q^{S}(x,a^\prime) \big], \label{eq:policy_off_opt}
\end{equation}
where $Q^{S}(x, a) \coloneqq \frac{1}{L}\sum_{\ell=1}^{L} \EX^{\pi_a, e_{\ell}}[R \mid X^S = x]$ denotes the pooled conditional mean under the policy that always selects an action $a$.\footnote{In our framework, changing the policy corresponds to intervening on the underlying SCM (see Setting~\ref{setting:scmfixed}). The expression $Q^S(x,a)$ is derived from expectations under such interventions and can thus be considered a causal quantity.}

Let $\pi^0$ be an initial policy generating the sample $D$. By our assumption in Setting~\ref{setting:scmfixed}, the policy $\pi^0$ depends only on the observed covariates $X$. We therefore have that for all $S \subseteq \{1, \dots, d \}$ the pooled conditional mean $Q^S(x,a)$ is identifiable for all $a \in \mathcal{A}$ and $x \in \mathcal{X}^S$ as shown in Lemma~\ref{lemma:q_identification} below.
\begin{lemma}\label{lemma:q_identification}
Let $S \in \mathbf{S}_{\inv}$ be a $d$-invariant set. It holds for all $ x \in \mathcal{X}^S$ and all $a \in \mathcal{A}$ that
\begin{equation}\label{eq:q_estimator}
    Q^S(x,a) = \frac{1}{L}\sum_{\ell = 1}^{L}\EX^{\pi^0, e_{\ell}}\big[\tfrac{R}{\pi^0(A \mid X)} \mid X^S = x, A = a\big].
\end{equation}
\end{lemma}
\begin{proof}
    See Appendix \ref{proof:lemma:q_identification}.
\end{proof}
Here, we express the causal quantity $Q^S(x,a)$ entirely in terms of expecations under the observed policy $\pi^0$ by using reweighting. Equivalently, one can also express $Q^S(x,a)$ with the backdoor adjustment formula \citep{pearl2009causality}. While the two formulations are equivalent, the resulting estimators are different (see the discussion in Appendix~\ref{app:backdoor_vs_rew}).

We propose to estimate $Q^S$ by a weighted least squares approach in which we consider a parameterized function class $\{f_{\theta}:\mathcalbf{X}^S \times \mathcal{A} \xrightarrow{} \mathbb{R} \mid \theta \in \Theta^S \}$ and assume that there exists a unique $\theta^S_0 \in \Theta^S$ such that for all $x\in\mathcal{X}^S$ and $a\in\mathcal{A}$ it holds that $Q^S(x,a) = f_{\theta^S_0}(x,a)$. That is, we consider
\begin{equation}
    \hat{\theta}^S_n \coloneqq \argmin_{\theta \in \Theta^S} \frac{1}{L}\sum_{\ell = 1}^L \frac{1}{n_{e_\ell}}\sum_{i = 1}^{n_{e_\ell}}  \frac{(f_{\theta}(A^{e_\ell}_i, {X^{e_\ell}_i}^{S}) - R^{e_\ell}_i)^2}{\pi^0(A^{e_\ell}_i \mid X^{e_\ell}_i)} , \label{eq:weighted_regressor}
\end{equation}
where $n \coloneqq (n_{e_1},\dots,n_{e_L})$. We then plug the estimate $\widehat{Q}^S\coloneqq f_{\hat{\theta}^S_n}$ into \eqref{eq:policy_off_opt} to obtain our (estimated) optimal policy. Proposition~\ref{prop:q_consistency} shows that, under some regularity conditions, $\hat{\theta}^S_n$ is a consistent estimate of $\theta^S_0$.
\begin{proposition}\label{prop:q_consistency}
    Assume Setting~\ref{setting:scmfixed} and Assumption~\ref{assm:inv_set_exists}. Let $S \in \mathbf{S}^{\inv}$ be a $d$-invariant set. Assume that
    \begin{itemize}
        \item[(i)] $\Theta^S$ is compact,
        \item[(ii)] there exists a unique $\theta^S_0 \in \Theta^S$ s.t. $\forall x \in \mathcal{X}^S$, $\forall a \in \mathcal{A}: Q^S(x, a) = f_{\theta^S_0}(x, a)\,\mu\text{-a.s.}$,
        \item[(iii)] $\forall x \in \mathcal{X}^S, \forall a \in \mathcal{A}:$ $\theta \rightarrow f_{\theta}(x, a)$ is continuous on $\Theta^S$,
        \item[(iv)] $\forall e \in \mathcal{E}^{\obs}:\EX^{\pi^0,e}[\sup_{\theta \in \Theta^S} (R-f_{\theta}(X, A))^2] < \infty$,
        \item[(v)] $\exists \delta > 0\text{ s.t. } \forall x \in \mathcal{X},\forall a \in \mathcal{A}: \pi^0(a|x) \geq \delta$.
    \end{itemize}
    Then, $\hat{\theta}^S_n$ is a consistent estimate of $\theta^S_0$, i.e., $\|\hat{\theta}^S_n - \theta^S_0\|_{\infty} \rightarrow 0$ in probability as  $n_{e_1},\dots,n_{e_L} \rightarrow \infty$.
\end{proposition}
\begin{proof}
    See Appendix \ref{proof:prop:q_consistency}.
\end{proof}

We summarize the overall procedure for learning an optimal invariant policy, see Algorithm~\ref{alg:learn_opt_policy}:
The algorithm iterates over all subsets $S \subseteq \{1, \dots, d\}$ and checks the invariance condition using the off-policy invariance test given in Algorithm~\ref{alg:test_inv}.
The choices of the hypothesis test $\psi^S$ and the test policy $\pi^S$ are discussed in Section~\ref{sec:target_test}. For each iteration, if the set $S$ is invariant, we learn an optimal policy $\pi^*_S$ within the policy space $\Pi^S$ and compute its estimated expected reward $\hat{\EX}^{\pi^*_S}(R)$ using $\var{off\_opt}$. 
Then, the algorithm returns an optimal policy $\pi^*_S$ such that the estimated expected reward $\hat{\EX}^{\pi^*_S}(R)$ is maximized. Lastly, the algorithm returns null if no invariant sets are found.

Algorithm~\ref{alg:learn_opt_policy} requires us to iterate over all subset $S \subseteq \{1,\dots,d\}$ which may be computationally intractable when $d$ is large. We suggest two approaches for reducing the computational complexity of the algorithm. First, one can use a variable screening method (e.g., Lasso regression \cite{tibshirani1996regression}) to filter out the variables that are not predictive of the reward. 
If an optimal invariant set is a subset of the Markov blanket $\MB(R)$ of the reward, applying a variable screening step prior to Algorithm~\ref{alg:learn_opt_policy} would not change the algorithm's output on the population level (see \cite{PBM16, rojas2018invariant, pfister2021SR}). This approach is particularly efficient when the Markov blanket is sparse, that is, $\abs{\MB(R)} \ll d$.

Second, one may apply a greedy search instead of the exhaustive search in Algorithm~\ref{alg:learn_opt_policy}. More specifically, we suggest to follow the greedy search introduced in \cite{rojas2018invariant}. The greedy algorithm starts with an empty set $\hat{S} = \emptyset$. For each iteration, we search over the neighboring sets of the candidate set $\hat{S}$, which are obtained by adding or removing one predictor to or from $\hat{S}$. If any of the neighboring sets are accepted by the invaraince test, we select the one with the highest expected reward. If the test rejects all the neighbors, we select a neighbor that yields the largest p-value of the test.

\begin{algorithm}[t!]
\caption{Learning an optimal invariant policy}
\label{alg:learn_opt_policy}
\SetAlgoLined
\KwIn{data $D = (D^{e_1}, \ldots, D^{e_L})$, off-policy optimization  $\var{off\_opt}$, hypothesis tests and test policies $\{(\psi^S, \pi^S)\}_{S \subseteq \{1,\dots,d\}}$
}
initialize maximum reward $\var{maxR} \leftarrow -\infty$ \;
initialize optimal invariant policy $\pi_{\inv}^* \leftarrow \var{null}$ \;
\tcp{loop over all subsets}
\For{$S \in \mathcal{P}(\{1,\dots,d\})$}{
    \tcp{test for invariance}
    $\var{is\_inv} \leftarrow \var{test\_inv}(D, \pi^S,\psi^S, S)$ \;
        \tcp{(see Algorithm~\ref{alg:test_inv})}
    \tcp{update best invariant set}
    \If{\var{is\_inv}}{
        $\pi^*_S, \hat{\EX}^{\pi^*_S}(R) \leftarrow \var{off\_opt}(D, \Pi^S)$ \;
        \If{$\var{maxR} < \hat{\EX}^{\pi^*_S}(R)$}{
            $\var{maxR} \leftarrow \hat{\EX}^{\pi^*_S}(R)$ \;
            $\pi_{\inv}^* \leftarrow \pi^*_S$ \;
        }
    }
}
\KwOut{optimal invariant policy $\pi_{\inv}^*$}
\end{algorithm}

\begin{algorithm}[t!]
\caption{Testing the invariance of a set $S$ with given test policy $\pi^S$}
\label{alg:test_inv}
\SetAlgoLined
\SetKwFunction{invcond}{$\var{test\_inv}$}
\SetKwProg{Fn}{Function}{:}{}
\Fn{\invcond{{\normalfont data $D = (D^{e_1}, \ldots, D^{e_L})$, test policy $\pi^S$,
hypothesis test $\psi^S$, target set $S$}
}}{
 \tcp{resampling according to $\pi^S$}
 \For{$e = e_1, \ldots, e_L$}{
  \For{$i=1$ \KwTo $\abs{D^e}$}{ 
    compute weights: $r_i^e \leftarrow \dfrac{\pi^S(a_i^e \mid x_i^{e,S})}{\pi^{0}(a_i^e \mid x_i^e)}$ \;
    }
    choose resampling size $m_e$ with GOF-heuristic in \cite{thams2021statistical} \;
    draw $D^{e, \pi^S} \coloneqq (D^{e}_{i_1}, \ldots, D^{e}_{i_{m_e}})$ from $D^{e}$ with prob.\ $\propto \prod_{\ell=1}^{m_e}r_{i_\ell}^e$ \;
 }
 $D^{\pi^S} \leftarrow (D^{e_1, \pi^S},\ldots, D^{e_L, \pi^S})$\;
 \tcp{verifying invariance condition}
 $\var{is\_invariant} \leftarrow \psi^S(D^{\pi^S})$ \;
 \KwRet\ $\var{is\_invariant}$
}
\end{algorithm}

\subsection{Specifications of the Target Test}
\label{sec:target_test}

The resampling procedure detailed in Algorithm~\ref{alg:test_inv} requires a hypothesis test for the $\mathcal{E}^{\obs}$-invariance null hypothesis that has power against the alternatives. We discuss one such test in Section~\ref{sec:inv_res_dist_test} below. Moreover, in Sections~\ref{sec:optimizing-power}~and~\ref{sec:target_policy:uniform}, we discuss two choices of the test policy that aim to improve the power of the resampling test.

\subsubsection{Invariant residual distribution test}
\label{sec:inv_res_dist_test}

We now detail a test $\varphi^S$ to test $\mathcal{E}^{\obs}$-invariance in the target sample. 
We first pool data from all environments into one dataset and estimate the conditional $\EX^{\pi^S}[R \mid X^S]$ using any prediction method (such as linear regression or a neural network). 
We then test whether the residuals $R - \EX^{\pi^S}[R\mid X^S]$ are equally distributed across the environments $e \in \mathcal{E}$, i.e., we split the sample back into $L$ groups (corresponding to the environments) and test whether the residuals in these groups are equally distributed (see also \cite{PBM16}, for example). 
We then define $\varphi^S$ to be the composition of these operations, that is, $\varphi^S$ returns $1$ if the test for equal distribution of the residuals is rejected.

In the simulation and the warfarin case study (Section~\ref{sec:simulation}~and~\ref{sec:warfarin}), we use the Kruskal-Wallis test \citep{kruskal1952use} to test whether the residuals have the same mean across environments; this test holds pointwise asymptotic level for all $\alpha \in (0,1)$ (see Proposition~\ref{thm:SIR-consistency-new} in Appendix~\ref{app:details-resampling}).
To obtain power against more alternatives, one could also use other tests, such as a two-sample kernel test with maximum mean discrepancy \citep{gretton2012kernel} and then correct for the multiple testing using Bonferroni-corrections (see also \cite{Rojas2016}, for example).

\subsubsection{Optimizing the test policy for power}\label{sec:optimizing-power}
To check whether a subset $S$ is invariant, we only need to test the $\mathcal{E}^{\obs}$-invariance for a single policy $\pi \in \Pi^S$ (see Proposition~\ref{prop:inv_set_NEW}).
This provides us with a degree of freedom that we can leverage. Intuitively, the non-invariance may be more easily detectable in some test policies compared to others. We can therefore try to find a policy that gives us the strongest signal for detecting non-invariance. 
We maximize the power of the test by minimizing the $p$-value of the test. In a population setting, this would return small $p$-values for  non-invariant  sets, whereas for invariant sets one would not be able to make the $p$-values arbitrarily small, since they are uniformly distributed. In a finite sample setting, this type of power optimization can lead to overfitting (which would break any level guarantees); to avoid this we use sample splitting.

As presented in Section~\ref{sec:resampling}, for each environment $e$, we obtain a target sample $D^{e,\pi^S}$ 
from a test policy $\pi^S$ by resampling the sample $D^e$ that was generated under the policy $\pi^{0}$, and then test $\mathcal{E}^{\obs}$-invariance in the target sample. The probabilities for obtaining the reweighted sample conditioned on the original sample are given by the importance weights, see Appendix~\ref{app:details-resampling}. Here, we optimize the ability to detect non-invariance over a parameterized subclass of $\Pi^S$,
\begin{align*}
    \Pi^\Theta_S \coloneqq \{\pi_\theta^S \mid \theta \in \Theta\},
\end{align*}
where $\Theta = \bigtimes_{a \in \mathcal{A}} \mathbb{R}^{|S|}$ 
and $\pi_\theta^S$ is a linear softmax policy, i.e., for all $x^S\in \mathbb{R}^{|S|}$ and $a \in \mathcal{A}$:
\begin{align*}
    \pi_\theta^S(a|x^S) = \frac{\exp\left(\theta_a^\top x^S\right)}{\sum_{a'}\exp\left(\theta_{a'}^\top x^S\right)}.
\end{align*}
This is the parameterization we chose in the experiments below, but other choices work, too.

To check for the $\mathcal{E}^{\obs}$-invariance condition of a subset $S$, the idea is then to find a policy $\pi_\theta^S \in \Pi^\Theta_S$ such that, in expectation, the test power is maximized, i.e., we need to solve the following optimization problem:
\begin{equation*}
    \argmax_{\theta \in \Theta} \EX\big[\var{pw}(D^{\pi_\theta^S})\mid D\big],
\end{equation*}
where $D \coloneqq (D^{e_1}, \ldots, D^{e_L})$ 
is all the observed data and $\var{pw}$ is a function that takes as input the reweighted sample $D^{\pi_\theta^S}$ and outputs the power of the test. 
Since we condition on $D$, the expectation is only with respect to the resampling of $D^{\pi_\theta^S}$.
For many invariance tests, the test power $\var{pw}(D^{\pi_\theta^S})$ cannot be directly obtained, but one can 
minimize the $p$-value of the test instead.
This motivates the objective function
\begin{equation}
    \argmin_{\theta\in\Theta} \EX\big[\var{pv}(D^{\pi_\theta^S})\mid D\big], \label{eq:p-value_op}
\end{equation}
where $\var{pv}$ is a function that takes as input the reweighted sample $D^{\pi_\theta^S}$ and outputs the p-value of the test. 
We then employ gradient-based optimization algorithms to solve the above optimization problem, where the gradient is derived using the log-derivative. More precisely, let $J(\theta) \coloneqq  \EX\big[\var{pv}(D^{\pi_\theta^S}) \mid D \big]$ be our objective function which now depends on the parameters $\theta$. The gradient of the objective function $J(\theta)$ can be derived as follows
\begin{align*}
    \nabla J(\theta) &= \nabla \EX\big[\var{pv}(D^{\pi_\theta^S})\mid D\big] \\
    &= \nabla \sum_{d} \P(D^{\pi_\theta^S} = d \mid D) \var{pv}(d) \\
    &= \sum_{d} \P(D^{\pi_\theta^S} = d \mid D) \nabla \log \P(D^{\pi_\theta^S} = d \mid D) \var{pv}(d) \\
    &= \EX\big[ \nabla \log \P(D^{\pi_\theta^S} \mid D) \var{pv}(D^{\pi_\theta^S}) \mid D \big].
\end{align*}
This expectation can be estimated by drawing repeated resamples $D^{\pi_\theta^S}$,
where $\P(D^{\pi_\theta^S}\mid D)$ is determined
by the resampling weights.
In practice, we apply stochastic gradient descent \citep{zhang2004solving},
i.e., at each iteration of the optimization we compute the gradient only from a single resample. 
As we argue in Appendix~\ref{sec:power-opt-REPL},
we can further speed up the optimization process substantially by a minor modification to the resampling weights, corresponding to sampling with replacement instead of distinct weights. 

The optimization yields a policy $\pi^*_{\theta}$ that approximately satisfies $\pi^*_{\theta}\in \argmin_{\pi_{\theta} \in \Pi^S} J(\theta)$. We can then use $\pi^*_{\theta}$ as a test policy for testing the invariance of $S$.
Lastly, to preserve the level of the statistical test, we split the original sample into two halves, perform the power optimization procedure on one half, and verify the invariance condition on the other half. The algorithm is presented in Algorithm~\ref{alg:power_opt} in Appendix~\ref{app:algorithm-power-opt}. We only use the approximation of the resampling weights for the power optimization and use the actual resampling weights
for the final resampling, so the level guarantee of Proposition~\ref{thm:SIR-consistency-new} in Appendix~\ref{app:details-resampling} still holds.

\subsubsection{Using a uniform target distribution}
\label{sec:target_policy:uniform}
Since the procedure in Section~\ref{sec:optimizing-power} may be computationally challenging, especially if the algorithm is repeated many times as in Section~\ref{sec:simulation}. 
A computationally simpler approach is for each $a \in \mathcal{A}$ to test invariance under the test policy $\pi_a \in \Pi^{\emptyset}$, which always chooses the action $a$, and then combine the resulting $p$-values using Bonferroni corrections \citep{dunn1961multiple}. Beyond computational simplicity, this has an additional benefit: 
Across environments there may be a cancelling effect of the difference in means due to different dependencies on the action in each environment. By testing the invariance of the conditional mean of the reward in each action, such cancelling effects are accounted for.

\subsection{Learning Causal Ancestors under Distributional Shifts}\label{sec:off_icp}
Sections~\ref{sec:learning_invariant_sets}~and~\ref{sec:resampling} discuss an approach to learn invariant sets from off-policy data. The learned invariant sets are then used to find an optimal invariant policy as discussed in Section~\ref{sec:algo_inv_policy}. Besides learning an optimal invariant policy, one can further use the proposed off-policy invariance test to analyze the causal structure. More specifically, the learned invariant sets allow us to look for potential observed causal ancestors $\AN(R)$\footnote{Formally, $\AN(R)\subseteq\{1,\ldots,d\}$ is defined as the set of indicies $j$ for which 
there is a directed path from $X^j$ to $R$
in $\mathcal{G}$.} of $R$ by taking the intersection of the accepted sets. This approach is similar to invariant causal prediction \citep{PBM16}, except that here, we employ the off-policy invariance test to account for the distributional shift between the initial and the test policies, and allow for hidden variables.

Now we outline a method for finding $\AN(R)$ from the offline data obtained from multiple environments 
$D^{e_1},\ldots,D^{e_L}$. For all $e_j \in \mathcal{E}^{\obs}$ and $S \subseteq \{1,\dots,d\}$, let us denote by $D^{e_j, \pi^S}$ a weighted resample of $D^{e_j}$, and $\psi^S$ an invariance test for the $\mathcal{E}^{\obs}$-invariance hypothesis $H_0(S, \pi^S, \mathcal{E}^{\obs})$ as discussed in Section~\ref{sec:resampling} and Appendix~\ref{app:details-resampling}. For ease of presentation, we assume that $n_{e_1} = \dots = n_{e_L} \eqqcolon n$. Then, 
we propose to estimate the causal ancestors of $R$ by
\begin{equation}
    \hat{S}^n_{\AN} \coloneqq \bigcap_{S: \psi^S(D^{e_1, \pi^S}, \dots, D^{e_L, \pi^S}) = 0} S. \label{eq:off_icp}
\end{equation}
We detail the whole procedure in Algorithm~\ref{alg:off_icp} in Appendix~\ref{app:alg:off_icp}. 
Proposition~\ref{prop:off_icp} shows that this method 
controls the probability of wrongly selecting an incorrect variable.

\begin{proposition}\label{prop:off_icp}
    Assume Setting~\ref{setting:scmfixed}, and that $\mathbf{S}^{\inv}$ is non-empty. Let $\hat{S}^n_{\AN}$ be the estimated set of causal ancestors given in \eqref{eq:off_icp} and assume that the invariance tests $\psi^S$ used in \eqref{eq:off_icp} have pointwise asymptotic level $\alpha \in (0, 1)$. It then holds that
    \begin{equation}
        \liminf_{n \to \infty} \P(\hat{S}^n_{\AN} \subseteq \AN(R)) \geq 1 - \alpha.
    \end{equation}
\end{proposition}
\begin{proof}
    See Appendix \ref{proof:prop:off_icp}.
\end{proof}

\section{Simulation Experiments}
\label{sec:simulation}
To verify our theoretical findings we perform two simulation experiments, where we consider a linear multi-environment contextual bandit setting similar to Example~\ref{ex:cf_cb} with the following SCM $\mathcal{S}(\pi, e)$ (which induces the graph shown in Figure~\ref{fig:example1b}):
\begin{equation*}
\begin{gathered}
    U \coloneqq \epsilon_U, \quad
    X^1 \coloneqq \gamma_e U + \epsilon_{X^1}, \quad
    X^2 \coloneqq \alpha_e + \epsilon_{X^2}, \\
    A \sim \pi(A \mid X^1, X^2), \quad
    R \coloneqq \beta_{A,1}X^2 + \beta_{A,2}U + \epsilon_R,
\end{gathered}
\end{equation*}
where $\epsilon_U, \epsilon_{X^1}, \epsilon_{X^2}, \epsilon_{R} \sim \mathcal{N}(0,1)$, $A$ takes values in the space $\{a_1, \ldots, a_L\}$, $\gamma_e$ and $\alpha_e$ are parameters that depend on the environment $e$, and $\beta_{a_1, 1}, \dots, \beta_{a_L, 1}, \beta_{a_1, 2}, \dots, \beta_{a_L, 2}$ are  parameters that are fixed across environments. Appendix~\ref{app:sim:datagen} contains details on how the parameters are chosen in the experiments. The code for all the experiments is available at \url{https://github.com/sorawitj/invariant-policy-learning}.

\subsection{Generalization and Invariance}
\label{sec:exp-oracle}
We first consider an oracle setting, where we know a priori which subsets are invariant. From our data-generating process, it follows that $\{X^2\}$
is the only invariant set. We then compare an invariant policy which depends only on $X^2$ with a policy that uses both $X^1$ and $X^2$. We train both policies on a dataset of size $10'000$ obtained from multiple training environments under a fixed initial policy $\pi^0$ (see Appendix~\ref{app:sim:init_policy}). In both cases,
we employ a weighted least squares
to estimate the expected reward $\EX[R \mid A, X^S]$, where $S$ is the subset that the policy uses. The policy then takes a greedy action w.r.t.\ the estimated expected reward, i.e., $\argmax_{a} \hat{\EX}[R \mid A=a, X^S]$ (see Section~\ref{sec:algo_inv_policy}). 
Then we evaluate both policies on multiple unseen environments
and 
compute the regret with respect to the policy that is optimal in each of the unseen environments. Figure~\ref{fig:oracle} 
shows the results.
Each data point represents the evaluation on an unseen environment. The $y$-axes show the regret value and the $x$-axes 
display the distance from each unseen environment to the training environments (the distance is computed as the $\ell^2$-distance between the average value of the pairs $(\gamma_{e_{tr}}, \alpha_{e_{tr}})$ in the training environments and the pair $(\gamma_e, \alpha_e)$ in the unseen test environment). 
The plot shows that the worst-case behavior of the invariant policy is smaller than the non-invariant one. In particular, for environments different from the training environments the gain can be significant. This empirically supports our result 
of Theorem~\ref{thm:inv_policy_NEW}.
\begin{figure}[t!]
    \centering
    \includegraphics[scale=.48]{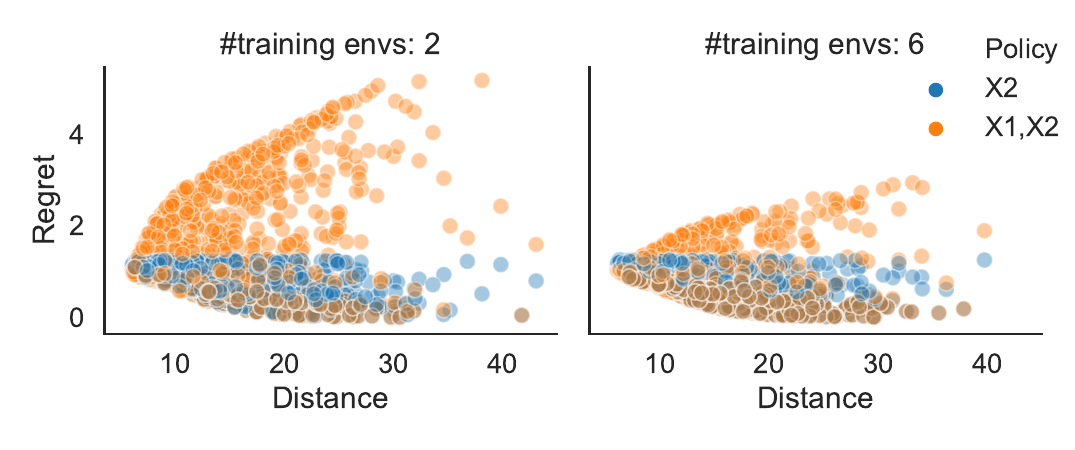}
    \caption{The generalization performance (in terms of regret) of the policy based on an invariant set $\{X^2\}$ and the policy based on a non-invariant set $\{X^1,X^2\}$. The left and the right plot show the results when the training environments consist of two and six different environments, respectively. In both cases, the worst-case regret for the invariant policy is upper bounded while this is not the case for the non-invariant policy.}
    \label{fig:oracle}
\end{figure}
\subsection{Learning Invariant  Policies}\label{sec:experiment_learning_invariant_policies}
In practice, we do not know in advance which sets are invariant.
We now aim to find an invariant policy from a dataset generated under an initial policy $\pi^0$ which takes both $X^1$ and $X^2$ as input.
To do so, we employ the method
proposed in Section~\ref{sec:resampling}
for testing invariance under distributional shifts. 
More precisely, we generate a dataset of size $n$ from multiple training environments under the initial policy $\pi^0$ and apply the off-policy invariance test (see Section~\ref{sec:target_test})
to verify the invariance property of each subset in $\{\varnothing, \{X^1\}, \{X^2\}, \{X^1, X^2\}\}$. We repeat the experiment $500$ times and plot the acceptance rates at various sample sizes ($n = 1'000, 3'000, 9'000, 27'000, 81'000$)
(these numbers denote the total sample size, that is, number of observations, summed over all environments).
The resulting acceptance rates are shown in Figure~\ref{fig:invarint_test}. 
Our method yields high acceptance rates for the set $\{X^2\}$, which  
indeed is invariant,
while the acceptance rates for other sets gradually decrease as the sample size increases. Furthermore, we can see that our test is more powerful when the number of training environments increases (keeping the total number of observations fixed). Our test is conservative (the acceptance rate is above the 95\% level in the left plot) because the target test is not exact (the true conditional expectation is not given). In Appendix~\ref{app:invariant_test_true_conditional}, we conduct the same experiment with an exact test, using the true conditional expectation, which shows the correct level.
\begin{figure}[t!]
    \centering
    \includegraphics[scale=.455]{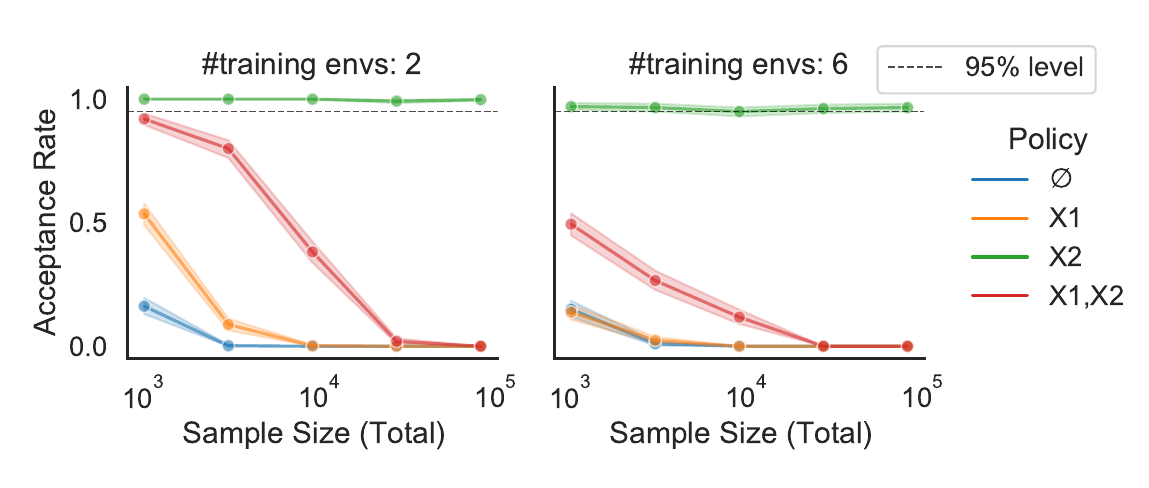}
    \caption{Acceptance rates for the off-policy invariance test proposed in Section~\ref{sec:resampling} for varying sample sizes. 
    With increasing sample size, only the invariant set $\{X^2\}$ is accepted. Here, more environments (right) seems to yield higher test power than fewer environments (left). 
    }
    \label{fig:invarint_test}
\end{figure}

\section{Warfarin Dosing Case Study}\label{sec:warfarin}
We evaluate our proposed approach on the clinical task of warfarin dosing. Warfarin is a blood thinner medicine prescribed to patients at risk of blood clots. The appropriate dose of warfarin varies from patient to patient depending on various factors such as demographic and genetic information \citep{international2009estimation}. Our case study is based on the International Warfarin Pharmacogenetics Consortium (IWPC) dataset \citep{international2009estimation} which consists of $5'700$ patients who were treated with warfarin, collected from 21 research groups on 4 continents. The IWPC dataset contains the optimal dose of warfarin for each of the patients as well as their information on demographic characteristics, clinical and genetic factors. The warfarin dosing problem has been used in a number of previous works evaluating off-policy learning algorithms \citep{kallus2018policy, NEURIPS2018_f337d999, zenati2020counterfactual}. Similarly to these works, we formulate the warfarin dosing problem as a multi-environment contextual bandit problem as follows.
\begin{itemize}
    \item The covariates $(X)$ are patient-level features including demographic, clinical and genetic factors.
    \item The actions $(A)$ are recommended warfarin doses output by a policy. We discretize the actions into three equal-sized buckets (low, medium, high) based on the quantiles of the optimal warfarin dose.
    \item The reward $(R)$ depends on the recommended dose and the optimal dose: For each patient $i$, the reward $R_i(a)$ for an action $a \in \{\text{low}, \text{medium}, \text{high}\}$ is computed as
    \begin{equation}
        R_i(a) \coloneqq \abs{Y_i - m(a)}, \label{eq:exp:reward_function}
    \end{equation}
    where $Y_i$ is the optimal warfarin dose for a patient $i$ and $m(a)$ is a median value of the optimal warfarin doses within the bucket $a$. Here, we assume that neither the reward function nor the optimal warfarin doses are known to the agent. Instead, for each patient $i$, only the reward for the action $A_i$ is observed, i.e., $R_i \coloneqq R_i(A_i)$.
    \item The environments $(\mathcal{E})$ are proxies for continents.
    The continent information is not directly contained in the dataset, but we create proxies for the continent by clustering the 21 research groups into 4 clusters based on their proportion of the patients' race within each group. We believe that the resulting clusters roughly correspond to 4 different continents.
\end{itemize}
To reduce the search space, we select the top 10 features that are most predictive for the optimal warfarin dose using the permutation feature importance method \citep{breiman2001random}. The top 10 features include 4 demographic variables, 4 clinical factors, and 2 genetic factors. 

We consider two experimental setups to illustrate the benefits of our invariant learning approach. In the first setup, we directly
apply our method to the IWPC dataset. Here, including invariance does not seem necessary in that our method performs similarly to other baselines (but not worse). It does, however, generate some causal insight into the problem. The second setup is a semi-real setting, where we introduce an artificial, non-invariant confounder.

\begin{figure}[t!]
    \centering
    \includegraphics[scale=.5]{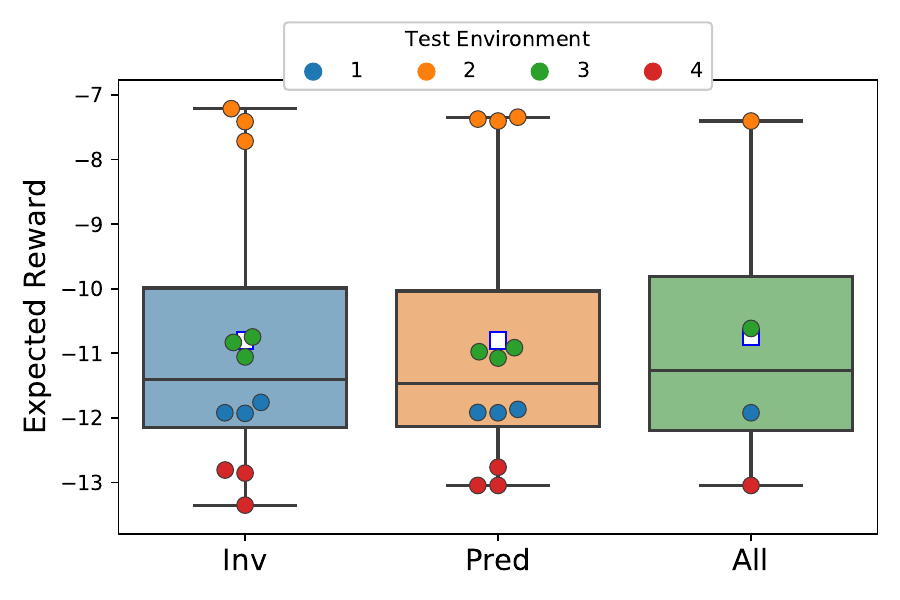}
    \caption{Empirical results on the original dataset. Each point represents the expected reward of a policy on the corresponding test environment. The square points represent the mean value of the expected rewards. In this setup, all candidate methods yield similar performances on all of the test environments.
    This result indicates that the test environments may not be significantly different from the training environments.}
    \label{fig:warfarin_no_hidden}
\end{figure}

We now outline our first experimental setup and the results. We first generate training data $\{(X_i, A_i, R_i, e_i)\}_{i=1}^n$ by drawing actions $A_i$ from a policy $\pi^0 \in \Pi^{\text{BMI}}$ that is constructed from linear regression $Y_i \approx f(X^\text{BMI}_i)$ of the optimal dose onto the BMI (see Appendix~\ref{app:warfarin:init_policy} for more details). 
\subsection{Candidate Methods}
\label{sec:warfarin:candidate_methods}
Using the generated training data, we empirically compare the performance of the following policy learning methods:
\begin{itemize}
    \item Invariant Policy Learning (Inv): This is our proposed method. We first perform the off-policy invariance test using the test described in Section~\ref{sec:target_policy:uniform} to search for potential invariant sets. We then take the top 20 sets with the largest p-values $\mathbf{S}^{20}_{\inv}$ as the candidate invariant sets. For each $S$ in $\mathbf{S}^{20}_{\inv}$, we fit the policy optimization algorithm described in Section~\ref{sec:algo_inv_policy} with $X^S$ as the covariates (the same algorithm is also used in other candidate methods below). Lastly, we select the top 3 sets that yield the largest expected rewards (computed using 5-fold cross-validation).
    \item Predictive Policy Learning (Pred): This method serves as a baseline for policy learning that solely maximizes the expected reward. 
    For each subset $S$, we fit the policy optimization algorithm with $X^S$ as the covariates. We then take the policies corresponding to the top 3 sets with the largest expected rewards.
    \item All Set Policy Learning (All): This method serves as another baseline where we take all of the patient's features and fit the policy optimization algorithm.
\end{itemize}
\subsection{Evaluation Setup \& Results}
\label{sec:warfarin:eval}
We compare the policy learning methods using the following `leave-one-environment-out' evaluation procedure.
\begin{enumerate}
    \item Select $e \in \mathcal{E} = \{1,\dots,4\}$ as a test environment. Split the training data into $D^{\tst} \coloneqq \{(X_i, A_i, R_i, e_i)\}_{i=1}^{n_{\tst}}$, where $e_i = e$ and $D^{\tr} \coloneqq \{(X_i, A_i, R_i, e_i)\}_{i=1}^{n_{\tr}}$, 
    where $e_i \in \{1,\dots,4\}\setminus\{e\}$.
    \item Using $D^{\tr}$, train the policies with candidate methods detailed in Section \ref{sec:warfarin:candidate_methods}.
    \item Evaluate the fitted policies by computing the expected reward on $D^{\tst}$ using the true reward function \eqref{eq:exp:reward_function}.
\end{enumerate}
We repeat the above procedure for each $e \in \mathcal{E}$ and
display the evaluation result in Fig~\ref{fig:warfarin_no_hidden}. The performances of all candidate methods are similar.
Even though the proposed invariant approach does not yield a higher reward compared with the baselines, it does not worsen the performance, either. This suggests that we can gain the stability benefit of an invariant  policy without having to sacrifice predictiveness. Indeed, the stability benefit could prevent the learned invariant policy from being suboptimal when a new test environment is sufficiently different from the training environments as we show in Section~\ref{sec:warfarin:semi-real}

\subsection{Analyzing Invariant Sets}
In addition to learning an optimal invariant policy, we can
use the invariance-based approach to further analyze the dependence between the patient's features and the reward as discussed in Section~\ref{sec:off_icp}. In particular, we apply the off-policy invariant causal prediction algorithm (see Algorithm~\ref{alg:off_icp} in Appendix~\ref{app:alg:off_icp}) to find potential causal ancestors of the reward.
On this dataset, with a confidence level of 5\%, the algorithm returns the empty set, which can happen if the covariates are highly correlated, for example \cite{HeinzeDeml2017}.
Nonetheless, we can still extract more information by obtaining the defining sets (see Section~2.2 in \cite{HeinzeDeml2017}). The resulting defining set of size 2 is 
 \{Race, VKORC1\} (see Appendix~\ref{app:warfarin:defining_set} for more details on the variables). These variables are potential causal ancestors in the sense that at least one variable in these sets is a causal ancestor. 
 
\begin{figure}[t!]
    \centering
    \includegraphics[scale=.5]{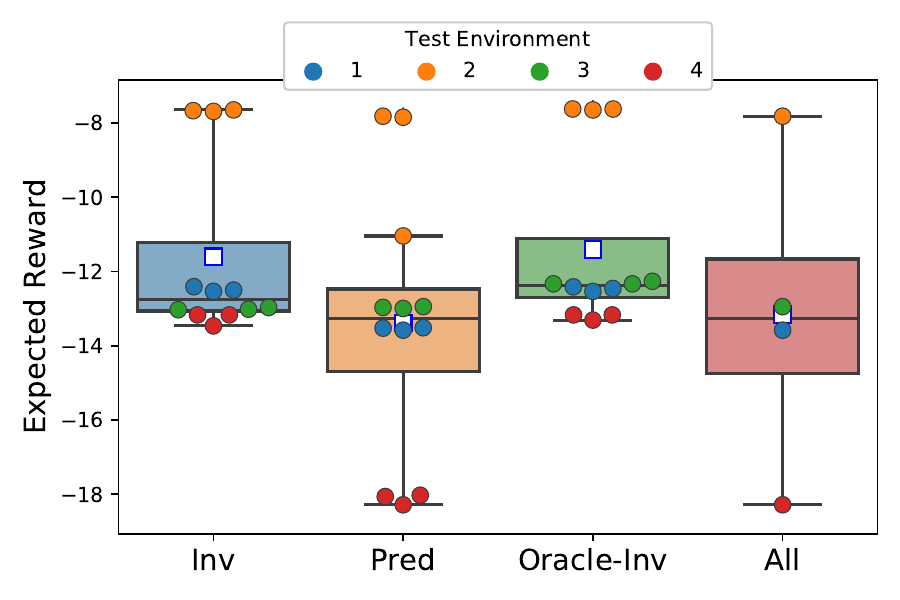}
    \caption{Empirical results on policy learning with a non-invariant predictor (see Section~\ref{sec:warfarin:semi-real}). Each point represents the expected reward of a policy on the corresponding test environment. 
    In this setup, our proposed method (Inv) outperforms the two baselines (Pred and All) that ignore the environment structure, while approaching the performance of the invariant oracle (Oracle-Inv).}
    \label{fig:warfarin_hidden}
\end{figure}

\subsection{Semi-real experiment} %
\label{sec:warfarin:semi-real}
To further illustrate the benefits of the invariance-based learning approach, we consider a semi-real setup where we introduce hidden variables and a non-invariant predictor. We remove the two genetic factors from the patient's features and create a non-invariant predictor that depends on those two factors as follows.

We first fit a linear regression to estimate the optimal warfarin dose from the genetic factors and denote the resulting coefficients by $\beta$. To mimic environmental perturbations, we  perturb $\beta$ depending an environment $e \in \mathcal{E}$ resulting in $\beta_e \coloneqq \gamma_e \beta$, where $\gamma_e$ is an environment-specific parameter. We define the non-invariant predictor in the environment $e \in \mathcal{E}$ as $X^{\text{n-inv}} \coloneqq {X^{G}}^{\top} \beta_e$, where $X^G$ are the two genetic features. We then add $X^{\text{n-inv}}$ as part of the patient's features and remove $X^{G}$.
The training data are generated in a similar fashion as in the first setup, except that the initial policy does not only depend on the BMI score $X^\text{BMI}$ but also on the non-invariant predictor $X^{\text{n-inv}}$.

In addition to the candidate methods described in Section~\ref{sec:warfarin:candidate_methods}, we introduce an additional baseline for this setup.
\begin{itemize}
    \item Oracle invariant Policy (Oracle-Inv): By construction, we know that $X^{\text{n-inv}}$ is a strongly non-$d$-invariant variable (see Definition~\ref{def:conf-removing}). This method serves as an oracle version of the invariant policy learning method by searching for the top 3 sets that do not contain $X^{\text{n-inv}}$ such that their corresponding policies yield the largest expected reward (the procedure is similar to the Pred method with $X^{\text{n-inv}}$ being removed).
\end{itemize}

We evaluate the candidate methods using a similar procedure as described in Section~\ref{sec:warfarin:eval}. Figure~\ref{fig:warfarin_hidden} illustrates the evaluation result. Our proposed method (Inv) yields a higher expected reward than the two baselines on most of the test environments. 
This is because the two baselines ignore the environment structure and use information from $X^{\text{n-inv}}$ in their resulting policies, while the invariant method uses the invariance test to remove this non-invariant proxy variable. Furthermore, the performance of our proposed method is almost on par with the invariant oracle (Oracle-Inv), except for the test environment $e = 3$,
in which our approach is unable to ignore the 
non-invariant predictor, possibly because the non-invariance that would be implied by 
Assumption~\ref{assm:mean_ff_NEW} may not be strong enough (for our test) when $\mathcal{E}^{\obs} = \{1, 2, 4\}$.

\section{Conclusion}
\label{sec:conclusion}
This paper tackles
the problem of environmental shifts in offline contextual bandits from a causal perspective. We introduce a framework for multi-environment contextual bandits that is based on structural causal models and frame the environmental shift problem as a distributionally robust objective over environments that are induced by different perturbations on the covariates.
We prove that if there are no unobserved confounders, taking into account causality and invariance is not necessary for obtaining the distributionally robust policies. However, causality and invariance can become relevant when not all variables are observed. To tackle settings with unobserved confounders, we adapt invariance-based ideas from causal inference to the proposed framework and introduce the notion of invariant policies. Our theoretical results show that under certain assumptions an invariant policy that is optimal on the training environments is also optimal on all unseen environments, and therefore distributionally robust. We further provide a method for finding invariant policies based on an off-policy invariance test. It can be combined with any existing policy optimization algorithm to learn an optimal invariant policy. We believe that our contributions shed some light on what causality can offer in contextual bandit and, more generally, in reinforcement learning problems.

For future work, there are several directions that would be interesting to investigate. One direction is to explore the use of invariance-based ideas in the adaptive setting, in which the goal of an agent is to
optimally adapt to a changing environment. Learning agents may require fewer and safer explorations in a new environment if they carry over invariance information from previous environments. It may further be possible to extend invariance-based ideas from the contextual bandit setting to the full reinforcement learning problem with long-term consequences and state dynamics. Although some previous works have explored this direction \citep{zhang2020invariant, sonar2020invariant}, we believe that the connections with respect to causality and invariance are not yet fully understood. In the i.i.d.\ setting, recent work has investigated trading off invariance and predictability \citep{rothenhausler2021anchor, pfister2021SR, jakobsen2020distributional, oberst2021regularizing, saengkyongam2022exploiting}. We believe that a similar idea can be applied to contextual bandit and reinforcement learning problems. Lastly, if one can gain additional knowledge of the test environments, one may aim to optimize objectives other than the worst-case performance which could lead to a different class of generalization guarantees.

This paper considers invariance as a dichotomous property and could be  a first step towards 
using invariance-based
ideas for building safer and more robust adaptive learning systems.

\section*{Acknowledgments}
SS, NT, and JP were supported by a research grant (18968) from VILLUM FONDEN and JP was, in addition, supported by the Carlsberg Foundation. NP was supported by a research grant (0069071) from Novo Nordisk Fonden. We thank Steffen Lauritzen for helpful discussions.

\bibliography{refs}
\clearpage

\begin{appendices}
\section{Pearl's $d$-separation}\label{app:d-sep}

\begin{definition}[Pearl’s $d$-separation \citep{pearl2009causality, peters2017elements}]
    Let $\mathcal{G}$ be directed acyclic graph (DAG) with nodes $\mathbf{V}$. Let $V_i, V_m \in \mathbf{V}$ and $\mathbf{S} \subseteq \mathbf{V} \setminus \{V_i, V_m\}$. A path between nodes $V_i$ and $V_m$ is said to be \emph{blocked by a set $\mathbf{S}$} if there exists a node $V_k \in \mathbf{V}$ such that one of the following holds:
    \begin{enumerate}
        \item $V_k \in \mathbf{S}$ and 
            \begin{align*}
                & V_{k-1} \rightarrow V_k \rightarrow V_{k+1} \\
                \text{or } & V_{k-1} \leftarrow V_k \leftarrow V_{k+1} \\
                \text{or } & V_{k-1} \leftarrow V_k \rightarrow V_{k+1},
            \end{align*}
        \item neither $V_k$ nor any of its descendants is in $\mathbf{S}$ and $$V_{k-1} \rightarrow V_k \leftarrow V_{k+1}.$$
    \end{enumerate}
    For any three disjoint subsets $\mathbf{A}, \mathbf{B}, \mathbf{S} \subseteq \mathbf{V}$ of nodes in $\mathcal{G}$, we say that $\mathbf{A}$ and $\mathbf{B}$ are \emph{$d$-separated} by $\mathbf{S}$, denoted by $\mathbf{A} \ci_{\mathcal{G}} \mathbf{B} \mid \mathbf{S}$ if every path between nodes in $\mathbf{A}$ and $\mathbf{B}$ is blocked by $\mathbf{S}$.
\end{definition}
(This formulation is taken from \cite{peters2017elements}.)

\section{Policy Learning without Unexplained Environment Shifts}\label{app:setting2_continue}

This section presents an assumption under which
it is not beneficial to explicitly take into account the environment structure. In particular, simply pooling the data from all training environments and applying a standard value-based policy learning algorithm yields a solution to \eqref{eq:maximin}. 
This result sheds light on the role of causality and invariance in contextual bandits and reinforcement learning. The relevant assumption is the following.
\begin{assumption}
\label{assm:no_uc}
Assume that $R \ci_{\mathcal{G}} e \mid X$.
\end{assumption}
In words, under Assumption~\ref{assm:no_uc}, the influence of the environment $e$ on the reward $R$ can be fully explained by the observed covariates $X$. This assumption holds, for example, if there are no hidden confounders between $X$ and $R$ that are directly affected by $e$.

The following proposition shows that under Assumption~\ref{assm:no_uc} there is a population optimal policy that does not depend on the environments. In particular, this optimal policy can be learned from data obtained in any observed subset of the environments $\mathcal{E}^{\obs} \subseteq \mathcal{E}$.

\begin{proposition}
\label{prop:no_uc}
    Assume Setting \ref{setting:scmfixed} and Assumption~\ref{assm:no_uc}.
    Let $\mathcal{E}^{\obs} \subseteq \mathcal{E}$ be a non-empty subset of observed environments and $\pi^*\in\Pi$ be a policy such that for all $x\in\mathcalbf{X}$ and all $a\in\mathcal{A}$ 
        \begin{equation}
    \label{eq:optimal_unconf_policy}
        \pi^*(a | x) > 0 \implies 
        a  \in \argmax_{a^\prime \in \mathcal{A}} Q^{\mathcal{E}^{\obs}}(x,a^\prime),
    \end{equation}
    where 
 $Q^{\mathcal{E}^{\obs}}(x,a) \coloneqq  \frac{1}{\abs{\mathcal{E}^{\obs}}}\sum_{e \in \mathcal{E}^{\obs}} \EX^{\pi_{a}, e}[R \mid X=x]$
    and $\pi_{a}$ is the policy that always selects $a$. Then,
    \begin{equation*}
        \pi^* \in \argmax_{\pi \in \Pi} V^{\mathcal{E}}(\pi),
    \end{equation*}
    i.e., $\pi^*$ is a solution to the maximin problem  \eqref{eq:maximin}.
\end{proposition}
\begin{proof}
See Appendix \ref{proof:thm:no_confound}.
\end{proof}
This type of generalization is well-established in the context of regression. In the contextual bandit setting the value function $\EX^{\pi, e}[R]$ changes across environments, so instead one needs to use that the Q-function $Q^{e}(x, a)=\EX^{\pi_a,e}[R\mid X=x]$ does not change across environments $e\in\mathcal{E}$ and then argue that this implies that the optimal policy remains the same in each environment.
Proposition~\ref{prop:no_uc} suggests that we can estimate an optimal policy by pooling the data from training environments and applying a standard value-based policy learning algorithm. This is indeed the case. 

Let $\widehat{Q}_n$ be an estimator of the conditional mean $\EX^{\pi_a}[R\mid X]$ that is based on $n$ independent observations $(X_i,A_i,R_i)$ from potentially different environments. The following proposition shows that such an approach indeed yields a consistent estimate of an optimal policy given that $\widehat{Q}_n$ is consistent.
\begin{proposition}
\label{prop:no_confound_NEW}
    Assume Setting~\ref{setting:scmfixed}
    and Assumption~\ref{assm:no_uc}. Let $\widehat{Q}_n$ be a uniformly consistent estimator of $Q^{\mathcal{E}^{\obs}}$, that is, 
    for all $a\in\mathcal{A}$ it holds that
    \begin{equation*}
        \lim_{n\rightarrow\infty}\EX_{D}\left[\sup_{x\in\mathcalbf{X}}\big\vert \widehat{Q}_n(x, a)-Q^{\mathcal{E}^{\obs}}(x,a)\big\vert\right]=0,
    \end{equation*}
    where $\EX_{D}$ is an expectation over the $n$ observations $(X_i, A_i, R_i)$ used to estimate $\widehat{Q}_n$.
    Let $\widehat{\pi}_n$ be any policy that maximizes $\widehat{Q}_n$, i.e.,
    for all $x\in\mathcalbf{X}$ and all $a\in\mathcal{A}$ it holds that
        \begin{equation*}
        \widehat{\pi}_n(a | x) > 0 \implies 
        a  \in \argmax_{a^\prime \in \mathcal{A}} \widehat{Q}_n(x,a^\prime).
    \end{equation*}
    Then, %
    the robust policy value converges towards its optimal value, that is
    \begin{equation*}
        \lim_{n\rightarrow\infty}\EX_{D}\left[\big\vert V^{\mathcal{E}}(\widehat{\pi}_n)-\max_{\pi \in \Pi} V^{\mathcal{E}}(\pi)\big\vert\right]=0.
    \end{equation*}
\end{proposition}
\begin{proof}
    See Appendix \ref{proof:prop:no_confound}.
\end{proof}
The same argument would work if instead of pooling, one considers only a single environment. In practice, however, one would make use of all available data. Whether it is possible to construct a uniformly consistent estimator $\widehat{Q}_n$ depends on the model class that can be assumed in the structural assignment of $R$, and on the policy used in generating the observations. For example, in the case of additive confounding and noise such as %
$f(X,U,A,\epsilon_R)=f_1(X, A)+f_2(U,\epsilon_R)$ 
with $f_1$ and $f_2$ in some function classes %
and a policy $\pi$ that has full support, (i.e., $\forall a \in \mathcal{A}, x \in \mathcal{X}: \pi(a \mid x) > 0$), 
one can consider a least squares estimator of the form
\begin{align*}
    \widehat{Q}^{\mathcal{E}^{\obs}}_n \coloneqq  \argmin_{f_1 %
    } \frac{1}{n} \sum_{i=1}^{n}(f_1(X_i, A_i) - R_i)^2.
\end{align*}
The assumptions of Proposition~\ref{prop:no_confound_NEW} are then satisfied
under further constraints on the function class and noise distributions, e.g., linear functions, Gaussian noise, and bounded domains.

\section{Backdoor Adjustment and Reweighting Approaches} \label{app:backdoor_vs_rew}

In our work, we take a reweighting approach for estimating the causal estimand (as presented in Section~\ref{sec:algo_inv_policy}) as opposed to using the backdoor adjustment formula. We have decided to use the reweighting approach for several reasons.
\begin{enumerate}[label=(\roman*)]
    \item To test the $\mathcal{E}^{\obs}$-invariance hypothesis, the reweighting approach allows us to use the resampling procedure \citep{thams2021statistical} that allows to apply arbitrary invariance tests while perserving pointwise asymptotic level. In contrast, it is not immediate how one can develop an asymptotically valid test based directly on the formula given by the backdoor adjustment.
    \item It has been shown in several works (see e.g., \cite{jung2020wbe}, \cite{jung2020wer}) that the reweighting approach may be more efficient in many settings than estimators derived from the backdoor adjustment formula.
\end{enumerate}

It has been shown in \cite{jung2020wer} that the two formulations imply equivalent identification results for causal effects. More specifically, any causal effect that is identified by do-calculus \citep{pearl1995causal} (e.g., by the backdoor adjustment formula) can also be identified via empirical risk minimization in a reweighted distribution. For completeness, we make this connection explicit for the setting considered in this work.

Consider the observed variables $(X, A, R)$ in Setting~\ref{setting:scmfixed} with a fixed initial policy $\pi^0$ in a fixed environment $e \in \mathcal{E}$. For notational convenience, we denote the density $p^{\pi^0,e}$ simply by $p$ and assume, without affecting the generality of the result, that all the variables are discrete. Fix $S \subseteq \{1,\dots,d\}$ and define $N \coloneqq \{1,\dots,d\} \setminus S$ as the complement of $S$. We are interested in the causal quantity $p^{\pi_a}(r | x^S)$ or equivalently, using do-notation, $p(r | \doo(a), x^S)$. (Note that the causal quantity $p(r | \doo(a), x^S)$ is the same for any $\pi^0$). Since the graph $\mathcal{G}$ in Setting~\ref{setting:scmfixed} satisfies (i) $X$-variables are non-descendants of $A$ and (ii) $\PA(A) \subseteq X$, we have that $X$ satisfies the backdoor criterion and hence the causal quantity $p(r | \doo(a), x^S)$ is identifiable via the adjustment formula (see \cite[Section 3.5]{pearl2016primer})
\begin{equation}
    p(r|\doo(a),x^S) = \sum_{x^N} p(r|a,x^N,x^S)p(x^N|x^S). \label{eq:bd}
\end{equation}
We will now show that $p(r|\doo(a),x^S)$ can also be expressed in terms of a reweighted distribution. To this end, we define a weighting factor $r(x^N, x^S, a) \coloneqq \frac{p(a)}{p(a|x^N, x^S)}$, and define a target distribution $q(x^N, x^S, a, r) \coloneqq r(x^N, x^S, a)p(x^N, x^S, a, r)$.
The following statements hold.
\begin{itemize}
\item Using an appropriate factorization, we get
\begin{align*}
    q(r,a,x^N,x^S) &= r(x^N, x^S, a)p(x^N, x^S, a, r) \\
    &= p(r|a,x^N,x^S)p(a)p(x^N|x^S)p(x^S). \numberthis \label{eq:ipw1}
\end{align*}
\item From \eqref{eq:ipw1}, we have
\begin{align*}
    q(x^S) &= \sum_{r, a, x^N} p(r|a,x^N,x^S)p(a)p(x^N|x^S)p(x^S) \\
    &= p(x^S). \numberthis \label{eq:ipw2}
\end{align*}
\item From \eqref{eq:ipw1}, we have
\begin{align*}
    q(a) &= \sum_{r, x^N, x^S} p(r|a,x^N,x^S)p(a)p(x^N|x^S)p(x^S) \\
    &= p(a). \numberthis \label{eq:ipw3}
\end{align*}
\item From \eqref{eq:ipw1} and \eqref{eq:ipw2}, we have
\begin{align*}
    q(r,a,x^N|x^S) &= q(x^N, x^S, a, r)\frac{1}{q(x^S)} \\
    &= p(r|a,x^N,x^S)p(a)p(x^N|x^S)p(x^S)\frac{1}{p(x^S)} \\
    &= p(r|a,x^N,x^S)p(a)p(x^N|x^S). \numberthis \label{eq:ipw4}
\end{align*}
\item From \eqref{eq:ipw1} and \eqref{eq:ipw2}, we have
\begin{align*}
    q(a|x^S) &= \frac{1}{q(x^S)}q(a,x^S) \\
    &= \frac{1}{p(x^S)} \sum_{r,x^N} q(r,a,x^N,x^S) \\
    &= \frac{1}{p(x^S)} \sum_{r,x^N} p(r|a,x^N,x^S)p(a)p(x^N|x^S)p(x^S) \\
    &= p(a) \\
    &= q(a). \numberthis \label{eq:ipw5}
\end{align*}
\end{itemize}
Now, we show that the backdoor adjustment formula for $p(r|do(a),x^S)$, given in \eqref{eq:bd}, can be expressed in terms of a conditional density in the target distribution $q$ as follows
\begin{align*}
    p(r|do(a),x^S) &= \sum_{x^N} p(r|a,x^N,x^S)p(x^N|x^S) && \text{by \eqref{eq:bd}} \\
    &= \sum_{x^N} p(r|a,x^N,x^S)p(x^N|x^S)\frac{p(a)}{p(a)} \\
    &= \frac{1}{p(a)}\sum_{x^N} q(r, a, x^N|x^S) && \text{by \eqref{eq:ipw4}}\\
    &= \frac{1}{p(a)}q(r, a|x^S) \\
    &= \frac{1}{q(a)}q(r, a|x^S) && \text{by \eqref{eq:ipw3}} \\
    &= q(r|a, x^S). && \text{by \eqref{eq:ipw5}}
\end{align*}

\section{Proofs}
\label{app:proofs}

\subsection{Proof of Proposition~\ref{prop:no_uc}}
\label{proof:thm:no_confound}
\begin{proof}
Let $e\in\mathcal{E}$, $a\in\mathcal{A}$ and $x\in\mathcalbf{X}$ be arbitrary. By the Markov property (see Lemma~\ref{lemma:markov_cond}) we have that $\EX^{\pi_a, e}\big[R \mid X=x \big]$ does not depend on the environment. This, in particular, implies that for all $e \in \mathcal{E}$, all $x \in \mathcalbf{X}$ and all $a \in \mathcal{A}$, it holds that
\begin{align}
    Q^{\mathcal{E}^{\obs}}(x,a) 
    &= \tfrac{1}{\abs{\mathcal{E}^{\obs}}}\sum_{f \in \mathcal{E}^{\obs}} \EX^{\pi_a, f}[R \mid X=x] \nonumber\\
    &= \EX^{\pi_a, e}[R \mid X=x].\label{eq:prop_no_u_indofe}
\end{align}
We thus have for all policies
$\pi\in\Pi$ and for all $x\in\mathcalbf{X}$ that
\begin{align*}
    \max_{a \in \mathcal{A}} Q^{\mathcal{E}^{\obs}}&(x,a) \\
    &= \max_{a \in \mathcal{A}} \EX^{\pi_a, e}[R \mid X=x] \\
    &\geq \sum_{a\in\mathcal{A}} \EX^{\pi_a, e}[R \mid X=x]\pi(a \mid x) \\
    &= \sum_{a\in\mathcal{A}} \EX^{\pi, e}[R \mid X=x, A=a]\pi(a \mid x) \\
    &=  \EX^{\pi,e}[R \mid X=x]. \numberthis \label{eq:prop_no_u_maxineq}
\end{align*}
Next, take the expectation over $X$ on both sides to get
\begin{align*}
    \EX^{e}\big[\max_{a \in \mathcal{A}} Q^{\mathcal{E}^{\obs}}(X,a)]
    &\geq \EX^{e}\big[\EX^{\pi,e}[R \mid X] \big]\\
    &= \EX^{\pi, e} \big[ R \big].
\end{align*}
Finally, taking the infimum over $e \in \mathcal{E}$ leads to
\begin{equation}
    \inf_{e\in\mathcal{E}}\EX^{e}\big[\max_{a \in \mathcal{A}} Q^{\mathcal{E}^{\obs}}(X,a)] \geq \inf_{e\in\mathcal{E}} \EX^{\pi,e} \big[ R \big]. \label{eq:optimal_qq}
\end{equation}
Let $\pi^*$ be a policy such that for all $x\in\mathcalbf{X}$ and all $a\in\mathcal{A}$
\begin{equation}
        \pi^*(a | x) > 0 \implies 
        a  \in \argmax_{a^\prime \in \mathcal{A}} Q^{\mathcal{E}^{\obs}}(x,a^\prime).
\end{equation}
Then $\pi^*$ satisfies, for all $e \in \mathcal{E}$, $$\EX^{\pi^*, e} \big[ R \big]=\EX^{e}\big[\max_{a \in \mathcal{A}} Q^{\mathcal{E}^{\obs}}(X,a)].$$ Therefore \eqref{eq:optimal_qq} implies
$$
\pi^* \in \argmax_{\pi \in \Pi} \inf_{e\in\mathcal{E}} \EX^{\pi,e} \big[ R \big],
$$
which completes the proof of Proposition~\ref{prop:no_uc}.
\end{proof}

\subsection{Proof of Proposition~\ref{prop:no_confound_NEW}}
\label{proof:prop:no_confound}
\begin{proof}
Define for all $n\in\mathbb{N}$ the term
\begin{equation*}
    c(n) \coloneqq \max_{a\in\mathcal{A}}\sup_{x\in\mathcal{X}}|Q^{\mathcal{E}^{\obs}}(x,a)-\widehat{Q}_n(x,a)|.
\end{equation*}
As $\mathcal{A}$ is assumed to be finite and because $\widehat{Q}_n$ is assumed to be uniformly consistent, it holds that
\begin{equation}
    \label{eq:thm1_unifrom_consistent}
    \lim_{n\rightarrow\infty}\EX_{D}[c(n)]=0.
\end{equation}
Moreover, as shown in \eqref{eq:prop_no_u_indofe}, in the proof of Proposition~\ref{prop:no_uc}, we know that for all $e\in\mathcal{E}$, all $a\in\mathcal{A}$ and all $x\in\mathcal{X}$ it holds that
\begin{equation*}
    Q^{\mathcal{E}^{\obs}}(x,a)=\EX^{\pi_a, e}[R\mid X=x].
\end{equation*}
This implies that for all $x\in\mathcalbf{X}$ and all $e\in\mathcal{E}$ it holds that
\begin{align}
    &\EX^{\widehat{\pi}_n, e}[R\mid X=x]\nonumber\\
    &\quad=\sum_{a\in\mathcal{A}}\EX^{\pi_a, e}[R \mid X=x]\widehat{\pi}_n(a|x)\nonumber\\
    &\quad=\sum_{a\in\mathcal{A}}Q^{\mathcal{E}^{\obs}}(x,a)\widehat{\pi}_n(a|x)\nonumber\\
    &\quad=\sum_{a\in\mathcal{A}}\widehat{Q}_n(x,a)\widehat{\pi}_n(a|x)\nonumber\\
    &\qquad\quad +\sum_{a\in\mathcal{A}}(Q^{\mathcal{E}^{\obs}}(x,a)-\widehat{Q}_n(x,a))\widehat{\pi}_n(a|x).\label{eq:pluginhere}
\end{align}
Each of the sums only contains one terms, since $\widehat{\pi}_n$ puts all mass on a single action.
Next, observe that
\begin{align}
    &\left\vert\sum_{a\in\mathcal{A}}(Q^{\mathcal{E}^{\obs}}(x,a)-\widehat{Q}_n(x,a))\widehat{\pi}_n(a|x)\right\vert\nonumber\\
    &\quad\leq\sum_{a\in\mathcal{A}}\left\vert Q^{\mathcal{E}^{\obs}}(x,a)-\widehat{Q}_n(x,a)\right\vert\widehat{\pi}_n(a|x)\nonumber\\
    &\quad\leq c(n)\label{eq:part1plugin}
\end{align}
and
\begin{align}
    &\sum_{a\in\mathcal{A}}\widehat{Q}_n(x,a)\widehat{\pi}_n(a|x)\nonumber\\
    &\quad=\max_{a\in\mathcal{A}}\widehat{Q}_n(x,a)\nonumber\\
    &\quad=\max_{a\in\mathcal{A}}Q^{\mathcal{E}^{\obs}}(x,a)\label{eq:part2plugin}\\
    &\qquad\qquad+ (\max_{a\in\mathcal{A}}\widehat{Q}_n(x,a)-\max_{a\in\mathcal{A}}Q^{\mathcal{E}^{\obs}}(x,a)).\nonumber
\end{align}
Using 
\eqref{eq:pluginhere}, \eqref{eq:part1plugin} and \eqref{eq:part2plugin}  together with the triangle inequality yields
\begin{align*}
    &\left\vert \EX^{\widehat{\pi}_n, e}[R\mid X=x] - \max_{a\in\mathcal{A}}Q^{\mathcal{E}^{\obs}}(x,a)\right\vert\\
    &\quad=\Big\vert\max_{a\in\mathcal{A}}\widehat{Q}_n(x,a)-\max_{a\in\mathcal{A}}Q^{\mathcal{E}^{\obs}}(x,a)\\
    &\qquad\qquad + \sum_{a\in\mathcal{A}}(Q^{\mathcal{E}^{\obs}}(x,a)-\widehat{Q}_n(x,a))\widehat{\pi}_n(a|x)\Big\vert\\
    &\quad\leq 2c(n).
\end{align*}
This in particular implies that for all $e\in\mathcal{E}$ and all $x\in\mathcalbf{X}$ it holds that
\begin{equation*}
    \max_{a\in\mathcal{A}}Q^{\mathcal{E}^{\obs}}(x,a)-2c(n)\leq \EX^{\widehat{\pi}_n, e}[R \mid X=x]
\end{equation*}
and that
\begin{equation*}
    \EX^{\widehat{\pi}_n, e}[R \mid X=x] \leq \max_{a\in\mathcal{A}}Q^{\mathcal{E}^{\obs}}(x,a)+2c(n).
\end{equation*}
Taking the expectation over $X$ and the infimum over $\mathcal{E}$ in both inequalities leads to
\begin{equation*}
  V^{\mathcal{E}}(\pi^*) - 2c(n)\leq V^{\mathcal{E}}(\widehat{\pi}_n)
  \leq V^{\mathcal{E}}(\pi^*) + 2c(n),
\end{equation*}
where $\pi^*$ is the policy defined in~\eqref{eq:optimal_unconf_policy}. Finally, we use \eqref{eq:thm1_unifrom_consistent} and Proposition~\ref{prop:no_uc} to get that
    \begin{equation*}
    \lim_{n\rightarrow\infty}\EX_{D}\left[\vert V^{\mathcal{E}}(\widehat{\pi}_n)- \max_{\pi \in \Pi} V^{\mathcal{E}}(\pi)\vert\right]
    \leq \lim_{n\rightarrow\infty}\EX_{D}[4c(n)]=0.
\end{equation*}
This completes the proof of Proposition~\ref{prop:no_confound_NEW}.
\end{proof}

\subsection{Proof of Lemma~\ref{lemma:inv_policy}}
\label{proof:lemma:inv_policy}
The key argument in the proof of Lemma~\ref{lemma:inv_policy} is a Markov property that we formulate as a lemma below.
\begin{lemma}[Extended Markov Property]
\label{lemma:markov_cond}
Assume Setting~\ref{setting:scmfixed}.
For all subsets $S \subseteq \{1,\dots, d\}$, it holds for all $Z\in\{U^1, \dots, U^p, R\}$ that
\begin{equation*}
\begin{gathered}
    Z \ci_{\mathcal{G}^S} e \mid X^{S} \\
    \Longrightarrow \\
    \begin{multlined}[t] \forall \pi\in\Pi^S:
    \P^{\pi, e}_{Z \mid X^S} \text{ is the same for all } e \in \mathcal{E},
    \end{multlined}
\end{gathered}
\end{equation*}
where the symbol $\ci_{\mathcal{G}}$ denotes d-separation in the graph $\mathcal{G}$.
\end{lemma}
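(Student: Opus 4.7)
The plan is to embed the family $\{\mathcal{S}(\pi,e)\}_{e\in\mathcal{E}}$ into a single augmented SCM with an extra source node $E$ whose value selects the structural assignment for the perturbation targets, as suggested in the footnote to Setting~\ref{setting:scmfixed}. Fix $\pi\in\Pi^S$ and let $E$ take values in $\mathcal{E}$ with a distribution that puts positive mass on every element (e.g.\ uniform if $\mathcal{E}$ is finite, or a measure with full support otherwise). The augmented SCM has $E$ as an exogenous source and structural assignments
\begin{equation*}
U := s(X,U,\epsilon_U),\quad X^j := h_E^j(X,U,\epsilon_X) \text{ for } j\in\mathcal{I},\quad X^j := h^j(X,U,\epsilon_X)\text{ for } j\notin\mathcal{I},
\end{equation*}
together with $A := g_\pi(X^S,\epsilon_A)$ (using $\pi\in\Pi^S$) and $R := f(X,U,A,\epsilon_R)$. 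Its induced graph is precisely $\mathcal{G}^S$ with the square node $E$ and the edges $E\to X^j$ for $j\in\mathcal{I}$ made explicit; by the acyclicity assumption in Setting~\ref{setting:scmfixed}, this augmented graph is acyclic.

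First I would verify that for every $e\in\mathcal{E}$ the conditional distribution induced by the augmented SCM satisfies $\P^\pi(\cdot\mid E=e)=\P^{\pi,e}$ on $(X,U,A,R)$. Conditioning on $E=e$ freezes $h_E^{\mathcal{I}}$ to $h_e^{\mathcal{I}}$, all remaining assignments coincide with those of $\mathcal{S}(\pi,e)$, and the noise distribution $Q_\epsilon$ is common across environments, so the induced conditional distribution agrees with the original one. The full-support and dominating-measure assumptions in Setting~\ref{setting:scmfixed} ensure that all relevant conditional expectations are well-defined, and in particular
\begin{equation*}
\EX^\pi[Z\mid X^S=x,E=e]=\EX^{\pi,e}[Z\mid X^S=x]\qquad\text{for all }x\in\mathcalbf{X}^S,\ e\in\mathcal{E}.
\end{equation*}

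Next I would invoke the global Markov property \citep{Lauritzen1990,pearl2009causality} for the augmented SCM: acyclicity together with the factorization implied by the structural assignments yields that d-separation in the augmented graph implies conditional independence. Since the augmented graph agrees with $\mathcal{G}^S$ (the node $E$ already appears in the figures as the square node~$e$), the hypothesis $Z\ci_{\mathcal{G}^S}e\mid X^S$ translates directly into the probabilistic statement $Z\ci E\mid X^S$. This in turn forces the map $e\mapsto \EX^\pi[Z\mid X^S=x,E=e]$ to be constant in $e$, and combined with the identification above this gives $\EX^{\pi,e}[Z\mid X^S=x]=\EX^{\pi,f}[Z\mid X^S=x]$ for all $e,f\in\mathcal{E}$ and $x\in\mathcalbf{X}^S$.

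The main obstacle is the bookkeeping around the augmented SCM: one has to check that when $\pi\in\Pi^S$ the assignment $A:=g_\pi(X^S,\epsilon_A)$ genuinely yields the graph $\mathcal{G}^S$ (so that the d-separation over $\mathcal{G}^S$ matches d-separation in the augmented graph), and that conditioning on $E=e$ recovers $\P^{\pi,e}$ rather than merely a marginal. Once these identifications are established, the rest of the argument is a routine translation between conditional independence and conditional expectations, and the result follows for both choices $Z\in\{U,R\}$ uniformly.
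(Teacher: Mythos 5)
Your proposal follows essentially the same route as the paper's proof: augment the SCM with a random environment node $E$, check that conditioning on $E=e$ recovers $\P^{\pi,e}$, invoke the global Markov property to turn the d-separation $Z\ci_{\mathcal{G}^S}e\mid X^S$ into the conditional independence $Z\ci E\mid X^S$, and read off that $\EX^{\pi}[Z\mid X^S=x,E=e]$ does not depend on $e$. All of the bookkeeping you flag (that $\pi\in\Pi^S$ yields exactly the graph $\mathcal{G}^S$, and that the conditional given $E=e$ is the original environment-$e$ distribution) is handled the same way in the paper.

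The one place your construction falls short is the choice of the law of $E$. You propose a \emph{single} distribution with positive mass on every element ``or a measure with full support otherwise.'' When $\mathcal{E}$ is uncountable no probability measure can put positive mass on every singleton, and full support only guarantees that the conditional expectation $e\mapsto\EX^{\pi}[Z\mid X^S=x,E=e]$ is well defined and constant for \emph{almost every} $e$ with respect to that measure -- not for all $e,f\in\mathcal{E}$, which is what the lemma asserts and what the downstream results need. The paper's proof avoids this by fixing the target environment $e$ first and choosing, for each $e$, a measure $\nu_e$ on the discrete $\sigma$-algebra with $\nu_e(\{e\})>0$; the resulting function $w(x)=\EX[Z\mid X^S=x]$ is then shown not to depend on the choice of $\nu_e$, so ranging over all $e$ gives the pointwise statement. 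Your argument is complete as written whenever $\mathcal{E}$ is countable; for the general case you would need to replace the single full-support measure by this family of measures (or an equivalent device).
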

Using Lemma~\ref{lemma:markov_cond}, the proof of Lemma~\ref{lemma:inv_policy} goes as follows.
\begin{proof}
Let $S^{\inv}$ be a $d$-invariant set and $\pi^{\inv} \in \Pi^{S^{\inv}}$ be a $d$-invariant policy with respect to $S^{\inv}$.
By Definition~\ref{def:inv_sets_NEW}, we have $R \ci_{\mathcal{G}^{S^{\inv}}} e \mid X^{S^{\inv}}$. It then holds by Lemma~\ref{lemma:markov_cond} for all $x \in \mathcal{X}^{S^{\inv}}$ and all $e, f \in \mathcal{E}$ that
\begin{equation*}
\EX^{\pi^{\inv},e}\big[R \mid X^{S^{\inv}} = x \big] = \EX^{\pi^{\inv},f}\big[R \mid X^{S^{\inv}} = x\big].
\end{equation*} 
\end{proof}

\subsubsection{Proof of Lemma~\ref{lemma:markov_cond}}
\begin{proof}
Lemma~\ref{lemma:markov_cond} corresponds to a global Markov property in the augmented graph (including the non-random environment index). Such results are well-established and used in settings in which $\mathcal{E}$ is finite, for example in influence diagrams \citep{dawid2002}. The result, however, also holds for more general, even uncountable $\mathcal{E}$.

To prove this, we first fix $S \subseteq \{1,\dots, d\}$, $\pi\in\Pi^S$ and $Z\in\{U,R\}$. Furthermore, let $e\in\mathcal{E}$, let $\Sigma$ be the discrete $\sigma$-algebra on $\mathcal{E}$ and let $\nu_e:\Sigma\rightarrow [0,1]$ be a probability measure that puts non-zero mass on $\{e\}$.
We can then replace the environment indicator in the SCM $\mathcal{S}(\pi,e)$ with a random variable $E$ with distribution $\nu_e$. This induces a joint distribution over $(E, X, U, A, R)$ that is globally Markov with respect to the graph $\mathcal{G}^S$, where $e$ is now replaced by $E$ (see \cite{pearl2009causality} Thm 1.4.1 or \cite{Lauritzen1990}). Additionally, it satisfies that $(X, U, A, R) \mid E=e$ has the same distribution as the distribution induced by $\mathcal{S}(\pi,e)$. 
Therefore the d-separation $Z \ci_{\mathcal{G}^S} E \mid X^{S}$ (which is implied by $Z \ci_{\mathcal{G}^S} e \mid X^{S}$) implies that the joint distribution $(E, X, U, A, R)$ satisfies the following conditional independence
\begin{equation}
    \label{eq:ci_ze}
    Z \ci E \mid X^{S}.
\end{equation}
Next, denote by $p^{\pi}$ the density of $(E, X, U, A, R)$ with respect to a product measure with the discrete measure as the $E$-component and for all $e\in\mathcal{E}$ denote by $p^{\pi, e}$ the induced density of $\mathcal{S}(\pi, e)$. Then, by construction of the densities and using the conditional independence in \eqref{eq:ci_ze} it holds that
for all $x\in\mathcalbf{X}^S$, all $z\in\operatorname{supp}(Z)$ and all $f\in\mathcal{E}$ with $\nu_e(f)>0$ that
\begin{align*}
    p^{\pi,f}(z\mid X^S=x)
    &=p^{\pi}(z\mid X^S=x, E=f)\\
    &=p^{\pi}(z\mid X^S=x)\\
    &\eqqcolon w_z(x),
\end{align*}
The function $w_z$ 
therefore no longer depends on the environment $f$ nor on $\nu_e$. Since $\nu_e(e)>0$, this in particular implies that for all $x\in\mathcalbf{X}^S$ and all $z\in\operatorname{supp}(Z)$ it holds that
\begin{equation*}
    p^{\pi, e}(z\mid X^S=x)=w_z(x).
\end{equation*}
As this construction works for all $e\in\mathcal{E}$, this completes the proof of Lemma~\ref{lemma:markov_cond}.
\end{proof}

\subsection{Stable Blanket and Invariance}
\label{sec:sb_invariance}
In this section, if not explicitly stated otherwise, all causal relations such as parents, descendants, ancestors etc. refer to the graph $\mathcal{G}$. Moreover, we use the convention that $k\in\text{DE}(X^k)$, where $\text{DE}(X^k)\subseteq\{1,\ldots,d\}$ denotes only the $X$-variable descendants of $X^k$. 
We first define the strongly non-$d$-invariant set: 
\begin{equation*}
   S_{\SNI}\coloneqq \{j\in\{1,\ldots,d\}\mid \exists k\in\text{CI}: j\in\DE(X^k)\},\\
\end{equation*}
where $\text{CI}$ are confounded and directly intervened on nodes (i.e., for $k\in\text{CI}$ there exists $\ell\in\{1,\ldots,p\}$ such that $e\rightarrow X^k \leftarrow \cdots\leftarrow U^{\ell}\rightarrow\cdots \rightarrow R$ in $\mathcal{G}$) and define $S_{\I} \coloneqq\{1,\ldots,d\}\setminus S_{\SNI}$. 
Furthermore, we define $S_R\subseteq\{1,\ldots,d\}$ to be the set of $X$-variables such that $j\in S_R$ if and only if $X^j\rightarrow R$ in $\mathcal{G}$ or there is a directed path $X^j\rightarrow \cdots\rightarrow R$ in $\mathcal{G}$, where $\cdots$ consists of $U$-variables.
The following Lemma will serve as a basis for our proofs of Proposition~\ref{prop:inv_policy_NEW} and Theorem~\ref{thm:inv_policy_NEW}.

\begin{lemma}[properties of $S_{\I}$]
\label{lemma:stable_invariant}
Assume Setting~\ref{setting:scmfixed} and Assumption~\ref{assm:U_causes_R}. Then, for all $S\in\mathbf{S}_{\inv}$, it holds that $S\subseteq S_{\I}$ and if a $d$-invariant set exists, it holds that $S_R\subseteq S_{\I}$, $S_{\I}$ is $d$-invariant and
\begin{equation*}
    j\in S_{\SNI}
    \quad\Longleftrightarrow\quad
    X^j \text{ is strongly non-$d$-invariant.}
\end{equation*}

\end{lemma}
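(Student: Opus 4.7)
My plan is to prove the four assertions sequentially, using Proposition~\ref{prop:inv_set} and Lemma~\ref{lemma:markov_cond} to translate between invariance of a set $S$ and the d-separation $R\ci_{\mathcal{G}^S} e\mid X^S$, so every invariance claim reduces to a graphical argument in $\mathcal{G}^S$. First, for $S\in\mathbf{S}_{\inv}\Rightarrow S\subseteq S_{\NI}$, I would argue by contrapositive: if $j\in S\setminus S_{\NI}$, pick $k\in\text{CI}$ with $j\in\DE(X^k)$; the path $e\to X^k\leftarrow\cdots\leftarrow U^\ell\to\cdots\to R$ guaranteed by $k\in\text{CI}$ lives in $\mathcal{G}^S$, its collider $X^k$ is opened by the descendant $X^j\in X^S$, and its $U$-chain is unconditioned, so $R\not\ci_{\mathcal{G}^S} e\mid X^S$, contradicting invariance.

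Henceforth fix an invariant $S^*\subseteq S_{\NI}$. For $S_R\subseteq S_{\NI}$, if $j\in S_R\setminus S_{\NI}$ with $j\in\DE(X^k)$, $k\in\text{CI}$, I concatenate the directed path witnessing $j\in\DE(X^k)$ with the $U$-only directed path witnessing $j\in S_R$ to obtain an all-forward path $e\to X^k\to\cdots\to X^j\to U^{\ell_1}\to\cdots\to R$ in $\mathcal{G}^{S^*}$. All its intermediate $X$-nodes are descendants of $X^k$, hence outside $S_{\NI}\supseteq S^*$, and all its $U$-nodes are never conditioned on, producing an open path in $\mathcal{G}^{S^*}$ given $X^{S^*}$, contradicting invariance of $S^*$.

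The main obstacle is that $S_{\NI}$ itself is invariant, equivalently $R\ci_{\mathcal{G}^{S_{\NI}}} e\mid X^{S_{\NI}}$. I would rule out every putative open path $\rho: e\to\cdots\to R$ in $\mathcal{G}^{S_{\NI}}$ using two structural facts: (a) any collider $X^j$ on $\rho$ with $j\notin S_{\NI}$ is un-openable, because transitivity keeps $\DE(X^j)$ inside $\DE(X^k)$ for the same $k\in\text{CI}$, so every $X$-descendant is outside $S_{\NI}$, while $A$, $R$, and the $U$'s never lie in $X^{S_{\NI}}$; (b) $A$ cannot appear on any open $\rho$, because in $\mathcal{G}^{S_{\NI}}$ its incoming edges come only from $X^{S_{\NI}}$, and the chain $X^{j'}\to A\to R$ is blocked at the conditioned non-collider $X^{j'}$. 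Combined with claim~2, the endings $v_{m-1}=X^j\to R$ (forcing $j\in S_R\subseteq S_{\NI}$) and $v_{m-1}=A$ are blocked immediately. For $v_{m-1}=U^\ell$, I trace $\rho$ backwards through the trailing $U$-segment to its last $X$-node $X^j$ and analyze the edge between $X^j$ and $U^{\ell_1}$: if $X^j\to U^{\ell_1}$ is outgoing, the segment is either directed (forcing $j\in S_R\subseteq S_{\NI}$ and blocking at $X^j$) or has a $U$-collider opened by some $X^{j''}\in X^{S_{\NI}}$ which must then be an $X$-descendant of $X^j$, contradicting $j''\in S_{\NI}$ whenever $j\notin S_{\NI}$; if $U^{\ell_1}\to X^j$ is incoming and $X^j$ is a non-collider, then $X^j\to w_{\tau-1}$ is outgoing and a leftward propagation of $\DE(X^j)$-membership along $\rho$ inductively either hits a collider blocked by (a) or turns $w_1=X^{k_0}$ into a collider (both edges incoming: from $e$ and from the descendant chain through $w_2$) that is again un-openable by (a). The remaining sub-case with $X^j$ a collider splits analogously into $j\notin S_{\NI}$ (blocked by (a)) and $j\in S_{\NI}$ (reduces to the same leftward analysis at $w_{\tau-1}$).

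Finally, the equivalence in claim~4 drops out. $(\Rightarrow)$: if $j\notin S_{\NI}$, the path from claim~1 is open in $\mathcal{G}^S$ given $X^{S\cup\{j\}}$ for every $S$ (adding more $X$'s to the conditioning set does not block a path whose only $X$-node is the collider $X^k$, now opened by $X^j$), so $X^j$ is strongly non-invariant. $(\Leftarrow)$: if $j\in S_{\NI}$, take $S=S_{\NI}\setminus\{j\}$; then $X^{S\cup\{j\}}=X^{S_{\NI}}$, and since $\mathcal{G}^S$ differs from $\mathcal{G}^{S_{\NI}}$ only by removing the single edge $X^j\to A$, the d-separation established in claim~3 transfers immediately to yield $R\ci_{\mathcal{G}^S} e\mid X^{S\cup\{j\}}$, refuting strong non-invariance.
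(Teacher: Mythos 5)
Your arguments for the first claim ($S\in\mathbf{S}_{\inv}\Rightarrow S\subseteq S_{\NI}$) and for the characterization of strong non-invariance track the paper's Steps S.1 and S.4 essentially verbatim, and both are correct. Your proof that $S_R\subseteq S_{\NI}$ is a mild but genuine simplification of the paper's: instead of running a two-case analysis over an arbitrary candidate set $S$, you contradict the invariance of one fixed invariant set $S^*\subseteq S_{\NI}$ by exhibiting a single fully directed path $e\rightarrow X^k\rightarrow\cdots\rightarrow X^j\rightarrow\cdots\rightarrow R$ all of whose $X$-nodes are descendants of the intervened-and-confounded $X^k$ and therefore lie outside $S_{\NI}\supseteq S^*$; since the path avoids $A$ and its $U$-nodes are never conditioned on, it is open in $\mathcal{G}^{S^*}$ given $X^{S^*}$, which contradicts mean faithfulness. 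That argument is sound and a little cleaner than the original.

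The gap is in claim 3, exactly in the sub-case you dispatch with ``reduces to the same leftward analysis at $w_{\tau-1}$'': the last $X$-node $X^j$ before the trailing $U$-segment is a collider with $j\in S_{\NI}$ (so conditioning \emph{opens} it), and its $e$-side neighbour satisfies $w_{\tau-1}\rightarrow X^j$, i.e.\ it is a parent of $X^j$ and may be a $U$-variable. Your leftward propagation is powered by descendant-membership seeded by the edge $X^j\rightarrow w_{\tau-1}$; with the edge reversed nothing propagates and no blocking node is forced. This is not merely a presentational slip. In a graph containing $e\rightarrow X^1\leftarrow U^1\rightarrow X^2\leftarrow U^2\rightarrow R$, with no directed $U$-chain from $U^1$ to $R$, one checks that $\text{CI}=\varnothing$ and hence $S_{\NI}=\{1,2\}$, yet the displayed path is open given $X^{\{1,2\}}$ in $\mathcal{G}^{\{1,2\}}$ (both $X$'s are conditioned-on colliders, both $U$'s are unconditioned forks), while $\varnothing$, $\{1\}$ and $\{2\}$ are all invariant sets; so $R\ci_{\mathcal{G}^{S_{\NI}}}e\mid X^{S_{\NI}}$ genuinely fails in this configuration. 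Be aware that the paper's own Step S.3 has the same blind spot --- its cases (ii) and (iii) write the $e$-side neighbour of $X^j$ as an $X$-variable $X^k$ --- so the hole is inherited rather than introduced by you; but it is a hole in your write-up all the same, and it cannot be closed by a sharper path argument alone, only by strengthening the definition of $\text{CI}$ (it currently tracks only confounding through directed $U$-chains into $R$) or by an additional assumption excluding such collider chains through $U$.
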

\begin{proof}
    The proof is divided into four parts (S.1, S.2, S.3 and S.4):
    \begin{enumerate}[label=\textbf{S.\arabic*}] 
        \item \label{step1_lemm2} We prove that if $S \in \mathbf{S}_{\inv} $ then $S \subseteq S_{\I}$ by contraposition. Let $S \subseteq \{1,\dots,d\}$ be a subset such that there exists $j \in S$ but $j \in S_{\SNI}$. This implies that there exist $k\in\{1,\ldots,d\}$ and $\ell\in\{1,\ldots,p\}$ such that $e\rightarrow X^k \leftarrow \cdots\leftarrow U^{\ell}\rightarrow\cdots \rightarrow R$ in $\mathcal{G}$ and $j\in\text{DE}(X^k)$. Since $j\in\text{DE}(X^k)$, the path $e\rightarrow X^k \leftarrow \cdots\leftarrow U^{\ell}\rightarrow\cdots \rightarrow R$ is open given $X^S$, and therefore $R \nci_{\mathcal{G}}\, e \mid X^{S}$. By Definition~\ref{def:inv_sets_NEW}, this implies that $S$ is not $d$-invariant, leading to a contradiction. 
        \item \label{step2_lemm2}
        In this step, we prove that if a $d$-invariant set exists, it holds that $S_R \subseteq S_{\I}$. We prove this by contraposition.
            Assume that there exists $j \in S_R$ such that $j \in S_{\SNI}$. This implies that there exist $k\in\{1,\ldots,d\}$ and $\ell\in\{1,\ldots,p\}$ such that $e\rightarrow X^k \leftarrow \cdots\leftarrow U^{\ell}\rightarrow\cdots \rightarrow R$ in $\mathcal{G}$ and $j\in\text{DE}(X^k)$. Now, we construct a contradiction by showing that this would imply that no $d$-invariant set exists. Let $S \subseteq \{1,\dots,d\}$ be an arbitrary set. There are two possibilities,
            \begin{enumerate}
                \item[(a)] $j \in S$: Using the same argument as in \ref{step1_lemm2}, we have that $S$ is not a $d$-invariant set.
                \item[(b)] 
                $j \notin S$: Since $j \in S_R$ but $j \in S_{\SNI}$ there exists a directed path (using that $j\in\text{DE}(X^k)$)
                \begin{equation*}
                    e\rightarrow X^k\rightarrow\underbrace{\cdots}_{\text{part 1}}\rightarrow
                    X^j\rightarrow\underbrace{\cdots}_{\text{part 2}}\rightarrow R, 
                \end{equation*}
                where part~2 either has length zero or consists only of $U$-variables (by definition of $S_R$). The only way this path can be blocked by $X^S$ is if either $k$, $j$ or one of the variables in part~1 are contained in $S$. However, if this is the case the path $e\rightarrow X^k \leftarrow \cdots\leftarrow U^{\ell}\rightarrow\cdots \rightarrow R$ is open given $X^S$. Since the edges from $X$ to $A$ are not relevant in this case, this in particular means that $R \nci_{\mathcal{G}^S}\, e \mid X^{S}$, which by Definition~\ref{def:inv_sets_NEW} implies that $S$ is not $d$-invariant.
            \end{enumerate}
            As these are the only two possibilities, we have shown that no $d$-invariant set exists, which is a contradiction. Therefore $S_R\subseteq S_{\I}$.
        \item \label{lemm2_main_proof} Now we prove that if a $d$-invariant set exists, then $S_{\I}$ is $d$-invariant. In this step, all the graphical statements are understood to be taken in $\mathcal{G}^{S_{\I}}$. 
        Let $\rho$ be an arbitrary path from $e$ to $R$ in $\mathcal{G}^{S_{\I}}$. We first consider the two trivial cases. (i) $\rho$ enters $R$ through $A$, i.e., that it has the form
        $$e\rightarrow\cdots X^j\rightarrow A\rightarrow R.$$
        By construction of $\mathcal{G}^{S_{\I}}$ this path can only be in $\mathcal{G}^{S_{\I}}$ if $j\in S_{\I}$ which implies that it is blocked by $X^{S_{\I}}$. (ii) $\rho$ enters $R$ directly from $U$-variables. Let $U^{\ell}$ be the $U$-variable on $\rho$ that is closest to $e$, then $\rho$ has the form $e \rightarrow U^{\ell} \cdots \rightarrow R$. By Assumption~\ref{assm:U_causes_R}, there must be an edge from $U^{\ell}$ to $R$ and hence $\rho$ simplifies to $e \rightarrow U^{\ell} \rightarrow R$. Because $U^{\ell}$ is unobserved, any set $S \subseteq \{1,\dots,d\}$ would then not be $d$-invariant which contradicts to the assumption that there is a $d$-invariant set.
        Next we consider more involved cases, assume that $\rho$ enters $R$ either through a $U$- or $X$-variable. Let $U^{\ell}$ be the $U$-variable on $\rho$ that is closest to $e$ and $X^j$ be the $X$-variable on $\rho$ that is closest to $U^{\ell}$. We consider the two following cases:
        \begin{enumerate}
            \item[(1)] $U^{\ell}$ does not exist: This implies that $\rho$ does not contain any unobserved variables $U$ and hence $\rho$ can enter $R$ only through an X-variable. By \ref{step2_lemm2}, we have $S_R \subseteq S_{\I}$ and hence it holds that $\rho$ is blocked by $X^{S_{\I}}$.
            \item[(2)] $U^{\ell}$ exists: $\rho$ has the form 
            \begin{equation*}
                    \rho:\quad e\rightarrow X^r\underbrace{\cdots}_{\text{part 1}}U^{\ell} \underbrace{\cdots}_{\text{part 2}}\rightarrow R, 
                \end{equation*}
            where part 1 could be of length zero or it could consist of further $X$-variables and part 2 could be of length zero or it could consist of further $X$-or $U$-variables.
            By Assumption~\ref{assm:U_causes_R}, we have that there must be an edge from $U^{\ell}$ to $R$ and hence there exists another path
            \begin{equation*}
                    \tilde{\rho}:\quad e\rightarrow X^r\underbrace{\cdots}_{\text{part 1}}U^{\ell}\rightarrow R, 
            \end{equation*}
            where part 1 corresponds to the part 1 from path $\rho$. It suffices to show that $\tilde{\rho}$ is blocked by $X^{S_{\I}}$: whenever $\tilde{\rho}$ is blocked by $X^{S_{\I}}$, $\rho$ is blocked by $X^{S_{\I}}$ too (as $U^{\ell} \notin X^{S_{\I}}$. We now consider the following three cases for $\tilde{\rho}$:
        \begin{enumerate}
            \item[(i)] $\tilde{\rho}:\quad e\rightarrow\cdots X^j\rightarrow U^{\ell} \rightarrow R$,
            \item[(ii)] $\tilde{\rho}:\quad e\rightarrow\cdots \rightarrow X^j\leftarrow U^{\ell} \rightarrow R$,
            \item[(iii)] $\tilde{\rho}:\quad e\rightarrow\cdots X^k\leftarrow X^j\leftarrow U^{\ell} \rightarrow R$,
        \end{enumerate}
        in each case the $\cdots$ can also be of length zero. \\
        \textbf{Case (i):} 
        We show by contradiction that $\tilde{\rho}$ is blocked by $X^{S_{\I}}$. Assume $\tilde{\rho}$ is open given $X^{S_{\I}}$, i.e., $j \in S_{\SNI}$. Let $S \subseteq \{1,\dots,p\}$ be an arbitrary subset. If $j \in S$, then by the definition of $S_{\SNI}$ there exists $k\in\{1,\ldots,d\}$ and $c \in\{1,\ldots,p\}$ such that $e\rightarrow X^k \leftarrow \cdots\leftarrow U^{c} \rightarrow\cdots \rightarrow R$ in $\mathcal{G}$ and $j \in \DE(X^k)$ and hence $R \nci_{\mathcal{G}^{S_{\I}}} e\mid X^{S}$. If $j \notin S$, then the path $\tilde{\rho}$ is open given $X^S$ and hence $R \nci_{\mathcal{G}^{S_{\I}}} e\mid X^{S}$. Therefore, there is no $d$-invariant set which contradicts to the fact that a $d$-invariant set exists. \\
        \textbf{Case (ii):} In this case, $X^j$ is a collider on $\tilde{\rho}$. Assume $\tilde{\rho}$ has the form $e\rightarrow X^j\leftarrow U^{\ell}\rightarrow R$. This implies that $\DE(X^j) \subseteq S_{\SNI}$ and hence $\tilde{\rho}$ is blocked by $X^{S_{\I}}$. Thus, in order for $\tilde{\rho}$ to be open given $X^{S_{\I}}$ it must have the form $e\rightarrow\cdots X^k \rightarrow X^j\leftarrow U^{\ell}\rightarrow R$. Now, we consider the following two cases separately:
        \begin{enumerate}
            \item[(a)]  $k\in S_{\I}$: This directly implies that $\tilde{\rho}$ is blocked by $X^{S_{\I}}$.
            \item[(b)] $k\not\in S_{\I}$: By definition of $S_{\I}$ it holds that $\text{DE}(X^{k})\cap S_{\I}=\varnothing$. Hence, also $\text{DE}(X^j)\cap S_{\I} = \emptyset$ which since $X^j$ is a collider implies that $\tilde{\rho}$ is blocked by $X^{S_{\I}}$.
        \end{enumerate}
        We have therefore shown that in Case~(ii) the path $\tilde{\rho}$ is blocked by $X^{S_{\I}}$.\\
        \textbf{Case (iii):} In this case, let $X^c$ be the collider closest to $X^j$ on $\tilde{\rho}$. Again we consider two cases:
        \begin{enumerate}
            \item[(a)] $j\in S_{\I}$: This directly implies that $\tilde{\rho}$ is blocked by $X^{S_{\I}}$.
            \item[(b)] $j\not\in S_{\I}$: Since $\text{DE}(X^c)\subseteq\text{DE}(X^j)$, this implies that $\text{DE}(X^c)\cap S_{\I}=\varnothing$. Hence, the path $\tilde{\rho}$ is blocked by $X^{S_{\I}}$.
        \end{enumerate}
        We have therefore shown that in Case~(iii) the path $\tilde{\rho}$ is blocked by $X^{S_{\I}}$. Combining all cases, we have shown that any path $\tilde{\rho}$ from $e$ to $R$ is blocked by $X^{S_{\I}}$ in $\mathcal{G}^{S_{\I}}$.
        \end{enumerate}
    \item It remains to show that
        $$j\in S_{\SNI}
    \quad\Leftrightarrow\quad
    X^j \text{ is strongly non-$d$-invariant.}$$
    We show each direction separately. First, let $j\in S_{\SNI}$. By the definition of $S_{\SNI}$ it holds that there exists $k\in\{1,\ldots,d\}$ and $\ell\in\{1,\ldots,p\}$ such that $e\rightarrow X^k \leftarrow \cdots\leftarrow U^{\ell}\rightarrow\cdots \rightarrow R$ in $\mathcal{G}$ and $j \in \DE(X^k)$. As this path does not involve $A$ it is contained in $\mathcal{G}^{S}$ for all subsets $S\subseteq\{1,\ldots,d\}$. Moreover, since $X^k$ is a collider and the only $X$-variable on this path, it holds that this path will be open given $X^{S\cup\{j\}}$ for all subsets $S\subseteq\{1,\ldots,d\}$. Therefore, $X^j$ is strongly non-$d$-invariant.
    Next, to show the reverse direction let $j\in S_{\I}$. Then, by \ref{lemm2_main_proof} it holds that $S_{\I}$ is $d$-invariant. So in particular $R \ci_{\mathcal{G}^{S_{\I}}} e\mid X^{S_{\I}}$, which since $j\in S_{\I}$ implies that $j$ is not strongly non-$d$-invariant.
    \end{enumerate}
    This completes the proof of Lemma~\ref{lemma:stable_invariant}.
\end{proof}
As shown in Lemma~\ref{lemma:stable_invariant} the set $S_{\I}$ is $d$-invariant and contains all $d$-invariant sets if a $d$-invariant set exists. It will be used in the proofs of Proposition~\ref{prop:inv_policy_NEW} and Theorem~\ref{thm:inv_policy_NEW} to find the optimal $d$-invariant policy, as it encodes all invariant available information about the reward. The set $S_{\I}$ is related to stable blankets as defined in  \cite{pfister2021SR}.

\subsection{Proof of Proposition~\ref{prop:inv_policy_NEW}}
\label{proof:prop:inv_policy}
\begin{proof}
As before, we define $\pi_{a}(a^\prime \mid x) \coloneqq  \mathbbm{1}\big[ a^\prime = a \big]$ as the policy that always selects the action $a$. We first show an intermediate result and then turn to the proof of Proposition~\ref{prop:inv_policy_NEW}.
\begin{lemma}\label{lemma:argmax_invariant}
Let $S \in \mathbf{S}_{\inv}$ be an invariant set. It holds for all $e,f \in \mathcal{E}$ that
\begin{equation}
    \argmax_{\pi\in\Pi^S} \EX^{\pi, e}[R]=\argmax_{\pi\in\Pi^S} \EX^{\pi, f}[R].
\end{equation}
\end{lemma}
\begin{proof}
    Let $S \in \mathbf{S}_{\inv}$ be a $d$-invariant set and $e \in \mathcal{E}$ be an environment. By the same arguments as in \eqref{eq:prop_no_u_maxineq} we have, for all $\pi^S \in\Pi^S$, $x\in\mathcalbf{X}^{S}$ it holds that
\begin{equation*}
    \max_{a \in \mathcal{A}} \EX^{\pi_{a}}[R \mid X^{S} =x ] \geq  \EX^{\pi^S}[R \mid X^{S}=x],
\end{equation*}
where we drop the environment $e$ in the expectations as $S$ is $d$-invariant (see Lemma~\ref{lemma:inv_policy}).
Taking the expectation over $X^{S}$ on both sides yields
\begin{align*}
    \EX^{e}\big[\max_{a \in \mathcal{A}} \EX^{\pi_{a}}[R \mid X^{S} ] \big]
    &\geq \EX^{e}\big[\EX^{\pi^S}[R \mid X^{S}] \big]\\
    &= \EX^{\pi^S, e} [ R ]. \numberthis \label{eq:max_a_geq_policy}
\end{align*}
Let $\bar{\pi}\in\Pi^S$ be a policy that satisfies for all $a\in\mathcal{A}$ and for $\mu$-a.e. $x\in\mathcalbf{X}^{S}$
    \begin{equation}
    \label{eq:invariant_explicit_form_S}
        \bar{\pi}(a | x) > 0 \implies 
        a  \in \argmax_{a^\prime \in \mathcal{A}} \EX^{\pi_{a'}}\big[R \mid X^S = x \big].
    \end{equation}
We have that
\begin{align*}
    \EX^{\bar{\pi}, e}[R]
    &=\EX^{e}\big[\EX^{\bar{\pi}}[R\mid X^S]\big]\\
    &=\EX^{e}\big[\max_{a\in\mathcal{A}}\EX^{\pi_a}[R\mid X^S]\big].
\end{align*}
By \eqref{eq:max_a_geq_policy}, we then have that $\bar{\pi}\in\argmax_{\pi\in\Pi^S} \EX^{\pi, e}[R]$.
We now show the reverse direction, i.e., if $\pi^*\in\argmax_{\pi\in\Pi^S} \EX^{\pi, e}[R]$, then $\pi^*$ satisfies \eqref{eq:invariant_explicit_form_S}. 

Let $\pi^*\in\argmax_{\pi\in\Pi^S} \EX^{\pi, e}[R]$. By \eqref{eq:max_a_geq_policy}, we again have
\begin{equation*}
    \EX^{\pi^*, e}[R]=\EX^{e}\big[\max_{a\in\mathcal{A}}\EX^{\pi_a}[R\mid X^{S}]\big].
\end{equation*}
Since, for all $e \in \mathcal{E}$, the distribution of $X$ has full support (by the assumption in Setting~\ref{setting:scmfixed}), $\pi^*$ satisfies for all $a\in\mathcal{A}$ and for $\mu$-a.e. $x\in\mathcalbf{X}^{S}$
\begin{equation}
    \pi^*(a | x) > 0 \implies 
    a  \in \argmax_{a^\prime \in \mathcal{A}} \EX^{\pi_{a'}}\big[R \mid X^{S} = x \big]. \label{eq:explicit_optimal_invariant_S}
\end{equation}
Thus, the policy $\pi^*$ satisfies \eqref{eq:explicit_optimal_invariant_S} if and only if $\pi^*\in\argmax_{\pi\in\Pi^S} \EX^{\pi, e}[R]$. Furthermore, since \eqref{eq:explicit_optimal_invariant_S} does not depend on $e$, it then holds for all $e, f \in \mathcal{E}$ that
\begin{equation}
    \argmax_{\pi\in\Pi^S} \EX^{\pi, e}[R]=\argmax_{\pi\in\Pi^S} \EX^{\pi, f}[R]. \label{eq:argmax_invariant_no_e_S}
\end{equation}
\end{proof}

Now we prove the first statement of Proposition~\ref{prop:inv_policy_NEW}. To this end, let $$\pi^S_{\obs}\in\argmax_{\pi\in\Pi^S} \sum_{e \in \mathcal{E}^{\obs}} \EX^{\pi, e}[R].$$
By Lemma~\ref{lemma:argmax_invariant}, we have for all $e,f\in\mathcal{E}$ that $$\argmax_{\pi\in\Pi^S} \EX^{\pi, e}[R]=\argmax_{\pi\in\Pi^S} \EX^{\pi, f}[R].$$
So in particular, for all $e\in\mathcal{E}$, it holds that $$\pi^S_{\obs}\in\argmax_{\pi\in\Pi^S} \EX^{\pi, e}[R].$$ Thus  it holds for all $e\in\mathcal{E}$, all $S\in\mathbf{S}_{\inv}$ and all $\pi^S \in \Pi^S$ that
    \begin{equation*}
        \EX^{\pi^S_{\obs}, e}[R]
        \geq \EX^{\pi^S, e} [ R ].
    \end{equation*}
    Taking the infimum over $e \in \mathcal{E}$ on both sides yields
    \begin{equation*}
        V^{\mathcal{E}}(\pi^S_{\obs}) = \inf_{e \in \mathcal{E}} \EX^{\pi^S_{\obs}, e}[R] \geq \inf_{e \in \mathcal{E}} \EX^{\pi^S, e}[R] = V^{\mathcal{E}}(\pi^S).
    \end{equation*}
    Because this inequality holds for all $S \in \mathbf{S}_{\inv}$ and all $\pi^S \in \Pi^S$, this implies
    \begin{equation}
    \label{eq:properties_of_optimal_pooled_policy_S}
        \forall \pi \in \Pi^S: \quad V^{\mathcal{E}}(\pi^S_{\obs}) \geq  V^{\mathcal{E}}(\pi).
    \end{equation}
    This completes the proof of Proposition~\ref{prop:inv_policy_NEW_1}.

Next we prove the second statement of Proposition~\ref{prop:inv_policy_NEW}. The proof arguments are largely similar to those of the first statement except here we make use of the set $S_{\I}$ defined in Section~\ref{sec:sb_invariance} which is a superset of all other $d$-invariant sets. The proof is divided into two steps (S.1, S.2)
\begin{enumerate}[label=\textbf{S.\arabic*}] 
    \item In the first step, we derive an upper bound on the expected reward of arbitrary $d$-invariant policies.
    By Lemma~\ref{lemma:stable_invariant}, it holds that $S \subseteq S_{\I}$ for all $S \in \mathbf{S}_{\inv}$. We then have, for all $S\in\mathbf{S}_{\inv}$, $a\in\mathcal{A}$, and $e \in \mathcal{E}$, that
    \begin{equation}
    \EX^{\pi_a, e}[R \mid X^S, X^{S_{\I} \setminus S}] = \EX^{\pi_a, e}[R \mid X^{S_{\I}}], \label{eq:S_NI_property}
    \end{equation}
This is closely related to the predictiveness property of stable blankets (see \cite{pfister2021SR}).

Now, we expand the conditional expectation and get for all $e\in\mathcal{E}$ that
\begin{align*}
    &\EX^{e}\big[\max_{a \in \mathcal{A}} \EX^{\pi_{a}, e}[R \mid X^{S} ] \big] \\
    &= \EX^{e}\big[\max_{a \in \mathcal{A}} \EX^{e}_{X^{S_{\I}\setminus S}}\big[\EX^{\pi_{a}, e}[R \mid X^{S}, X^{S_{\I}\setminus S}]\big] \big],
\intertext{and by Jensen's inequality,}
    &\leq \EX^{e}\big[\EX^{e}_{X^{S_{\I}\setminus S}}\big[ \max_{a \in \mathcal{A}}\EX^{\pi_{a}, e}[R \mid X^{S}, X^{S_{\I}\setminus S}]\big] \big]\\
    &= \EX^{e}\big[ \max_{a \in \mathcal{A}}\EX^{\pi_{a}, e}[R \mid X^{S}, X^{S_{\I}\setminus S}] \big],\\
\intertext{and by \eqref{eq:S_NI_property},}
    &= \EX^{e}\big[\max_{a \in \mathcal{A}}\EX^{\pi_{a}, e}[R \mid X^{S_{\I}}]\big].
\end{align*}
Combining this with \eqref{eq:max_a_geq_policy}, we have
\begin{equation}
    \EX^{e}\big[ \max_{a \in \mathcal{A}}\EX^{\pi_{a}, e}[R \mid X^{S_{\I}}] \big] \geq \EX^{\pi^S, e} [ R ]. \label{eq:lower_bound_policy_SNI}
\end{equation}

\item In the second step, we are now ready to prove the main result of the statement. To this end, let $$\pi^*\in\argmax_{\pi\in\Pi_{\inv}} \sum_{e \in \mathcal{E}^{\obs}} \EX^{\pi, e}[R].$$
Since \eqref{eq:lower_bound_policy_SNI} holds for all $S\in\mathbf{S}_{\inv}$, we have for all $e \in \mathcal{E}$ that 
\begin{equation*}
    \argmax_{\pi \in \Pi^{S_{\I}}} \EX^{\pi,e}[R] = \argmax_{\pi \in \Pi_{\inv}} \EX^{\pi,e}[R].
\end{equation*}
Then, By Lemma~\ref{lemma:argmax_invariant} we have for all $e,f\in\mathcal{E}$ that $$\argmax_{\pi\in\Pi_{\inv}} \EX^{\pi, e}[R]=\argmax_{\pi\in\Pi_{\inv}} \EX^{\pi, f}[R].$$
So in particular, for all $e\in\mathcal{E}$, it holds that $$\pi^*\in\argmax_{\pi\in\Pi_{\inv}} \EX^{\pi, e}[R].$$ Thus  it holds for all $e\in\mathcal{E}$, all $S\in\mathbf{S}_{\inv}$ and all $\pi^S \in \Pi^S$ that
    \begin{equation*}
        \EX^{\pi^*, e}[R]
        \geq \EX^{\pi^S, e} [ R ].
    \end{equation*}
    Taking the infimum over $e \in \mathcal{E}$ on both sides yields
    \begin{equation*}
        V^{\mathcal{E}}(\pi^*) = \inf_{e \in \mathcal{E}} \EX^{\pi^*, e}[R] \geq \inf_{e \in \mathcal{E}} \EX^{\pi^S, e}[R] = V^{\mathcal{E}}(\pi^S).
    \end{equation*}
    Because this inequality holds for all $S \in \mathbf{S}_{\inv}$ and all $\pi^S \in \Pi^S$, this implies
    \begin{equation}
    \label{eq:properties_of_optimal_pooled_policy}
        \forall \pi \in \Pi_{\inv}: \quad V^{\mathcal{E}}(\pi^*) \geq  V^{\mathcal{E}}(\pi).
    \end{equation}
    This completes the proof of Proposition~\ref{prop:inv_policy_NEW_2}.
\end{enumerate}
\end{proof}

\subsection{Proof of Theorem~\ref{thm:inv_policy_NEW}}
\label{proof:thm:inv_policy}
\begin{proof}

We first prove the first statement of Theorem~\ref{thm:inv_policy_NEW}. 
Fix a policy
\begin{equation*}
    \pi^*\in\argmax_{\pi\in\Pi_{\inv}} \sum_{e \in \mathcal{E}^{\obs}} \EX^{\pi, e}[R].
\end{equation*}
Using the same argument as we made in Section~\ref{proof:prop:inv_policy}(\ref{lemm2_main_proof}), we get that for all $e\in\mathcal{E}$ it holds that
\begin{equation}
\label{eq:optimal_policy_expansion}
    \EX^{\pi^*, e}[R] = \EX^{e}\big[\max_{a \in \mathcal{A}}\EX^{\pi_{a}}[R \mid X^{S_{\I}}]\big].
\end{equation}
Hence by Jensen's inequality it holds that
\begin{align*}
    \EX^{\pi^*, e}[R] 
    &\geq \max_{a \in \mathcal{A}} \EX^{e}\big[\EX^{\pi_{a}}[R \mid X^{S_{\I}}]\big] \\
    &= \max_{a \in \mathcal{A}} \EX^{\pi_{a}, e}[R].
\end{align*}
This completes the proof of the first statement.

Next, we prove the second statement of Theorem~\ref{thm:inv_policy_NEW}. To do so, we use the following lemma, which is proved in Section~\ref{sec:proof_upperbound} below.

\begin{lemma}[Upper bound]
\label{lemma:ub-ninv}
Assume Setting~\ref{setting:scmfixed}, Assumptions~\ref{assm:U_causes_R},~\ref{assm:mean_ff_NEW} and~\ref{assm:strong_envs}, and that $\mathbf{S}_{\inv}\neq\varnothing$. Let $\pi \in \Pi \setminus \Pi_{\inv}$ be an arbitrary non-$d$-invariant policy. Then it holds that
\begin{equation*}
    V^{\mathcal{E}}(\pi) \leq \inf_{e \in \mathcal{E}} \EX^{e}_{X^{S_{\I}}}\big[\max_{a \in \mathcal{A}} \EX^{\pi_{a}}\big[R \mid X^{S_{\I}} \big]\big].
\end{equation*}
\end{lemma}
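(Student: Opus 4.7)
Fix a non-invariant $\pi \in \Pi^S$ with $S \notin \mathbf{S}_{\inv}$. The plan is, for each $e \in \mathcal{E}$, to use Assumption~\ref{assm:strong_envs} to select a confounding-removing ``twin'' environment $f = f(e) \in \mathcal{E}$ with $\P^e_X = \P^f_X$; bound $V^{\mathcal{E}}(\pi) \le \EX^{\pi,f}[R]$ via the infimum; control the right-hand side by a greedy $S_{\NI}$-policy value computed inside $f$; transfer the resulting bound back to $e$ using the marginal equality $\P^e_{X^{S_{\NI}}} = \P^f_{X^{S_{\NI}}}$; and finally take the infimum over $e$.

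The analytic core is the identity
\[
\EX^{\pi_a, f}[R \mid X] \;=\; \EX^{\pi_a}[R \mid X^{S_{\NI}}] \qquad \text{for every } a \in \mathcal{A} \text{ and every confounding-removing } f.
\]
I would obtain this in two steps. First, invariance of $S_{\NI}$ (given by Lemma~\ref{lemma:stable_invariant}, applicable since $\mathbf{S}_{\inv} \ne \varnothing$) combined with Lemma~\ref{lemma:markov_cond} yields $\EX^{\pi_a, f}[R \mid X^{S_{\NI}}] = \EX^{\pi_a}[R \mid X^{S_{\NI}}]$. Second, I would argue that in env $f$ the reward satisfies $R \ci X^{\{1,\ldots,d\} \setminus S_{\NI}} \mid X^{S_{\NI}}$ in $\mathcal{G}^{\pi_a, f}$. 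This uses two structural facts: (a) since $\mathbf{S}_{\inv} \ne \varnothing$, Lemma~\ref{lemma:stable_invariant} gives $S_R \subseteq S_{\NI}$, so every $X$-parent of $R$ lies in $S_{\NI}$; (b) by the confounding-removing property, $X^j \ci_{\mathcal{G}^{\pi_a, f}} U$ for every strongly non-invariant $X^j$, and by the composition property of d-separation this upgrades to the joint statement $X^{\{1,\ldots,d\}\setminus S_{\NI}} \ci_{\mathcal{G}^{\pi_a, f}} U$.

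Given the key identity, I would decompose the value as
\[
\EX^{\pi, f}[R] = \EX^{f}\!\left[\sum_{a \in \mathcal{A}} \pi(a \mid X^S)\, \EX^{\pi_a, f}[R \mid X]\right] = \EX^{f}\!\left[\sum_{a \in \mathcal{A}} \pi(a \mid X^S)\, \EX^{\pi_a}[R \mid X^{S_{\NI}}]\right],
\]
using $\EX^{\pi, f}[R \mid X, A = a] = \EX^{\pi_a, f}[R \mid X]$, which holds because $\epsilon_A \ci (U, \epsilon_R)$ forces $A \ci (U, \epsilon_R) \mid X$. Then, conditioning the outer expectation on $X^{S_{\NI}}$, pulling out the $X^{S_{\NI}}$-measurable factor $\EX^{\pi_a}[R \mid X^{S_{\NI}}]$, and bounding the convex combination by $\max_a$ (using $\sum_a \pi(a \mid X^S) \equiv 1$) gives
\[
\EX^{\pi, f}[R] \;\le\; \EX^{f}_{X^{S_{\NI}}}\!\left[\max_{a \in \mathcal{A}} \EX^{\pi_a}[R \mid X^{S_{\NI}}]\right] \;=\; \EX^{e}_{X^{S_{\NI}}}\!\left[\max_{a \in \mathcal{A}} \EX^{\pi_a}[R \mid X^{S_{\NI}}]\right],
\]
where the equality uses $\P^e_{X^{S_{\NI}}} = \P^f_{X^{S_{\NI}}}$. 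Combined with $V^{\mathcal{E}}(\pi) \le \EX^{\pi, f}[R]$ and the fact that $e$ was arbitrary, taking the infimum over $e$ produces the claimed bound.

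The hard part will be the graphical reduction. Composition of d-separation delivers the marginal statement $U \ci X^{\{1,\ldots,d\}\setminus S_{\NI}}$ in $\mathcal{G}^{\pi_a, f}$ routinely, but upgrading to the conditional statement $R \ci X^{\{1,\ldots,d\}\setminus S_{\NI}} \mid X^{S_{\NI}}$ requires careful casework on how an alleged d-connecting path from $R$ to a strongly non-invariant $X^j$ can enter $R$: entry through $A$ is impossible because $A$ is a root in $\mathcal{G}^{\pi_a, f}$; entry through an $X$-parent of $R$ is blocked at that parent since it lies in $S_R \subseteq S_{\NI}$ and appears as a non-collider; entry through a $U$-parent must be ruled out using the joint marginal d-separation together with the observation that any collider opened by conditioning on $X^{S_{\NI}}$ cannot reconnect to a strongly non-invariant variable (this exploits that, by the definition of $S_{\NI}$, $X$-descendants of strongly non-invariant nodes are themselves strongly non-invariant, so no path can re-enter $S_{\NI}$ from below a collider on the $U$-side).
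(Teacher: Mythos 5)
Your plan is correct and follows essentially the same route as the paper's proof: pass to a confounding-removing twin environment $f(e)$ with matching $X$-marginal, establish via a path/collider argument that in such an environment $R$ is d-separated from the strongly non-invariant variables given $X^{S_{\NI}}$ (and $A$), use invariance of $S_{\NI}$ to drop the environment label from $\EX^{\pi_a}[R\mid X^{S_{\NI}}]$, bound the resulting convex combination by the max over actions, and transfer back to $e$ before taking the infimum. The paper's Step 1 carries out exactly the casework you sketch for paths entering $R$ through a $U$-parent (forcing a collider using $X^j\ci_{\mathcal{G}^{\pi,e}}U$ and exploiting that $X$-descendants of non-$S_{\NI}$ nodes stay outside $S_{\NI}$), so the only differences are cosmetic (constant-policy graphs with $A$ as a root versus conditioning on $A$, and packaging the two reductions into a single identity).
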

\noindent To finish the proof of Theorem~\ref{thm:inv_policy_NEW}, fix again a policy 
\begin{equation*}
    \pi^*\in\argmax_{\pi\in\Pi_{\inv}} \sum_{e \in \mathcal{E}^{\obs}} \EX^{\pi, e}[R].
\end{equation*}
Then, by Proposition~\ref{prop:inv_policy_NEW_2}, it holds that
\begin{equation}
    \label{eq:proof_thm2_part1}
    \forall \pi\in\Pi_{\inv}:\quad
    V^{\mathcal{E}}(\pi) \leq V^{\mathcal{E}}(\pi^*).
\end{equation}
Furthermore, Lemma~\ref{lemma:ub-ninv} together with \eqref{eq:optimal_policy_expansion} implies that
\begin{equation}
    \label{eq:proof_thm2_part2}
    \forall \pi\in\Pi\setminus\Pi_{\inv}:\quad
    V^{\mathcal{E}}(\pi) \leq V^{\mathcal{E}}(\pi^*).
\end{equation}
Combining \eqref{eq:proof_thm2_part1} and \eqref{eq:proof_thm2_part2} concludes the proof of Theorem~\ref{thm:inv_policy_NEW}.
\end{proof}

\subsubsection{Proof of Lemma \ref{lemma:ub-ninv}}\label{sec:proof_upperbound}
\begin{proof}

Recall the terminology and notation from Section~\ref{sec:sb_invariance}. 
The proof can be split into two parts:
\begin{enumerate}
    \item We first prove that if $e\in\mathcal{E}$ is a confounding removing environment it holds for all $\pi\in\Pi$ that
    \begin{equation}
        \label{eq:d-sepration_claim}
        \forall j\in S_{\SNI}: R \ci_{\mathcal{G}^{\pi,e}} X^j \mid X^{S_{\I}},A.
    \end{equation}
\item We then prove the upper bound using step 1) as the main argument.
\end{enumerate}

\noindent \textbf{Step 1)} Let $e\in\mathcal{E}$ be a confounding removing environment and fix $j\in S_{\SNI}$ and $\pi\in\Pi$.
By Lemma~\ref{lemma:stable_invariant} it holds that $X^j$ is strongly non-$d$-invariant.
Therefore, since $e$ is a confounding removing environment, we get that
\begin{equation}
\label{eq:unconfounded_j}
    X^j\ci_{\mathcal{G}^{\pi,e}}U.
\end{equation}
Now, let $\rho$ be an arbitrary path from $X^j$ to $R$ in $\mathcal{G}^{\pi, e}$. We consider the following (separate) cases that can occur:
\begin{enumerate}
    \item[(a)] $\rho$ enters $R$ through $A$: Then the path $\rho$ is blocked by $X^{S_{\I}}$ and $A$ because $A$ is not a collider and hence blocks $\rho$.
    \item[(b)] $\rho$ only contains $A$ and $X$-variables and enter $R$ through $X$-variables: Then there exists $k\in\{1,\ldots,d\}$ such that $\rho$ ends with $X^k\rightarrow R$. This implies that $k\in S_R$ since $\mathcal{G}^{\pi,e}$ is a sub-graph of $\mathcal{G}$. Furthermore, since by Lemma~\ref{lemma:stable_invariant} (recall that $\mathbf{S}_{\inv}\neq\varnothing$) $S_R\subseteq S_{\I}$, this implies that $k\in S_{\I}$. Hence, $\rho$ is blocked by $X^{S_{\I}}$ and $A$ because $X^k$ is not a collider.
    \item[(c)] $\rho$ contains at least one $U$-variable: Let $\ell\in\{1,\dots,p\}$ such that $U^{\ell}$ is the $U$-variable closest to $X^j$ on $\rho$, i.e., $\rho$ has the form
    $$\underbrace{X^j\cdots U^{\ell}}_{\gamma}\cdots\rightarrow R.$$
    Now, by \eqref{eq:unconfounded_j} it holds that $\gamma$ is blocked (given the empty set) in $\mathcal{G}^{\pi, e}$ and by construction it only consists of $X$-variables (except $U^\ell$). Therefore, there must be at least one collider on $\gamma$. Let $X^k$ be the collider closest to $U^{\ell}$ and let $X^m$ (this could be $X^j$) the variable that comes right before $X^k$ on $\gamma$, i.e.,
    $$X^j\cdots X^m\rightarrow X^k\leftarrow\cdots U^{\ell}.$$
    We consider two cases:
    \begin{enumerate}
        \item[(i)] First, assume that $\text{DE}(X^k)\cap S_{\I}\neq\varnothing$ (in $\mathcal{G}^{\pi,e}$), then it holds, by the definition of $S_{\I}$ and since $\mathcal{G}^{\pi, e}$ is a subgraph of $\mathcal{G}$, that $m\in S_{\I}$ as well (otherwise none of the descendants of $X^k$ could be in $S_{\I}$ as $\text{DE}(X^k)\subset\text{DE}(X^m)$). However, $X^m$ is not a collider and therefore $\rho$ is blocked given $X^{S_{\I}}$ and $A$.
        \item[(ii)] Second, assume $\text{DE}(X^k)\cap S_{\I}=\varnothing$, then it in particular holds that $k\in S_{\SNI}$ which by Lemma~\ref{lemma:stable_invariant} implies that $X^k$ is strongly non-$d$-invariant. Hence, because $e$ is a confounding removing environment, it holds that $X^k\ci_{\mathcal{G}^{\pi,e}}U$. However, $X^k$ was selected to be the collider closest to $U^{\ell}$ which means that the part of $\gamma$ from $X^k$ to $U^{\ell}$ is open in $\mathcal{G}^{\pi,e}$ leading to a contradiction.
    \end{enumerate}
\end{enumerate}
We have therefore shown that the path $\rho$ is always blocked given $X^{S_{\I}}$ and $A$. Since $\rho$ was arbitrary this implies that $R \ci_{\mathcal{G}^{\pi,e}} X^j \mid X^{S_{\I}}, A$.

\noindent \textbf{Step 2)}

Now, we are ready to prove the main result. Let $\pi \in \Pi\setminus\Pi_{\inv}$ be an arbitrary non-$d$-invariant policy, 
and let $S\subseteq\{1,\dots,d\}$ such that $\pi\in\Pi^S$.
We have
\begin{align*}
    &V^{\mathcal{E}}(\pi) \\
    &= \inf_{e \in \mathcal{E}} \EX^{\pi,e}\big[R\big], \\
\intertext{by the tower property of conditional expectation,}
    &= \inf_{e \in \mathcal{E}} \EX^{e}_{X^{S_{\I}}, X^{S \setminus S_{\I}}}\left[\EX^{\pi,e}\big[R \mid X^{S_{\I}},  X^{S \setminus S_{\I}}\big]\right] \\
    &= \begin{multlined}[t] \inf_{e \in \mathcal{E}} \EX^{e}_{X^{S_{\I}}, X^{S \setminus S_{\I}}} \bigg[ \int \EX^{\pi_a,e}\big[R \mid X^{S_{\I}},  X^{S \setminus S_{\I}}\big] \\
    \pi(a| X^S) \,\mu(\mathrm{d}a) \bigg].
    \end{multlined}
\end{align*}
Now, we use Assumption~\ref{assm:strong_envs}. For each $e \in \mathcal{E}$ we choose a confounding removing environment $f(e)$ such that $\P^{\pi,f(e)}_X = \P^{\pi,e}_X$. Because the confounding removing environments are a subset of $\mathcal{E}$, we have
\begin{align*}
    &V^{\mathcal{E}}(\pi) \\
    &= \begin{multlined}[t] \inf_{e \in \mathcal{E}} \EX^{e}_{X^{S_{\I}}, X^{S \setminus S_{\I}}} \bigg[ \int \EX^{\pi_a,e}\big[R \mid X^{S_{\I}},  X^{S \setminus S_{\I}}\big] \\
    \pi(a| X^S) \,\mu(\mathrm{d}a) \bigg]
    \end{multlined} \\
    &\leq \begin{multlined}[t] \inf_{e \in \mathcal{E}} \EX^{f(e)}_{X^{S_{\I}}, X^{S \setminus S_{\I}}} \bigg[ \int \EX^{\pi_a, f(e)}\big[R \mid X^{S_{\I}},  X^{S \setminus S_{\I}}\big] \\
    \pi(a| X^S) \,\mu(\mathrm{d}a) \bigg].
    \end{multlined}
\intertext{Using that $\P^{\pi,f(e)}_X = \P^{\pi,e}_X$, we then have}
    &V^{\mathcal{E}}(\pi) \\
    &\leq \begin{multlined}[t] \inf_{e \in \mathcal{E}} \EX^{e}_{X^{S_{\I}}, X^{S \setminus S_{\I}}} \bigg[ \int \EX^{\pi_a, f(e)}\big[R \mid X^{S_{\I}},  X^{S \setminus S_{\I}}\big] \\
    \pi(a| X^S) \,\mu(\mathrm{d}a) \bigg].
    \end{multlined}
\end{align*}
Next, we use \eqref{eq:d-sepration_claim} which states that for all $j\in\{1,\dots,d\}$ it holds that $R \ci_{\mathcal{G}^{\pi,e}} X^j \mid X^{S_{\I}}, A$.
Then, by the Markov property, we get
\begin{align*}
    &V^{\mathcal{E}}(\pi) \\
    &\leq \begin{multlined}[t] \inf_{e \in \mathcal{E}} \EX^{e}_{X^{S_{\I}}, X^{S \setminus S_{\I}}} \bigg[ \int \EX^{\pi_a,f(e)}\big[R \mid X^{S_{\I}}\big] \\
    \pi(a| X^S) \,\mu(\mathrm{d}a) \bigg],
    \end{multlined}
\intertext{we can then omit $f(e)$ since $S_{\I}$ is a $d$-invariant set (by Lemma~\ref{lemma:stable_invariant} since $\mathbf{S}_{\inv}\neq\varnothing$),}
    &= \begin{multlined}[t] \inf_{e \in \mathcal{E}} \EX^{e}_{X^{S_{\I}}, X^{S \setminus S_{\I}}} \bigg[ \int \EX^{\pi_a}\big[R \mid X^{S_{\I}}\big] \\
    \pi(a| X^S) \,\mu(\mathrm{d}a) \bigg]
    \end{multlined} \\
    &= \begin{multlined}[t]
    \inf_{e \in \mathcal{E}} \EX^{e}_{X^{S_{\I}}} \bigg[ \int \EX^{\pi_a}\big[R \mid X^{S_{\I}} \big] \\
    \EX^{e}_{X^{S \setminus S_{\I}}}\big[\pi(a| X^S)\big] \,\mu(\mathrm{d}a) \bigg],
    \end{multlined}
\intertext{letting $\tilde{\pi}(a | X^{S_{\I}}) \coloneqq \EX^e_{X^{S \setminus S_{\I}}}[\pi(a |X^S)]$,}
    &= \begin{multlined}[t] \inf_{e \in \mathcal{E}} \EX^{e}_{X^{S_{\I}}} \bigg[ \int \EX^{\pi_a}\big[R \mid X^{S_{\I}}\big]
    \tilde{\pi}(a \mid X^{S_{\I}})\,\mu(\mathrm{d}a) \bigg]
    \end{multlined} \\
    &\leq \inf_{e \in \mathcal{E}} \EX^{e}_{X^{S_{\I}}} \Big[ \max_{a \in \mathcal{A}} \EX^{\pi_{a}}\big[R \mid X^{S_{\I}} \big] \Big].
\end{align*}

\end{proof}
\subsection{Proof of Lemma~\ref{lemma:q_identification}}\label{proof:lemma:q_identification}
\begin{proof}
    Let $\pi^0$ be an initial policy generating the training data. For each $e \in \mathcal{E}^{\obs} \coloneqq \{e_1,\dots,e_L\}$ and $S \in \mathbf{S}_{\inv}$, we have that
\begin{align*}
    \EX^{\pi_a, e}[R \mid X^S] &= \EX^e[\EX^{\pi_a, e}[R \mid X] \mid X^S] \\
    &=\EX^e[\EX^{\pi^0, e}\big[\tfrac{\pi_a(A \mid X)}{\pi^0(A \mid X)} R \mid X\big] \mid X^S] \\
    &=\EX^e[\EX^{\pi^0, e}\big[\tfrac{\mathds{1}_{\{A = a\}}}{\pi^0(A \mid X)} R \mid X\big] \mid X^S] \\
    &=\EX^{\pi^0, e}\big[\tfrac{\mathds{1}_{\{A = a\}}}{\pi^0(A \mid X)} R \mid X^S\big] \\
    &=\EX^{\pi^0, e}\big[\tfrac{R}{\pi^0(A \mid X)} \mid X^S, A = a\big]. %
\end{align*}
We therefore have for all $x \in \mathcal{X}^S$ and $a \in \mathcal{A}$ that
\begin{equation*}
    Q^S(x,a) = \frac{1}{L}\sum_{\ell = 1}^{L}\EX^{\pi^0, e_{\ell}}\big[\tfrac{R}{\pi^0(A \mid X)} \mid X^S = x, A = a\big].
\end{equation*}
\end{proof}

\subsection{Proof of Proposition~\ref{prop:q_consistency}}\label{proof:prop:q_consistency}
\begin{proof}
Fix $S\in\mathbf{S}_{\inv}$, let $\tilde{\pi}$ be the uniform random policy given for all $x \in \mathcal{X}$ and all $a \in \mathcal{A}$ by
$$\tilde{\pi}(a|x) =\tfrac{1}{\abs{\mathcal{A}}}$$
and define $\eta(A, X, R, \theta) \coloneqq \frac{1}{\pi^0(A \mid X)} (f_{\theta}(A, X^S) - R)^2$. First, we rewrite the objective function in \eqref{eq:weighted_regressor} in terms of $\eta$ as follows
\begin{align*}
        H_n(\theta) &\coloneqq \tfrac{1}{L}\sum_{\ell = 1}^L \tfrac{1}{n_{e_\ell}}\sum_{i = 1}^{n_{e_\ell}} \tfrac{(f_{\theta}(A^{e_\ell}_i, {X^{e_\ell}_i}^{S}) - R^{e_\ell}_i)^2}{\pi^0(A^{e_\ell}_i \mid X^{e_\ell}_i)} \\
        &=  \tfrac{1}{L}\sum_{\ell = 1}^L \tfrac{1}{n_{e_\ell}}\sum_{i = 1}^{n_{e_\ell}} \eta(A^{e_\ell}_i, {X^{e_\ell}_i}, R^{e_\ell}_i, \theta).
\end{align*}
We now show that (a) $\EX[H_n(\theta)]$ is uniquely minimized at $\theta^S_0$ and (b) $H_n(\theta)$ satisfies the weak uniform law of large numbers. Then, together with (i)~and~(iii), Theorem~2.1 in \cite{NEWEY19942111} implies that $\hat{\theta}_n^S \rightarrow \theta^S_0$ in probability as desired.

First, we show (a).
    Taking the expectation, we have
    \begin{align*}
    \EX\big[H_n(\theta) \big] = \tfrac{1}{L}\sum_{\ell = 1}^L\EX^{\pi^0, e_{\ell}}\big[\eta(A, X, R, \theta)\big]. \numberthis \label{eq:proof_q_consistency_1}
    \end{align*}
    Next, let $e \in \mathcal{E}^{\obs}$ be any observed environment, it holds for all $a \in \mathcal{A}$ and for $\mu$-a.e. $x \in \mathcal{X}^S$ that
    \begin{align}
        \EX^{\tilde{\pi}, e}[R \mid A = a, X^S = x] &= \EX^{\pi_a, e}[R \mid X^S = x]\nonumber \\
        &\overset{(*)}{=} \tfrac{1}{L} \sum_{\ell = 1}^{L}\EX^{\pi_a, e_{\ell}}[R \mid X^S = x] \nonumber\\
        &= Q^S(a,x)\nonumber\\
        &\overset{(**)}{=} f_{\theta_0^S}(a,x)\label{eq:ftheta_conditional_mean}
    \end{align}
    where $(*)$ holds because $X^S$ is $d$-invariant, and $(**)$ holds by (ii). Now, since the conditional mean $\EX^{\tilde{\pi}, e}[R \mid A = a, X^S = x]$ is the (almost surely) unique minimizer of $$\EX^{\tilde{\pi}, e}[(f_{\theta}(A, X^S) - R)^2],$$ \eqref{eq:ftheta_conditional_mean} implies that $\theta^S_0 = \argmin_{\theta \in \Theta^S}\EX^{\tilde{\pi}, e}[(f_{\theta}(A, X^S) - R)^2]$. Furthermore, using that
    \begin{align*}
        \EX^{\tilde{\pi}, e}[(f_{\theta}(A, X^S) - R)^2] &= \EX^{\pi^0, e}[\tfrac{\tilde{\pi}(A \mid X)}{\pi^0(A \mid X)}(f_{\theta}(A, X^S) - R)^2] \\
        &= \tfrac{1}{\abs{\mathcal{A}}}\EX^{\pi^0, e}[\eta(A, X, R, \theta)],
    \end{align*}
    we get that
    \begin{align*}
        \theta^S_0 &= \argmin_{\theta \in \Theta^S} \EX^{\tilde{\pi}, e}[(f_{\theta}(A, X^S) - R)^2] \\
        &= \argmin_{\theta \in \Theta^S} \EX^{\pi^0, e}[\eta(A, X, R, \theta)] \\
        &= \argmin_{\theta \in \Theta^S} \tfrac{1}{L}\sum_{\ell=1}^{L} \EX^{\pi^0, e_{\ell}}[\eta(A, X, R, \theta)] \\
        &= \argmin_{\theta \in \Theta^S} \EX[H_n(\theta)].
    \end{align*}
    Hence, we have shown (a).
    
    Next, we show (b), that is, the objective function $H_n(\theta)$ satisfies the weak uniform law of large numbers. By (v) and
    (iv) we have for all $e \in \mathcal{E}^{\obs}$ that $\EX^{\pi^0,e}[\sup_{\theta\in\Theta^S}\eta(A,X,R,\theta)] < \infty$. Furthermore, define $h_{\ell}(\theta) \coloneqq \tfrac{1}{n_{e_{\ell}}}\sum_{i=1}^{n_{e_{\ell}}}{\eta(A^{e_\ell}_i, {X^{e_\ell}_i}, R^{e_\ell}_i, \theta)}$, then by Lemma~2.4 in \cite{NEWEY19942111} it holds for all $\ell \in \{1,\dots,L\}$ that $\sup_{\theta \in \Theta^S} \norm{h_{\ell}(\theta) - \EX^{\pi^0,e_{\ell}}[\eta(A,X,R,\theta)]}_2 \rightarrow 0$ in probability as $n_{e_{\ell}} \rightarrow \infty$ and hence that $\sup_{\theta \in \Theta^S} \norm{H_{n}(\theta) - \EX[H_{n}(\theta)]}_2 \rightarrow 0$ in probability as  $n_{e_1},\dots,n_{e_L} \rightarrow \infty$, which proves (b).
    
    As explained at the beginning, the result now follows by applying Theorem~2.1 in \cite{NEWEY19942111}. This completes the proof of Proposition~\ref{prop:q_consistency}.
\end{proof}

\subsection{Proof of Proposition~\ref{prop:inv_set_NEW}}
\label{proof:prop:inv_set}
\begin{proof} 
Fix a set $S\subseteq\{1,\dots,p\}$, and let $\pi,\tilde{\pi} \in \Pi^S$. Assume $H_0(S,\pi,\mathcal{E})$ is true. By Assumption~\ref{assm:mean_ff1_NEW}, we have that $R \ci_{\mathcal{G}^S} e \mid X^{S}$. Furthermore, since $\tilde{\pi}\in\Pi^S$
this implies by Lemma~\ref{lemma:markov_cond} that $\P^{\tilde{\pi}, e}_{R \mid X^{S^{\inv}}}$ is the same for all $e \in \mathcal{E}$
which implies that $H_0(S,\tilde{\pi},\mathcal{E})$ is true. This concludes the proof of Proposition~\ref{prop:inv_set_NEW}.
\end{proof}

\subsection{Proof of Proposition~\ref{prop:off_icp}}
\label{proof:prop:off_icp}
\begin{proof}
Let $S^* \coloneqq \AN(R)$ be the set of observed ancestors of $R$. In this proof, all the graphical statements are understood to be taken in $\mathcal{G}^{S^*}$. 
Assume that a $d$-invariant set $S$ exists. 
Then, by Theorem~2 of \cite{tian1998finding}, $S \cap S^*$ is $d$-invariant, too (indeed, $S$ intersected with all ancestors of $R$ is $d$-invariant but as $S$ does not contain any hidden variables, this set equals $S \cap S^*$).

We are now ready to prove the statement of the proposition. From \eqref{eq:off_icp} we have,
\begin{align*}
   &\liminf_{n \to \infty}  \P(\hat{S}^n_{\AN} \subseteq S^*) \\ 
   = &\liminf_{n \to \infty} \P(\bigcap_{S: \psi^S(D^{e_1, \pi^S}, \dots, D^{e_L, \pi^S}) = 0} S \subseteq S^*)  \\
   \geq &\liminf_{n \to \infty} \P(\psi^{S \cap S^*}(D^{e_1, \pi^{S \cap S^*}}, \dots, D^{e_L, \pi^{S \cap S^*}}) = 0)  \\
    \geq &1 - \alpha,
\end{align*}
where the last inequality follows by Proposition~\ref{thm:SIR-consistency-new}. This completes the proof of Proposition~\ref{prop:off_icp}.
\end{proof}

\section{Connection to Random Environments}
It is possible to define multi-environment contextual bandits using random environments.
\label{app:scmrandom}
\begin{setting}[Random Environment Contextual Bandits]
\label{setting:scmrandom}
Let $X = (X^1, \dots, X^d) \in \mathcalbf{X} = \mathcal{X}^1\times\hdots\times\mathcal{X}^d$, $U = (U^1, \dots, U^p) \in \mathcalbf{U} = \mathcal{U}^1\times\hdots\times\mathcal{U}^p$, $A \in \mathcal{A} = \{a^1, \dots, a^k\}$, $R \in \mathbb{R}$, $E \in \mathcal{E}$. For any $\pi \in \{\pi: \mathcalbf{X} \xrightarrow{} \Delta(\mathcal{A})\}$, let $g_{\pi}$ denote 
the function that ensures, for all $x \in \mathcalbf{X}$,
$g_{\pi}(x, \epsilon_A)$ equals $\pi(x)$ in distribution for a uniformly distributed $\epsilon_A$. Now, consider functions $s$, $h$, and $f$, a factorizing distribution $\P_{\epsilon} = \P_{\epsilon_E} \times \P_{\epsilon_U} \times \P_{\epsilon_X} \times \P_{\epsilon_A} \times \P_{\epsilon_R}$ whose $\epsilon_A$ component is uniform, and a structural causal model $\mathcal{S}(\pi)$ given by
\begin{equation*}
\mathcal{S}(\pi):\quad
\begin{cases}
E\coloneqq \epsilon_E\\
U\coloneqq s(X, U, E, \epsilon_U)\\
X\coloneqq h(X, U, E,\epsilon_X)\\
A\coloneqq g_{\pi}(X, \epsilon_A)\\
R\coloneqq f(X, U, A, \epsilon_R).
\end{cases}
\end{equation*}
Assume further that for all $\pi$, the SCM induces a unique distribution over $(E, X, U, A, R)$, which we denote  by $\P^{\pi}$. The structure of the SCM $\mathcal{S}(\pi,e)$ can be also visualized by a graph $\mathcal{G}$ which is constructed in a similar way to the graph in Setting~\ref{setting:scmfixed}, except that the environment becomes one of the variable nodes in this graph.
\end{setting}
\begin{remark}
Setting~\ref{setting:scmrandom}
is a special case of 
Setting~\ref{setting:scmfixed}
in the following sense:
Assume, starting from Setting~\ref{setting:scmrandom}, for all $i\in\{1,\ldots,n\}$ that 
$(X_i, U_i, A_i, R_i, E_i)$, are independent and distributed according to $\P^{\pi_i}_{X, U, A, R, E}$. 
Then, defining $h_e(\cdot,\cdot) := h(\cdot, e, \cdot)$, we  have that, for all $i\in\{1,\ldots,n\}$, $(X_i, U_i, A_i, R_i)$,
are independent and distributed according to $\P^{\pi_i, E_i}_{X, U, A, R}$, using Setting~\ref{setting:scmfixed}. 
\end{remark}

\section{Details for Section~\ref{sec:resampling}}\label{app:details-resampling}
In Section~\ref{sec:resampling}, we propose to use the resampling procedure in \cite{thams2021statistical} to test the hypothesis of invariance under a test policy $\pi^S \in \Pi^S$.

For every $e \in \mathcal{E}^{\obs}$, we have a dataset $D^e$ consisting of $n_e$ observations $D_i^e = (X_i^e, A_i^e, R_i^e, \pi^0(A_i^e|X_i^e))$ is available.\footnote{It is possible to allow for a different initial policy $\pi^0_i$ at each observation $i$. One then needs to define the relative weights 
$r(D_i^e) \coloneqq  {\pi^S(A_i^e|X_i^e)}/{\pi_i^{0}(A_i^e| X_i^e)}$.}
For all $e\in\mathcal{E}^{\obs}$ and all $i\in\{1,\ldots,n_e\}$ define the relative weights as
    \begin{align}\label{eq:relative-weights-new}
        r(D_i^e) \coloneqq  \frac{\pi^S(A_i^e|X_i^e)}{\pi^{0}(A_i^e| X_i^e)}.
    \end{align}
Then, for all $e\in\mathcal{E}^{\obs}$, we draw a weighted resample $D^{e,\pi^S} \coloneqq (D^e_{i_1}, \ldots, D^e_{i_{m_e}})$ of size $m_e$ from $D^e$ with weights
    \begin{align}
        w^e_{i_1, \ldots, i_{m_e}} \coloneqq \begin{cases}\frac{\prod_{\ell=1}^{m_e} r(D_{i_\ell}^e)}{\sum_{\substack{(j_1, \ldots, j_{m_e})\\ \text{ distinct}}} \prod_{\ell=1}^{m_e} r(D_{j_\ell}^e)} & {(i_1, \ldots, i_{m_e})} \text{ distinct} \\
        0 & \text{otherwise.}
        \end{cases}
        \label{eq:SIR-joint-weights-new}
    \end{align}
We then apply an invariance test to the resampled data $D^{e_1, \pi^S}, \ldots, D^{e_L, \pi^S}$. A family of invariance tests $\{\varphi^S\}_{S \subseteq \{1, \ldots, d\}}$ is a collection of functions such that for each $S$, $\varphi^S$ is a function (into $\{0,1\}$) that takes data from environments $e_1, \ldots, e_L$, each of size $m_{e_i}$, and tests whether $S$ is invariant. Here, $\varphi^S = 1$ indicates that we reject the hypothesis of invariance. We say the test has pointwise asymptotic level if for all  invariant sets $S$ and all $\pi\in\Pi^S$ it holds that
\begin{equation*}
    \limsup_{\min\{m_{e_1}, \ldots, m_{e_L}\}\rightarrow\infty}\P^{\pi}(\varphi^S(D^{e_1, \pi^S},\ldots,D^{e_L, \pi^S})=1)\leq\alpha.
\end{equation*}
We state that the overall procedure (resampling and then testing) has asymptotic level as long as the test $\varphi^S$ has asymptotic level.
For simplicity, we assume that $n_{e_1}=\cdots=n_{e_L}\eqqcolon n$ and $m_{e_1}=\cdots=m_{e_L} \eqqcolon m$. The following result follows directly from \cite[Theorem~1]{thams2021statistical}%
\begin{proposition}\label{thm:SIR-consistency-new}
    Let $S\subseteq \{1, \ldots, d\}$ and suppose that for each environment $e_1, \ldots, e_L$, we observe a dataset $D^e$ consisting of $n$ observations $D_i^e = (X_i^e, A_i^e, R_i^e, \pi^0(A_i^e\mid X_i^e))$. 
    Consider $\pi^S \in \Pi^S$ and assume that for all $e \in \mathcal{E}$, $\EX^{\pi^0}[r(D^e_i)^2] < \infty$, where $r$ is defined in \eqref{eq:relative-weights-new}. 
    Let $m = o(\sqrt{n})$ 
    and for all $e$, let $D^{e,\pi^S} \coloneqq (D^e_{i_1}, \ldots, D^e_{i_m})$ be a resample of $D^e$ drawn with weights given by \eqref{eq:SIR-joint-weights-new}. 
    Let $\varphi^S$ be a hypothesis test 
    for invariance of the conditional expectation $\EX^{\pi^S,e}[R\mid X^S]$ that has pointwise asymptotic level $\alpha \in (0,1)$ when $\varphi^S$ is applied to data sampled with $\pi^S$. Applying $\varphi^S$ to the resampled data yields pointwise asymptotic level, that is,
    \begin{align*}
        \limsup_{n\to\infty}\P^{\pi^0}(\varphi^S(D^{e_1, \pi^S}, \ldots, D^{e_L, \pi^S}) = 1) \leq \alpha
    \end{align*}    
    if $S$ is invariant.
\end{proposition}
\begin{proof}
    We only show that this problem with environments can be cast in the setting of \cite{thams2021statistical}, which has no reference to environments. Here, we assume that we have the same number of observations in each environment. The main idea is to create a dataset $D^{\mathcal{E}}$, such that each observation in $D^{\mathcal{E}}$ consists of an observation from each of the environments $D^e$. 
    
    First, we randomly permute the observations within each dataset $D^e$ to obtain a set $\tilde{D}^e$. Then, we construct an auxiliary dataset $D^{\mathcal{E}}$, where the $i$'th observation $D_i^{\mathcal{E}}$ of $D^{\mathcal{E}}$ is the concatenation of the $i$'th observation (after permutation) from each of the environments, $D_i^{\mathcal{E}} \coloneqq (\tilde{D}_i^{e_1}, \ldots, \tilde{D}_i^{e_L})$. 
    
    We can now apply the resampling methodology from \cite{thams2021statistical} to draw a sequence $(D_{i_1}^\mathcal{E}, \ldots, D_{i_m}^\mathcal{E})$ with weights given by 
    \begin{align*}
        w^{\mathcal{E}}_{i_1, \ldots, i_{m}} \coloneqq \begin{cases}\frac{\prod_{\ell=1}^{m} r(D_{i_\ell}^{\mathcal{E}})}{\sum_{\substack{(j_1, \ldots, j_{m})\\ \text{ distinct}}} \prod_{\ell=1}^{m} r(D_{j_\ell}^{\mathcal{E}})} & {(i_1, \ldots, i_{m})} \text{ distinct} \\
        0 & \text{otherwise.}
        \end{cases}
    \end{align*}
    where
    \begin{align*}
        r(D_i^{\mathcal{E}}) \coloneqq \frac{\pi^S(\tilde{A}_i^{e_1} \mid \tilde{X}_i^{e_1})}{\pi^0(\tilde{A}_i^{e_1} \mid \tilde{X}_i^{e_1})} \cdots 
        \frac{\pi^S(\tilde{A}_i^{e_L} \mid \tilde{X}_i^{e_L})}{\pi^0(\tilde{A}_i^{e_L} \mid \tilde{X}_i^{e_L})},
    \end{align*}
    and 
    $\tilde{X}_i^{e}, \tilde{A}_i^{e}$ are the $i$'th observation of $\tilde{D}^e$.
    Because the observations are independent, both within and between environments, the probability of drawing the sample
    $(D_{i_1}^\mathcal{E}, \ldots, D_{i_m}^\mathcal{E}) = ((D_{i_1}^{e_1}, \ldots, D_{i_1}^{e_L}), \ldots,  (D_{i_m}^{e_1}, \ldots, D_{i_m}^{e_L}))$ is equal to the probability of drawing first $m$ observations from $e_1$, $(D_{i_1}^{e_1}, \ldots, D_{i_m}^{e_1})$, and then $m$ from $e_2$ etc. The result then follows directly from \cite{thams2021statistical}.
\end{proof}
In other words, we can test whether $S$ is invariant by resampling the data and 
applying an invariance test on the resampled dataset. Proposition~\ref{thm:SIR-consistency-new} states that this procedure holds level asymptotically. We assume knowledge of the initial policy $\pi^0$ to ease our presentation. We can, in fact, show the pointwise asymptotic validity even if the initial policy $\pi^0$ is unknown and has to be estimated from the offline data (see \cite{thams2021statistical} Theorem 2).

\section{Algorithm: Off-policy Invariant Causal Prediction}\label{app:alg:off_icp}
Below, we present an algorithm for finding the causal ancestors $\AN(R)$ of the reward $R$ under a change in policy.

\begin{algorithm}[!ht]
\caption{Off-policy Invariant Causal Prediction}
\label{alg:off_icp}
\SetAlgoLined
\KwIn{data $D = (D^{e_1}, \ldots, D^{e_L})$, hypothesis tests and test policies $\{(\psi^S, \pi^S)\}_{S \subseteq \{1,\dots,d\}}$
}
initialize the collection of invariant sets  $\mathbf{S}_{\inv} \leftarrow \{\}$\;
\tcp{loop over all subsets}
\For{$S \in \mathcal{P}(\{1,\dots,d\})$}{
    \tcp{test for invariance}
    $\var{is\_inv} \leftarrow \var{test\_inv}(D, \pi^S,\psi^S, S)$ \;
        \tcp{(see Algorithm~\ref{alg:test_inv})}
    \tcp{update the accepted invariant set}
    \If{\var{is\_inv}}{
        add $S$ to $\mathbf{S}_{\inv}$
    }
}
\tcp{get the estimated causal ancestors}
$\hat{S}_{\AN} \leftarrow \bigcap_i \mathbf{S}_{\inv}[i]$ \;
\KwOut{the estimated causal ancestors $\hat{S}_{\AN}$}
\end{algorithm}

\section{Faster power optimization}\label{sec:power-opt-REPL}
In Section~\ref{sec:optimizing-power}, we show that we can optimize the power to detect non-invarince by gradient descent. In particular, the gradient is
$$
\nabla J(\theta) = \EX\big[ \nabla \log \P(D^{\pi_\theta^S} \mid D) \var{pv}(D^{\pi_\theta^S})\big],
$$
where $D^{\pi_\theta^S}$ is a resample of the data $D$ and $\var{pv}$ is a function returning a p-value of our invariance test. $\P(D^{\pi_\theta^S} \mid D)$ is given by Equation~\eqref{eq:SIR-joint-weights-new}, but as discussed in \cite{thams2021statistical}, this may be infeasible to compute if $n$ is very large. 

As a computationally efficient alternative,
\cite{thams2021statistical} proposes 
an approximate
resampling scheme, where a sequence $(i_1, \ldots, i_{m_e})$ (distinct or non-distinct) is sampled with replacement. That is, the weights are given by
\begin{align*}
    w_{\theta,(i_1, \ldots, i_{m_e})} &\coloneqq \frac{\prod_{\ell=1}^{m_e} r_\theta(D_{i_\ell}^e)}{\sum_{(j_1, \ldots, j_m)}\prod_{\ell=1}^{m_e} r_\theta(D_{j_\ell}^e)} \\
    &= \frac{\prod_{\ell=1}^{m_e} r_\theta(D_{i_\ell}^e)}{\left(\sum_{j=1}^{n_e} r_\theta(D_{j}^e)\right)^{m_e}}.
\end{align*}
This expression is much easier to compute than Equation~\eqref{eq:SIR-joint-weights-new}, because the denominator is a sum over $n_e$ terms (instead of $n_e!/(n_e - m_e)!$). In particular, we get
\begin{align*}
    \nabla_\theta \log \P(D^{\pi_\theta^S} \mid D) 
    &= \nabla_\theta \log w_{\theta,(i_1, \ldots, i_{m_e})} \\
    &= \sum_{\ell=1}^{m_e}\nabla_\theta \log r_\theta(D_{i_\ell}^e) - m_e \nabla_\theta \log \sum_{j=1}^{n_e} r_\theta(D_{j}^e).
\end{align*}
Algorithm~\ref{alg:power_opt} splits the data in two halves: we optimize power on the first half of the data and test for invariance on the second half. 
We only use the above approximation for the power optimization, where we need to explicitly compute the normalization constant of the weights. 
    In the second half of Algorithm~\ref{alg:power_opt}, we use Equation~\eqref{eq:SIR-joint-weights-new} (i.e., 
we do not use the 
approximate weights), because
Proposition~\ref{thm:SIR-consistency-new} requires the weights to be those given in Equation~\eqref{eq:SIR-joint-weights-new}. 
If $n$ is so large that we cannot sample by explicitly computing the weights Equation~\eqref{eq:SIR-joint-weights-new}, there are several options for sampling from the scheme without computing the denominator -- see \cite{thams2021statistical} for a variety of approaches.

\section{Invariance test with optimized test policy}\label{app:algorithm-power-opt}
In this section we provide Algorithm~\ref{alg:power_opt}, which tests the invariance of a set by choosing a test policy $\pi^S$ that optimizes the power of the invariance test, as discussed in Section~\ref{sec:optimizing-power}. 
\begin{algorithm}[t!]
\caption{Testing the invariance of a set $S$ with optimization over test policies $\pi^S$}
\label{alg:power_opt}
\SetAlgoLined
\SetKwFunction{invcond}{$\var{test\_inv\_opt\_}\pi$}
\SetKwProg{Fn}{Function}{:}{}
\Fn{\invcond{data $D = (D^{e_1}, \ldots, D^{e_L})$, 
function $\var{pv}$ yielding the p-value of an invariance test, target set $S$, 
learning rate $\gamma$=1e-3, significance level $\alpha$}}{
\tcp{sample splitting}
 \For{$e = e_1, \ldots, e_L$}{
    $n_{e, sp} \leftarrow \operatorname{ceil}(\abs{D^e}/2)$ \;
    $D^{e,1} \leftarrow  \{(x_{i}^e, a_i^e, r_i^e, \pi^0(a_i^e|x_i^e))\}_{i=1}^{n_{e, sp}}$ \;
    $D^{e,2} \leftarrow  \{(x_{i}^e, a_i^e, r_i^e, \pi^0(a_i^e|x_i^e))\}_{i=n_{e,sp}+1}^{\abs{D^e}}$\;
 }
 \tcp{optimizing power}
 Initialize policy parameters $\theta$ \;
 \While{not converged}{
    \For{$e = e_1, \ldots, e_L$}{
        \For{$i=1$ \KwTo $n_{e,sp}$}{ 
        compute weights: $r^e_i \leftarrow \dfrac{\pi_{\theta}^S(a_i^e \mid x_i^{e,S})}{\pi^{0}(a_i^e\mid x_i^e)}$ \;
        }
        choose resampling size $m_e$ with GOF-heuristic in \cite{thams2021statistical} \;
        draw $D^{e,\pi_\theta^S} \coloneqq (D^{e,1}_{i_1}, \ldots, D^{e,1}_{i_{m_e}})$ with replacement from $D^{e,1}$ with probabilities $\propto r_i^e$ \;
    }
    $D^{1, \pi_\theta^S} \leftarrow (D^{e_1, \pi_\theta^S},\ldots, D^{e_L, \pi_\theta^S})$\;
    compute p-value: $\var{pv}(D^{1, \pi_\theta^S})$ \;
    compute gradient: $\nabla \log \P(D^{1, \pi_\theta^S})$ %
    update policy parameters: $\theta \leftarrow \theta - \gamma \var{pv}(D^{1, \pi_\theta^S}) \nabla \log \P(D^{1, \pi_\theta^S})$ \;
    }
 \tcp{verifying invariance condition}
 \For{$e = e_1, \ldots, e_L$}{
  \For{$i=n_{e,sp}+1$ \KwTo $\abs{D^e}$}{ 
    compute weights: $r_i^e \leftarrow \dfrac{\pi_{\theta}^S(a_i^e \mid x_i^{e,S})}{\pi^{0}(a_i^e \mid x_i^e)}$ \;
    }
    choose resampling size $m_e$ with GOF-heuristic in \cite{thams2021statistical} \;
    draw $D^{e, \pi_\theta^{S}} \coloneqq (D^{e, 2}_{i_1}, \ldots, D^{e,2}_{i_{m_e}})$ with replacement from $D^{e,2}$ with probabilities $\propto r_i^e$ \;
 }
 $D^{2, \pi_\theta^S} \leftarrow (D^{e_1, \pi_\theta^S},\ldots, D^{e_L, \pi_\theta^S})$\;
 $\var{is\_invariant} \leftarrow \var{pv}(D^{2, \pi_\theta^S}) \geq \alpha$ \;
 \KwRet\ $\var{is\_invariant}$
}
\end{algorithm}

\section{Simulation Details}
\label{app:sim}
\subsection{Data Generating Process}
\label{app:sim:datagen}
We generate data from the following SCM $\mathcal{S}(\pi, e)$:
\begin{equation*}
\begin{gathered}
    U \coloneqq \epsilon_U, \quad
    X^1 \coloneqq \gamma_e U + \epsilon_{X^1}, \quad
    X^2 \coloneqq \alpha_e + \epsilon_{X^2}, \\
    A \sim \pi(A \mid X^1, X^2), \quad
    R \coloneqq \beta_{A,1}X^2 + \beta_{A,2}U + \epsilon_R,
\end{gathered}
\end{equation*}
where $\epsilon_U, \epsilon_{X^2}, \epsilon_{X^1}, \epsilon_{R} \sim \mathcal{N}(0,1)$, $A$ takes values in the space $\{a_1, \ldots, a_L\}$. In our experiments, we consider $L=3$ and randomly draw the parameters $\beta_{a_1, 1}, \dots, \beta_{a_3, 1}, \beta_{a_1, 2}, \dots, \beta_{a_3, 2}$ from $\mathcal{N}(0,1)$, while the environment-specific parameters $\gamma_e, \alpha_e$ are drawn from $\mathcal{N}(0,4)$. These parameters are then fixed across all experiment runs.
\subsection{Initial Policy}
\label{app:sim:init_policy}
We construct an initial policy $\pi^0$ in Section~\ref{sec:simulation} as follows. First, we generate a training data $D \coloneqq \{(X^1_i, X^2_i, A_i, R_i, e_i)\}_{i=1}^n$ from the uniform random policy and partition the dataset $D$ according to the action values: $D_{a_1}$, \dots, $D_{a_L}$. Then, for each action $a \in \{a_1, a_2, a_3\}$, we fit a linear regression on $D_a$ to estimate the reward $R$ from $X^1$ and $X^2$. Denote the resulting regressor as $f_a$. The initial policy is then constructed as
\begin{equation*}
    \pi^0(A = a \mid X^1, X^2) \propto \exp{\frac{1}{2}f_a(X^1, X^2)}.
\end{equation*}

\subsection{Invariant Test with True Conditional Expectation}
\label{app:invariant_test_true_conditional}
This section contains Figure~\ref{fig:invariant_test_true_conditional}, in which we display acceptance rates for the same experiment as in Section~\ref{sec:experiment_learning_invariant_policies} but with an exact test, using the true conditional expectation. The figure suggests that the procedure indeed holds level.
\begin{figure}[!ht]
    \centering
    \includegraphics[scale=.45]{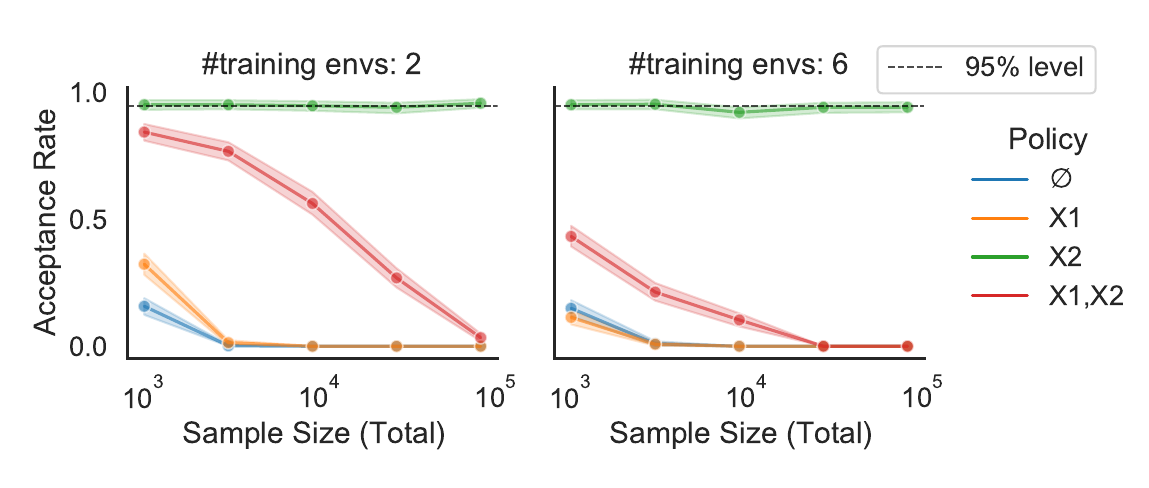}
    \caption{Acceptance rates for the off-policy invariance test with true conditional expectation. 
    }
    \label{fig:invariant_test_true_conditional}
\end{figure}

\section{Warfarin Case Study}
\label{app:warfarin}
\subsection{Initial Policy}
\label{app:warfarin:init_policy}
We generate the training data $\{(X_i, A_i, R_i, e_i)\}_{i=1}^{n}$, where $e_i \in \mathcal{E} = \{1,\dots,4\}$ under the following initial policy. We fit a linear regression to estimate the optimal warfarin dose from BMI score. Let us denote the resulting regressor by $f^{\text{BMI}}$. The initial policy $\pi^0$ then selects actions according to the following (unnormalized) distribution:
\begin{equation*}
    \pi^0(A = a \mid X^\text{BMI}) \propto \exp{\frac{1}{2}\abs{f^{\text{BMI}}(X^\text{BMI}) - m(a)}^{-1}},
\end{equation*}
where, as before, $m(a)$ denotes a median value of the optimal warfarin doses within the bucket $a$.
\subsection{Defining Sets}
\label{app:warfarin:defining_set}
The resulting defining set is \{Race, VKORC1\}. The following are the details of these variables (see also \cite{international2009estimation}):
 \begin{itemize}
     \item VKORC1: Genetic information -- vitamin K epoxide reductase complex, subunit 1.
     \item Race: Racial categories as defined by the U.S. Office of Management and Budget.
 \end{itemize}
 
\begin{figure}[!t]
    \centering
    \includegraphics[scale=.45]{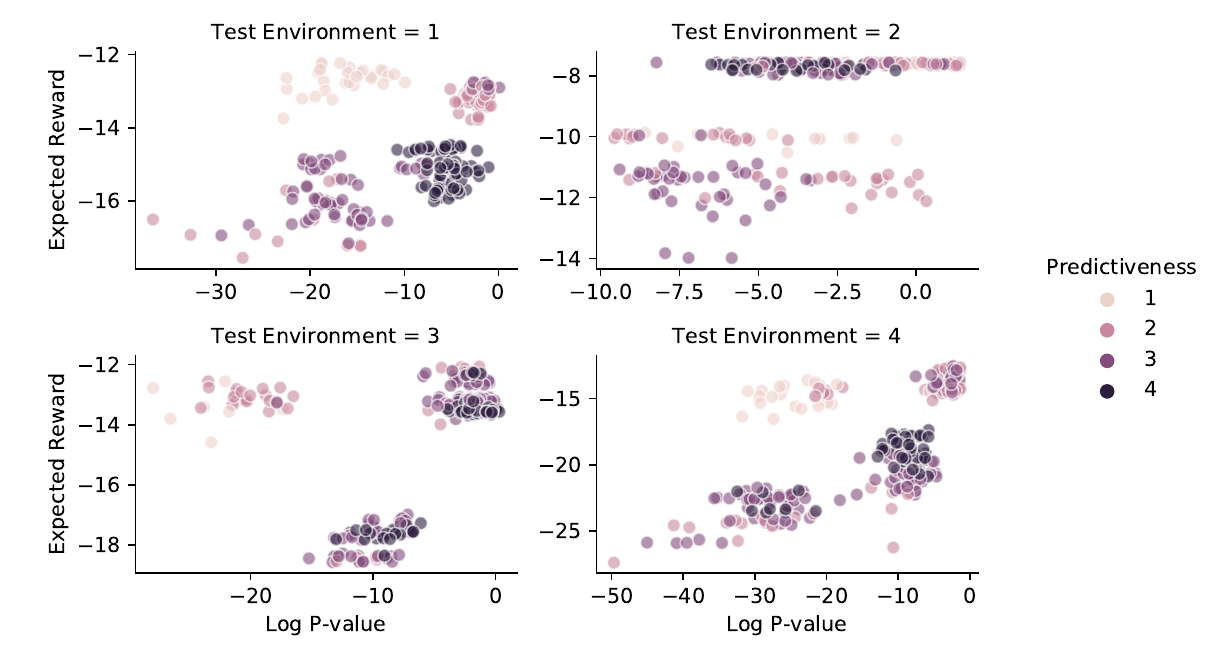}
    \caption{Analysis on the generalization performance and the degree of invariance. The y-axes represent the expected reward of policies with different subsets, while the x-axes represent their corresponding p-values return from the invariance test. The result shows that a policy that depends on a subset with a higher degree of invariance is more likely to generalize better to a new environment.}
    \label{fig:warfarin:p-value}
\end{figure}

\subsection{P-value and Generalization Analysis}
In the semi-real experiment (see Section~\ref{sec:warfarin:semi-real}), we further analyze the generalization performance of each candidate set and its corresponding p-value returned by the invariance test. To distinguish the effects of invariance and predictiveness on the generalization performance (measured by the expected reward on a test environment), we partition the subsets into four groups depending on their performance on the training environments (1 is the least predictive and 4 is the most predictive).

Within each predictiveness group, the scatter plots in Figure~\ref{fig:warfarin:p-value} display a correlation between the p-value returned by the invariance test and the expected reward under a test environment. This result indicates that a policy that depends on a subset with a higher degree of invariance (higher p-value) tends to generalizes better to a new environment. The correlation is strongest in the test environment $e = 4$ in which we could also observe the largest performance gap between invariant and non-invariant approaches, see Figure~\ref{fig:warfarin_hidden}. 

\section{}
\label{app:counter_example}
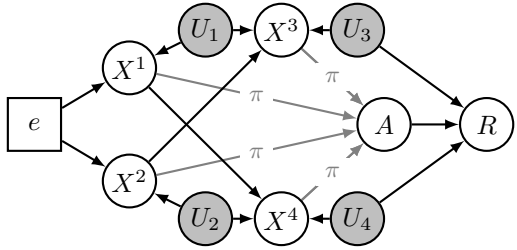
\begin{figure}[!ht]
    \centering
    \begin{tikzpicture}[node distance=1.5cm,
    thick, roundnode/.style={circle, draw, inner sep=1pt,minimum size=7mm}, squarenode/.style={rectangle, draw, inner sep=1pt, minimum size=7mm}]
    \node[roundnode] (X1) at (-1.5, 1.5) {$X^1$};
    \node[roundnode] (X2) at (-1.5, 0){$X^2$};
    \node[roundnode] (X3) at (.5, 2) {$X^3$};
    \node[roundnode][fill=black!25] (U1) at (-.5, 2) {$U_1$};
    \node[roundnode][fill=black!25] (U2) at (-.5, -.5) {$U_2$};
    \node[roundnode][fill=black!25] (U3) at (1.5, 2) {$U_3$};
    \node[roundnode][fill=black!25] (U4) at (1.5, -.5) {$U_4$};
    \node[squarenode] (E) at (-2.75, 0.75){$e$};
    \node[roundnode] (X4) at (0.5, -0.5) {$X^4$};
    \node[roundnode] (A) at (1.85, 0.75) {$A$};
    \node[roundnode] (R) at (3.2, 0.75) {$R$};
    \draw[-latex] (E) -- (X2);
    \draw[-latex] (E) -- (X1);
    \draw[-latex] (U1) -- (X1);
    \draw[-latex] (U1) -- (X3);
    \draw[-latex] (U2) -- (X2);
    \draw[-latex] (U2) -- (X4);
    \draw[-latex] (U3) -- (X3);
    \draw[-latex] (U3) -- (R);
    \draw[-latex] (U4) -- (X4);
    \draw[-latex] (U4) -- (R);
    \draw[-latex] (X1) -- (X4);
    \draw[-latex] (X2) -- (X3);
    \draw[-latex] (A) -- (R);
    \begin{pgfonlayer}{bg1}
    \path[-latex,draw,thick, gray] (X1) edge
        node[fill=white] {\small $\pi$} (A);
    \path[-latex,draw,thick, gray] (X2) edge
        node[fill=white] {\small $\pi$} (A);
    \path[-latex,draw,thick, gray] (X3) edge (A);
    \path[-latex,draw,thick, gray] (X4) edge (A);
    \path[-latex,draw,thick, gray] (X3) edge node[fill=white] {\small $\pi$}  (A);
    \path[-latex,draw,thick, gray] (X4) edge node[fill=white] {\small $\pi$} (A);
    \end{pgfonlayer}
\end{tikzpicture}
\caption{Example setting illustrating that Assumption~\ref{assm:U_causes_R} is required to derive the theoretical results in Proposition~\ref{prop:inv_policy_NEW_2} and Theorem~\ref{thm:inv_policy_2}.}
\label{fig:counter_example}
\end{figure}
We now discuss an example (presented in Figure~\ref{fig:counter_example}) that justifies Assumption~\ref{assm:U_causes_R}.
In this example, the variables $U_1$ and $U_2$ influence only the observed covariates $X$ but not the reward $R$. This example would lead to the following problems in Proposition~\ref{prop:inv_policy_NEW_2} and Theorem~\ref{thm:inv_policy_NEW}. 

First, the subsets $\{X^1, X^4\}$ and $\{X^2, X^3\}$ are both $d$-invariant, but no set of size $3$ or more is $d$-invariant. By symmetry, there is no guarantee that a $d$-invariant set that is optimal in the training environments will also be optimal in a new test environment because e.g. $\{X^1, X^4\}$ might be optimal on the training data while $\{X^2, X^3\}$ is optimal on the test data. This then refutes the statement in  Proposition~\ref{prop:inv_policy_NEW_2}. Assumption~\ref{assm:U_causes_R} fixes this problem as it ensures the existence of a largest $d$-invariant set which is a superset of all other $d$-invariant sets (see the proof of Lemma~\ref{lemma:stable_invariant}), and rules out this example. 

Second, there is no strongly non-$d$-invariant variable (see Definition~\ref{def:conf-removing}) in this example and hence Assumption~\ref{assm:strong_envs} does not guarantee the existence of a confounding removing environment. This implies that a set $\mathcal{E}$ of environments can be arbitrary, for instance, it could be a singleton $\mathcal{E}=\{e\}$. In that case,  Theorem~\ref{thm:inv_policy_2} would no longer hold (but Theorem~\ref{thm:inv_policy_1} remains valid). We therefore require Assumption~\ref{assm:U_causes_R} for proving the results of Proposition~\ref{prop:inv_policy_NEW_2} and the second statement of Theorem~\ref{thm:inv_policy_2}.
\end{appendices}

\end{document}